\documentclass[onefignum,onetabnum]{siamonline250211}
\usepackage{etoolbox}
\usepackage{amsmath}

\usepackage{graphicx}% Include figure files
\usepackage{dcolumn}% Align table columns on decimal point
\usepackage{bm}% bold math
\usepackage[utf8]{inputenc}
\usepackage[T1]{fontenc}
\usepackage{etoolbox}

\usepackage{amsfonts, bbold, dsfont, graphicx, cancel, array, mathtools, amssymb}
\usepackage{tikz, listofitems, xcolor, bigints, pgfplots}
\pgfplotsset{compat=1.18}
 
\usepackage{caption, subcaption}
\DeclareMathOperator*{\argmin}{argmin}
\usepackage{textcomp}
\usepackage{xcolor}
\usepackage{tabularx}

\usepackage{nicefrac}
\def\x{\bm x}
\def\f{\bm f}
\def\g{\bm g}
\def \delt{\Delta t}
\def \S{\bm S}
\def\y{\bm y}
\def\z{\bm z}
\def\w{\bm w}
\def\Z{\bm Z}
\def\I{\bm I}
\def\b{\bm b}
\def\ftilde{\Tilde{\bm {f}}}
\def\Stilde{\Tilde{\bm {S}}}
\def\xtilde{\Tilde{\bm {x}}}

\def\sgn{\text{sgn}}

\def\X{\bm X}

\def\R{\mathds{R}}
\def\rossler{R\"{o}ssler }
\def\lhalf{\ell_{\nicefrac{1}{2}}}

\definecolor{aquamarine}{rgb}{0.5, 1.0, 0.83}
\colorlet{myred}{red!80!black}
\colorlet{myblue}{blue!80!black}

\usetikzlibrary{automata, positioning}

\tikzstyle{data}=[thick, rectangle, draw=aquamarine, fill=aquamarine!10, minimum size=45,inner sep=0.5,outer sep=0.6]
\tikzstyle{neuron}=[thick, circle, draw=myblue!30, fill=myblue!8, minimum size=40, inner sep=0.5, outer sep=0.6]
\tikzstyle{layer}=[thick, rounded corners, draw=myred!30, fill=myred!10, minimum size=40, inner sep=0.5, outer sep=0.6]
\tikzstyle{connect}=[thick, gray]
\tikzstyle{connect arrow}=[->,thick,gray,shorten <=0.5,shorten >=1]
\tikzstyle{annot} = [text width=5em, text centered]
\tikzstyle{annot2} = [text width=7em, text centered]

\tikzstyle{NN}=[thick, rectangle, draw=magenta, fill=magenta!10, minimum size=60,inner sep=0.5,outer sep=0.6]
\tikzstyle{derivative}=[thick, circle, draw=myblue!30, fill=myblue!8, minimum size=40, inner sep=0.5, outer sep=0.6]
\tikzstyle{ODE}=[thick, rectangle, draw=gray, fill=gray!8, minimum size=30,inner sep=0.5,outer sep=0.6]
\tikzstyle{state}=[thick, circle, draw=aquamarine, fill=aquamarine!10, minimum size=40, inner sep=0.5, outer sep=0.6]
\tikzstyle{active arrow}=[->,thick,blue,shorten <=0.5,shorten >=1]
\tikzstyle{bi arrow}=[<->,thick,red,shorten <=0.5,shorten >=1]

\usepackage{lipsum}
\usepackage{amsfonts}
\usepackage{graphicx}
\usepackage{epstopdf}
\usepackage{algorithmic}
\ifpdf
  \DeclareGraphicsExtensions{.eps,.pdf,.png,.jpg}
\else
  \DeclareGraphicsExtensions{.eps}
\fi

\usepackage{enumitem}
\setlist[enumerate]{leftmargin=.5in}
\setlist[itemize]{leftmargin=.5in}

\newsiamremark{remark}{Remark}
\newsiamremark{hypothesis}{Hypothesis}
\crefname{hypothesis}{Hypothesis}{Hypotheses}
\newsiamthm{claim}{Claim}

\headers{Symbolic Neural ODEs}{N. Boddupalli and J. Moehlis}
\date{today}
\title{Symbolic Neural ODEs:\\Learning interpretable models from time-series data \thanks{Submitted to the editors June 12, 2026.
\funding{This work was supported by National Science Foundation Grant No. NSF-2016004.}}}

\author{Nibodh Boddupalli$^{\dagger}$\and Jeff Moehlis\thanks{Dept. of Mechanical Engineering, University of California, Santa Barbara, CA 93106
  (\email{nibodh@ucsb.edu}, \email{moehlis@ucsb.edu}).}
}

\usepackage{amsopn}

\makeatletter
\newcommand*{\addFileDependency}[1]{% argument=file name and extension
  \typeout{(#1)}% latexmk will find this if $recorder=0 (however, in that case, it will ignore #1 if it is a .aux or .pdf file etc and it exists! if it doesn't exist, it will appear in the list of dependents regardless)
  \@addtofilelist{#1}% if you want it to appear in \listfiles, not really necessary and latexmk doesn't use this
  \IfFileExists{#1}{}{\typeout{No file #1.}}% latexmk will find this message if #1 doesn't exist (yet)
}
\makeatother

\newcommand*{\myexternaldocument}[1]{%
    \externaldocument{#1}%
    \addFileDependency{#1.tex}%
    \addFileDependency{#1.aux}%
}
\ifpdf
\hypersetup{
  pdftitle={SIADS-Symbolic-Neural-ODEs},
  pdfauthor={N. Boddupalli and J. Moehlis}
}
\fi

\myexternaldocument{ex_supplement}

\begin{document}
\date{today}
\maketitle

% REQUIRED
\begin{abstract}
We present a machine learning framework for identifying sparse, interpretable models of dynamical systems directly from time-series data. Our approach parameterizes the underlying vector field using a neural architecture and trains it by minimizing a multi-step prediction loss over a finite horizon. To ensure numerical tractability, we optimize a mean absolute error objective averaged across prediction steps, and progressively increase the horizon during training. A key feature of this formulation is that it enforces consistency under repeated composition of the learned dynamics. As a result, the identified models exhibit significantly improved stability compared with approaches based on one-step regression of the vector field. When combined with sparsity-promoting regularization, this leads to parsimonious models that generalize beyond the training data. We demonstrate accurate recovery of systems exhibiting a wide range of behaviors, including stable and unstable fixed points, periodic orbits, and chaotic attractors. For chaotic systems, while long-term trajectory prediction is inherently limited by sensitivity to initial conditions, we show that multi-step training yields models with accurate short-term dynamics and strong agreement in long-time statistical properties, including mean, variance, and Lyapunov exponents. Moreover, we establish theoretical bounds linking trajectory error to statistical accuracy, providing a step toward a principled explanation for this behavior.
\end{abstract}

% REQUIRED
\begin{keywords}
  Nonlinear dynamics, Time-series, System Identification, Neural ODEs, Scientific Machine Learning (SciML), Interpretability, Multi-step prediction loss, Compositional consistency
\end{keywords}

% REQUIRED
\begin{AMS}
  93B30, 37M10, 68T07, 65L09, 34C28
\end{AMS}

\section{Introduction}

Mathematical models are central to describing time-evolving phenomena in science and engineering. From celestial mechanics to biological regulation, the ability to predict, control, and analyze dynamical systems depends critically on the accuracy of the underlying models. While many such models are derived from first principles, there is a growing need to infer governing equations directly from observed data, particularly in complex systems where mechanistic descriptions are incomplete or unavailable.

This challenge has led to a broad range of approaches for data-driven modeling, spanning system identification, time-series analysis, statistical inference, and machine learning. Classical techniques in system identification and control theory often rely on structured representations such as transfer functions or state-space models~\cite{tangirala2018principles}, which are well-suited for linear systems but become restrictive in nonlinear settings. Alternative approaches based on autoregressive models and delay embeddings~\cite{takens2006detecting, havok} provide flexible representations of temporal structure, while modal decomposition techniques such as Dynamic Mode Decomposition (DMD)~\cite{DMD, arba17} and Proper Orthogonal Decomposition (POD)~\cite{POD} offer low-dimensional descriptions of complex dynamics, particularly in post-transient regimes. These methods are closely related to the Koopman operator framework~\cite{mezic2005spectral}, but their finite-dimensional approximations are generally limited in their ability to capture non-hyperbolic attractors or chaotic dynamics~\cite{budi12}.

A complementary line of work focuses on directly learning the governing equations of motion. Sparse Identification of Nonlinear Dynamics (SINDy)~\cite{sindy} and related methods approximate the vector field as a linear combination of candidate functions, enabling interpretable models that can be analyzed using tools from dynamical systems theory~\cite{guckenheimer2013nonlinear}. These approaches typically rely on predefined dictionaries of basis functions and regression techniques, often with sparsity-promoting regularization such as $\ell_1$ penalties~\cite{l1_regularization}. While computationally efficient, such “fixed dictionary” methods face two key challenges: the combinatorial growth of candidate functions with system dimension (the curse of dimensionality), and the difficulty of selecting appropriate basis functions a priori.

More flexible alternatives include nonlinear regression and symbolic regression~\cite{bong07,quad16,quade2019glyph, ai_hilbert}, as well as machine learning methods based on neural networks. Neural networks (NNs) provide powerful function approximation capabilities~\cite{chen1995universal, goodfellow2016deep}, and have been widely applied to time-series forecasting and predictions. However, purely black-box approaches often lack interpretability and may fail to capture the structural properties of dynamical systems. Recent developments such as neural ordinary differential equations NODEs~\cite{neural_odes} and related architectures aim to bridge this gap by learning continuous-time dynamics, while extensions incorporating sparsity or symbolic structure seek to recover interpretable models~\cite{fronk23, raissi2018multistep}. More recently, transformer-based architectures have been introduced to map time-series directly to governing equations. However, these models exhibit poor generalizability unless pre-trained on massive in-distribution datasets using millions of parameters and enterprise GPUs. Consequently, they are often restricted to low-dimensional systems, such as strictly one-dimensional data~\cite{transformers_1D}. While some architectures accommodate multi-dimensional data~\cite{odeformer}, their capabilities are limited to analyzing stationary data, a single time-series, and non-chaotic dynamics.

Despite these advances, a fundamental challenge remains: most existing approaches learn dynamics by minimizing one-step prediction error, either by directly regressing the vector field or by matching short-term state transitions. While regression techniques via a weak formulation~\cite{weak_sindy, rosenfeld2022dynamic} mitigate challenges of estimating dynamics from noisy state data compared to estimating derivatives, local approximations do not guarantee that the learned model behaves correctly under repeated composition. In particular, small one-step errors can accumulate over time, leading to instability, spurious dynamics, or poor long-term predictions.

In this work, we address this issue by training models using a multi-step prediction loss over a finite horizon. Rather than fitting the vector field locally, we optimize the agreement between predicted and observed trajectories across multiple time steps. This enforces consistency under composition of the learned dynamics and implicitly penalizes models whose errors grow over time. To make this approach computationally tractable, we minimize a mean absolute error objective averaged across a prediction horizon and progressively increase the horizon during training. This perspective leads to several key advantages. First, multi-step training promotes models that are stable under iteration, avoiding pathological behaviors that can arise in one-step methods. Second, when combined with sparsity-promoting regularization, it yields compact, interpretable representations of the underlying dynamics. Third, it provides a natural framework for handling noisy data, as consistency across multiple steps acts as an implicit regularizer.

A particularly important setting is that of chaotic systems, where sensitivity to initial conditions makes long-term trajectory prediction inherently difficult. In such systems, even small modeling errors lead to exponential divergence of trajectories. Our framework clarifies this behavior by distinguishing between finite-horizon trajectory accuracy and long-term statistical fidelity. While multi-step training controls trajectory error over the training horizon, we show that it also leads to accurate estimation of statistical quantities such as means, variances, and Lyapunov exponents. We provide theoretical bounds linking trajectory error to statistical error, providing a step toward explaining why models can reproduce invariant properties of chaotic systems even when individual trajectories diverge.

We demonstrate the effectiveness of our approach on a range of benchmark systems exhibiting diverse dynamical behaviors, including fixed points, periodic orbits, and chaotic attractors. Across these examples, we observe that increasing the prediction horizon improves stability, robustness to noise, and agreement with both short-term dynamics and long-term statistics.

The paper is organized as follows: In \cref{sec:problem_formulation}, we outline our problem using the available data and objects of interest as a neural ODE (NODE). In \cref{sec:symbolic_nn}, we explain our neural network (NN) approach, formulate the numerical approximation of the continuous-time NODE, sparsity promoting regularization, information-theoretic post-training distillation, and the optimization methodology. In \cref{sec:Examples}, we demonstrate our approach by estimating dynamics of noisy time-series data from examples of dynamical systems spanning fixed points, periodic orbits, limit-cycles, and chaotic attractors. We summarize and discuss our findings, observations, and limitations in \cref{sec:conclusions}. For further reference, approximations and errors are available in \cref{sec:bounds} - \cref{appendix:trajectory_tracking}, bounds on statistics are shown \cref{sec:bounds_statistics}, and additional computation details are given in \cref{appendix:compute}.

\section{Problem formulation}
\label{sec:problem_formulation}

In this paper we consider time-series data from finite dimensional autonomous dynamical systems of the form
\begin{equation}\label{sys}
    \frac{d\x}{dt} = \f(\x),
\end{equation}
%\\
where $\x \in \mathds{M}$, $\f: \mathds{M} \to T(\mathds{M})$, and the $i^{th}$ state of $\x$ is denoted as $x_i$. We consider a few examples where $\mathds{M} \subseteq \R\times S^1$ but since most of our examples are $\mathds{M} = \R^n$, we simplify notation as $\theta_i \equiv x_i \bmod 2 \pi$.
Since we consider state data available at discrete time-intervals $\delt$, we consider the stroboscopic map
\begin{equation}\label{flow}
    \S(\x_0) \equiv \x(\delt) = \x_0 + \int_0^{\delt} \f(\x) dt,
\end{equation}
where $\x_0$ is an initial state. For brevity of notation, we denote the data points $\x(\delt), \x(2 \delt), \\ \cdots , \x(h \delt)$ using compositions of the stroboscopic map as
\begin{equation}\label{data_point}
    \x[h] \equiv \S^h(\x_0) \equiv \underbrace{\S \circ \S \circ \cdots \circ \S}_{h \text{ compositions}}(\x_0) = \x(h\delt),
\end{equation}
where $h \in \mathds{N}$. This data may be sampled from multiple trajectories or from multiple fragments of the same orbit. Suppose we have
\begin{equation}\label{m_total}
    m^* = m + p\times H
\end{equation}
total data points from $p$ non-consecutive trajectories and we set a maximum forecasting horizon of $H$ time-steps of numerical integration for the NODE. Then, we have $m$ data points denoted $\X_0$ for which the state data up to $H$ time-steps ahead is available to us. We denote this dataset as
\begin{equation}\label{X0}
        \X_0 \equiv \{\x[0] \mid \text{we know }\x[h] ~ \forall ~h = 1,2, \cdots, H\}.
\end{equation}
Therefore, even when we have data sampled sequentially from only one trajectory, this can be viewed as $m$ Initial Value Problems (IVPs): one for each data point in $\X_0$. We can then denote the subsets of the available data as
\begin{equation}\label{Xh}
        \X_h \equiv \{\x[h] = \S^h(\x[0]) \mid \x[0] \in \X_0\} \text{ for } h = 1,2, \cdots, H.
\end{equation}
Collectively denoting the above datasets as $\X \equiv \bigcup_{h=0}^H \X_h$ which has the full $m^*$ data points, we can outline the goal of this work as that of estimating $\f$ using $\X$.

Neural ordinary differential equations NODEs model the dynamics of a system by representing the governing vector field $\f(\x)$ with a parameterized function $\tilde{\f}(\x)$, and generating trajectories via numerical integration. In our setting, given an initial condition $\x[0] \in \X_0$, the learned dynamics produce a sequence $\{\tilde{\x}[h]\}_{h=1}^H$ by integrating
\begin{equation}
\frac{d\x}{dt} = \tilde{\f}(\x),
\end{equation}
using a numerical scheme such as a fourth-order Runge--Kutta method.

Due to the discrete sampling of the data, it is natural to view the dynamics through the stroboscopic map $\S(\x)$. The true system satisfies
\[
\x[h] = \S^h(\x[0]),
\]
while the learned model induces an approximate map $\tilde{\S}$ through numerical integration of $\tilde{\f}$, yielding
\[
\tilde{\x}[h] = \tilde{\S}^h(\x[0]).
\]
Thus, forecasting over multiple time steps corresponds to repeated composition of the learned map.

A fundamental requirement for any forecasting method is that it remains stable under repeated composition. While a model may achieve small error over a single time step, even small discrepancies in the learned dynamics can lead to large errors after repeated application.
%\[
%\tilde{\x}[h+1] = \tilde{\S}(\tilde{\x}[h]).
%\]
In nonlinear systems, these errors may grow rapidly, and in chaotic systems, even infinitesimal differences can lead to divergence of trajectories.  This issue is particularly important for NODEs, since the model is evaluated recursively through numerical integration. For example, using a forward Euler scheme,
\begin{equation}
\tilde{\x}[1] = \x[0] + \Delta t \, \tilde{\f}(\x[0]),
\end{equation}
while
\begin{equation}
\tilde{\x}[2] = \tilde{\x}[1] + \Delta t \, \tilde{\f}(\tilde{\x}[1]),
\end{equation}
so that the error at each step propagates into subsequent evaluations.

To account for this effect, we define the NODE horizon - the number of time-steps $H$ over which the NODE is optimized -- and construct a loss that measures the discrepancy between true and predicted trajectories over multiple steps. This ensures that the learned model $\tilde{\f}$ is trained not only for local accuracy, but also for its behavior under repeated composition.  As discussed in \cref{sec:horizon}, increasing the NODE horizon makes the optimization problem more sensitive to error growth over time, thereby favoring models with stable long-term behavior.  In this work, we adopt a SymANNTEx-style approach \cite{symanntex_paper} to represent $\tilde{\f}(\x)$ as a sparse, interpretable NN. This is particularly well-suited to the NODE setting, since the learned vector field must remain well-behaved under repeated evaluation during numerical integration. We abbreviate this \underline{Sym}ANNTEx-based \underline{N}eural \underline{ODE} as SymNODE. Enforcing sparsity and structure in $\tilde{\f}$ helps mitigate overfitting and improves stability under composition.

The NODE horizon $H$ plays a central role in the NODE formulation. It determines the extent to which the learned model is trained to remain accurate under repeated composition, and thus controls the balance between short-term accuracy and long-term stability. In practice, $H$ must be chosen to be large enough to capture meaningful dynamical behavior, while avoiding excessive sensitivity to numerical and chaotic effects.

\section{Symbolic Neural Network}\label{sec:symbolic_nn}

In this paper, we build upon the symbolic neural network (SymNN) architecture formulated in SymANNTEx~\cite{symanntex_paper} to recursively perform numerical integration and optimize the network weights to fit the state data over a finite time horizon.
As in~\cite{symanntex_paper}, the NN has two principal tiers of organization:  ``stacks'' and ``operational layers''. Stacks define the higher-level organizational structure, with the output of each stack being an input to subsequent stacks. The number of stacks is $K \ge 1$. Each stack is composed of $L \ge 1$ operational layers, which use primitive operations and functions to generate terms that are used in subsequent stacks and the final expression.  The SymNN architecture allows compositions and combinations of the primitive operations and functions to be generated -- thus ``generating'' a dictionary. % giving models of the form
% \begin{equation}\label{generative_dict}
% \ftilde(\x) = F_K \circ \cdots \circ F_2 \circ F_1(\x),
% \end{equation}
% where $F_k(z)$ is the (possibly $k$-dependent) subset of all possible functions of $z$ generated by the primitive operations and functions.  Here the composition $\circ$ is interpreted in an element-wise manner; for example, $S_2 \circ S_1$ can contain functions $f_2(f_1(x))$ for any $f_1 \in S_1$ and $f_2 \in S_2$.  We note that each $S_k$ includes the Identity operator $\mathcal{I}$.
This will be elaborated in \cref{NN_eqn}.  

\begin{figure}[h]
    \centering
    \resizebox{0.5\columnwidth}{!}{%
\begin{tikzpicture}

\node[data] (input) {$\z^{(k-1)}$};

\node[neuron, xshift=75, yshift=60, at=(input)] (ops_in) {$\w_{in} \cdot \z$};

\node[neuron, xshift=135, yshift=150, at=(input)] (exp) {$\exp{(\w_{in} \cdot \z)}$};
\node[neuron, xshift=135, yshift=90, at=(input)] (sin) {$\sin{(\w_{in} \cdot \z)}$};
\node[neuron, xshift=135, yshift=30, at=(input)] (sgn) {sgn${(\w_{in} \cdot \z)}$};

\node[neuron, xshift=195, yshift=60, at=(input)] (ops_out) {$\w_{out} \cdot \bm g(y_a)$};

\node[neuron, xshift=195, yshift=0, at=(input)] (linear) {$\w_{lin} \cdot \z$};
\node[neuron, xshift=195, yshift=-60, at=(input)] (power) {$\prod_i \left|z_i\right|^{w_i}$};
\node[neuron, xshift=195, yshift=-120, at=(input)] (product) {$\prod_i v_i$};

\node[data, xshift=290, yshift=0, at=(input)] (output) {$\z^{(k,l)}$};

\draw[connect] (input.east) -- (ops_in);

\draw[connect] (ops_in) -- (sin);
\draw[connect] (ops_in) -- (exp);
\draw[connect] (ops_in) -- (sgn);

\draw[connect] (sin) -- (ops_out);
\draw[connect] (exp) -- (ops_out);
\draw[connect] (sgn) -- (ops_out);

\draw[connect] (input.east) -- (product);
\draw[connect] (input.east) -- (power);
\draw[connect] (input.east) -- (linear);

\coordinate (output_merge) at ([xshift=-12]output.west);

\draw[connect arrow] (ops_out)  .. controls (8, 1.5) and (8.1,0) .. (output_merge) -- (output.west);
\draw[connect arrow] (linear) -- (output.west);
\draw[connect arrow] (power) .. controls (8, -1.5) and (8.1,0) .. (output_merge) -- (output.west);
\draw[connect arrow] (product) .. controls (8, -3.3) and (8.1,0) .. (output_merge) -- (output.west);

\draw[rounded corners, myred, very thick, dashed] (1.25, -5.5) rectangle (9, 6.5) {};

\node[annot, myred, left = 1.25, align=center, at=(exp)] {$l^{\text{th}}$ operational layer in $k^{\text{th}}$ stack};
\node[annot, above = 1, align= center, at=(input)] {Input from all $L$ layers of $(k-1)^{\text{th}}$ stack};
\node[annot, xshift = -7.5, yshift = -35, align = center, at = (input)] {$\R^{4L(k-1)+n+1}$};
\node[annot, above = 1, align= center, at=(output)] {Output of $l^{\text{th}}$ layer in $k^{\text{th}}$ stack};
\node[annot, xshift = 0, yshift = -35, align = center, at = (output)] {$\R^4$};

\end{tikzpicture}
        }
        \caption{Architecture of the $l^{\text{th}}$ operational layer in the $k^{\text{th}}$ stack.  This is a graphical representation of the primitive operations and functions defined by \cref{layer_form_1}-\cref{layer_form_4} in the main text.
        }
    \label{Layer_architecture}
\end{figure}
%We can now write down the output of each operational layer as a function. 
We begin by describing one of the operational layers in detail. As illustrated in ~\cref{Layer_architecture}, each operational layer $l$ in stack $k$ takes as input a vector $\bm z \in \R^{4L(k-1)+n+1}$, consisting of the state $\x \in \R^n$, an additional constant value of 2 to aid with scaling, and the outputs from the previous stacks which altogether are of dimensionality $4L(k-1)$.  The operational layer outputs a vector in $\R^4$. %While we showed \cref{linear_eq}-\cref{ops_eq} for each operational layer in the first stack as functions of the input for simplicity, they can be generalized to every operational layer as
The primitive functions that the operational layer generates are: 
\begin{subequations}
    \begin{align}
    f^{k,l}_1(\bm z) &= \bm w_{lin}^{k, l} \cdot \bm z,\label{layer_form_1}\\
    f^{k,l}_2(\bm z) &= \prod_{i = 1}^d | z_i |^{w_{pow, i}^{k,l}},\\
    f^{k,l}_3(\bm z) &= \prod_{i=1}^d [\sigma(w_{prod}^{k,l}) z_i + (1 - \sigma(w_{prod}^{k,l}))],\\
    f^{k,l}_4(\bm z) &= \bm w_{out}^{k,l} \cdot \begin{bmatrix}\exp(\bm w_{in}^{k,l} \cdot \bm z) \\ \sin(\bm w_{in}^{k,l} \cdot \bm z) \\ \text{sgn}(\bm w_{in}^{k,l} \cdot \bm z) \end{bmatrix}, \label{layer_form_4}
\end{align}
\end{subequations}
where $d = 4L(k-1)+n+1$, and the function $\sigma\left(w\right) = \frac{1}{1 + e^{-w}}$ is a sigmoidal function.  The various network weights $w$ will be learned from the data, which gives the interpretation that the primitive functions such as polynomials and $\sin(w x)$ can be viewed as activation functions, notably with global support.  These four functions \cref{layer_form_1}-\cref{layer_form_4} can be denoted as a vector $\bm f^{k,l}: \R^{4L(k-1)+n+1} \mapsto \R^4$ of functions:
\begin{equation}\label{layer_form}
    \bm f^{k,l}(\bm z) \equiv \begin{bmatrix}
        f^{k,1}_1 \\ f^{k,1}_2 \\ f^{k,1}_3 \\f^{k,1}_4
    \end{bmatrix}(\bm z).
\end{equation}
% \begin{equation}\label{layer_form}
%     f^{k,l} (\bm z) = \begin{bmatrix}
%         \bm w_{lin}^{k, l} \cdot \bm z \\ \prod_{i=1}^d \sigma[(w_{prod}^{k,l}) z_i + (1 - \sigma(w_{prod}^{k,l}))] \\ \prod_{i = 1}^d | z_i |^{w_{pow, i}^{k,l}} \\ \bm w_{out}^{k,l} \cdot \begin{bmatrix}
%             \exp(\bm w_{in}^{k,l} \cdot \bm z) \\ \sin(\bm w_{in}^{k,l} \cdot \bm z) \\ \text{sgn}(\bm w_{in}^{k,l} \cdot \bm z)
%         \end{bmatrix}
%     \end{bmatrix},
% \end{equation}

The operational layers for $k=1, \cdots K$ and $l = 1, \cdots L$ are put together into a NN as shown in \cref{model_architecture}.  This generates a NN model $\bm \ftilde(\x)$, which can be written as
\begin{equation}\label{NN_eqn}
    \bm \ftilde (\x) = \bm W_{out} \begin{bmatrix}
        \bm f^{K,1} \\ \bm f^{K,2} \\ \vdots \\ \bm f^{K, L} \\ \mathcal{I}
    \end{bmatrix} \circ \cdots \circ \begin{bmatrix}
        \bm f^{2,1} \\ \bm f^{2,2} \\ \vdots \\ \bm f^{2, L} \\ \mathcal{I}
    \end{bmatrix} \circ \begin{bmatrix}
        \bm f^{1,1} \\ \bm f^{1,2} \\ \vdots \\ \bm f^{1, L} \\ \mathcal{I}
    \end{bmatrix} \left( \begin{bmatrix}
        \bm x \\ 2
    \end{bmatrix} \right).
\end{equation}
Here, we can see the generative nature of our SymNN introduced in \cite{symanntex_paper}, where each operational layer $l$ in stack $k$, represented as $\bm f^{k,l}$ in \cref{layer_form}, is made up of primitives from \cref{layer_form_1}-\cref{layer_form_4}.

\begin{figure}[h]
    \centering
    \resizebox{\textwidth}{!}{%
            \begin{tikzpicture}

\node[state, xshift=-70] (input) {$\x$};

\node[data] (k_0) {$\begin{bmatrix} \bm x \\ 2 \end{bmatrix}$};

\draw[connect arrow] (input) -- (k_0);

\node[annot, above = 1, align = center, at = (input)] {Input\\$\R^n$};

\node[layer, xshift=80, yshift=-50, at=(k_0)] (k_1_l_1) {$l = L$};
% \node[layer, xshift=80, yshift=-50, at=(k_0)] (k_1_l_2) {$l = 2$};
% \node[layer, xshift=80, yshift=50, at=(k_0)] (k_1_l_L_1) {$l = L-1$};
\node[layer, xshift=80, yshift=50, at=(k_0)] (k_1_l_L) {$l = 1$};

\path (k_1_l_1) -- (k_1_l_L) node [myred, font=\Huge, midway, sloped] {$\dots$};

\draw[connect] (k_0.east) -- (k_1_l_1.west);
% \draw[connect] (k_0.east) -- (k_1_l_2.west);
% \draw[connect] (k_0.east) -- (k_1_l_L_1.west);
\draw[connect] (k_0.east) -- (k_1_l_L.west);

\node[data, xshift=160, yshift=0, at=(k_0)] (k_1) {$\z^{(k=1)}$};

\draw[rounded corners, myblue, very thick, dashed] (1.55, -3) rectangle (4, 3.5) {};

\coordinate (k_1_merge) at ([xshift=-5]k_1.west);
\coordinate (k_0_under) at (0, -3.5);

\coordinate (perp_point) at (k_0_under -| k_1_merge);
\draw[connect arrow] (k_0.south) -- (k_0_under) -- (perp_point) -- (k_1_merge);

\draw[connect arrow] (k_1_l_1.east)  .. controls (3.9, -1.5) and (4.1,0) .. (k_1_merge) -- (k_1.west);
% \draw[connect arrow] (k_1_l_2.east) .. controls (3.9, -1.5) and (4.1,0) .. (k_1_merge) -- (k_1.west);
% \draw[connect arrow] (k_1_l_L_1.east) .. controls (3.9, 1.5) and (4.1,0) .. (k_1_merge) -- (k_1.west);
\draw[connect arrow] (k_1_l_L.east) .. controls (3.9, 1.5) and (4.1,0) .. (k_1_merge) -- (k_1.west);

\node[annot, above = 1, align= center, at=(k_0)] {$\R^{n+1}$};

\node[annot2, myblue, above = 1, align= center, at=(k_1_l_L)] {Stack $k = 1$};

\node[annot, above = 1, align = center, at = (k_1)] {$\R^{4L+n+1}$};

%stack 2
\node[layer, xshift=80, yshift=-50, at=(k_1)] (k_2_l_1) {$l = L$};

\node[layer, xshift=80, yshift=50, at=(k_1)] (k_2_l_L) {$l = 1$};

\path (k_2_l_1) -- (k_2_l_L) node [myred, font=\Huge, midway, sloped] {$\dots$};

\draw[connect] (k_1.east) -- (k_2_l_1.west);

\draw[connect] (k_1.east) -- (k_2_l_L.west);

\node[annot2, myblue, above = 1, align= center, at=(k_2_l_L)] {Stack $k = 2$};

\draw[rounded corners, myblue, very thick, dashed, xshift= 160] (1.55, -3) rectangle (4, 3.5) {};

\coordinate (k_1_merge) at ([xshift=-5]k_1.west);

\node[data, draw = black!0, fill=black!0, xshift=320, yshift=0, at=(k_0)] (k_dots) {\Huge \textcolor{myblue}{$\cdots$}};

\coordinate (k_dots_merge) at ([xshift=-5]k_dots.west);

\coordinate (perp_point_dots) at (k_0_under -| k_dots_merge);
\draw[connect arrow] (k_1.south) -- ([xshift=160]k_0_under) -- (perp_point_dots) -- (k_dots_merge);

\draw[connect arrow] (k_2_l_1.east)  .. controls ([xshift=160]3.9, -1.5) and ([xshift=160]4.1,0) .. (k_dots_merge) -- (k_dots.west);

\draw[connect] (k_2_l_L.east) .. controls ([xshift=160]3.9,1.5) and ([xshift=160]4.1,0) .. (k_dots_merge) -- (k_dots.west);

%%%% last layer
\node[layer, xshift=80, yshift=-50, at=(k_dots)] (k_K_l_1) {$l = L$};

\node[layer, xshift=80, yshift=50, at=(k_dots)] (k_K_l_L) {$l = 1$};

\path (k_K_l_1) -- (k_K_l_L) node [myred, font=\Huge, midway, sloped] {$\dots$};

\draw[connect] (k_dots.east) -- (k_K_l_1.west);

\draw[connect] (k_dots.east) -- (k_K_l_L.west);

\node[annot2, myblue, above = 1, align= center, at=(k_K_l_L)] {Stack $k = K$};

\draw[rounded corners, myblue, very thick, dashed, xshift= 320] (1.55, -3) rectangle (4, 3.5) {};

\coordinate (k_dots_merge) at ([xshift=-5]k_dots.west);

\node[data, xshift=170, yshift=0, at=(k_dots)] (k_K) {$\z^{(k=K)}$};

\coordinate (k_K_merge) at ([xshift=-15]k_K.west);

\coordinate (perp_point_K) at (k_0_under -| k_K_merge);
\draw[connect arrow] (k_dots.east) -- ([xshift=343]k_0_under) -- (perp_point_K) -- (k_K_merge);

\draw[connect arrow] (k_K_l_1.east)  .. controls ([xshift=320]3.9, -1.5) and ([xshift=320]4.1,0) .. (k_K_merge) -- (k_K.west);

\draw[connect arrow] (k_K_l_L.east) .. controls ([xshift=320]3.9, 1.5) and ([xshift=320]4.1,0) .. (k_K_merge) -- (k_K.west);

\node[annot, above = 1, align = center, at = (k_K)] {$\R^{4LK+n+1}$};

\node[neuron, xshift=70, at=(k_K)] (dense_out) {$\ftilde(\x)$};

\draw[connect arrow] (k_K) -- (dense_out);

\node[annot, above = 1, align = center, at = (dense_out)] {Output\\$\R^n$};

\node[annot2, magenta, above = 2, align= center, at=(k_2_l_L)] {SymANNTEx};

\draw[rounded corners, magenta, ultra thick, dashed, xshift= 160] (-6.8, -4) rectangle (12.75, 3.75) {};
% \node[data, yshift=-70, at=(dense_out)] (output) {$\dot{\bm x}_\sigma$};
% \draw[connect arrow] (output) -- (dense_out);
% \node[annot, left = 0.7, align= center, at=(output)] {Training data $\R^n$};

\end{tikzpicture}
        }
        \caption{Architecture of our SymNN, showing $K$ stacks with $L$ layers within each stack.}
    \label{model_architecture}
\end{figure}

\subsection{Neural ODE approach}

We previously applied our SymNN to data from vector fields and discrete-maps~\cite{symanntex_paper, iyer2024expressive, wccm2024}, and we extend it to state data from continuous-time systems in this paper. To extend the above as a neural ODE formulation, we numerically integrate the SymNN -- thus SymNODE. We use a simple fixed timestep Runge-Kutta numerical scheme, as was done in \cite{fronk23}. This is a fourth-order explicit scheme, i.e., the numerically integrated solution can be explicitly written as a function of the state. Starting with state data $\X_0$, the estimated state $\xtilde$ at the next time step can be explicitly written as
\begin{equation}\label{RK4}
    \begin{split}
        \xtilde[h+1] &= \xtilde[h] + \frac{1}{6}(\bm k_1 + 2 \bm k_2 + 2 \bm k_3 + \bm k_4) \equiv \Stilde(\xtilde[h]),\\
        \text{where}&\\
        \bm k_1 = \delt \ftilde(\xtilde[h]), \ \bm k_2 &= \delt \ftilde(\xtilde[h] + \bm k_1/2), \ \bm k_3 = \delt \ftilde(\xtilde[h] + \bm k_2/2), \ \bm k_4 = \delt \ftilde(\xtilde[h] + \bm k_3).\\
        \text{with }&\text{initial condition} \ \xtilde[0] = \x[0].
    \end{split}
\end{equation}
Here, $\x[0]$ refers to each point in the dataset $\X_0$ which has subsequent time-series data available as training data. 
% Then the MAE after one time-step with respect to the true data $\X_1$ is
% \begin{equation*}\label{error_1}
%     \text{MAE}_1 = \sum_{\x[1]\in\X_1}  \|\x[1] - \xtilde[1]\|_1,
% \end{equation*}
By using the above, we numerically estimate the state after $h$ time-steps, denoted as $\xtilde[h]$, from $\x[0]$ as  shown schematically in \cref{neural_ode}. The Mean Absolute Error (MAE), after $h$ time-steps, between the training data $\x[h]$ and estimated $\xtilde[h]$ computed from the $m$ data points $\x[0] \in \X_0$ is
% , numerical integration in \cref{RK4} is performed \textit{recursively} to give $\xtilde[h]$ and the MAE is
\begin{equation}\label{MAE}
    \text{MAE}_h = \frac{1}{m}\sum_{\substack{\x[0] \in \X_0}}  \|\Stilde^h(\x[0]) - \S^h(\x[0])\|_1.
\end{equation}
The error $\text{MAE}_H$ at the end of the forecasting horizon $H$ could be minimized, but that can be achieved even by models that don't track the trajectory until $H$.  Thus, we seek to minimize the average MAE over the entire forecasting horizon.%, which we call the trajectory MAE:

\begin{figure}[h]
    \centering
    \resizebox{0.875\columnwidth}{!}{%
            \begin{tikzpicture}

\node[state, xshift=-90] (input) {$\x[0]$};

\node[annot, below = 0.75, align = center, at = (input)]{$\forall \x[0] \in \X_0$};

\node[NN] (NN) {$~$ SymANNTEx $~$};

\draw[connect arrow] (input) -- (NN);

% \node[derivative, xshift=90, at=(NN)] (f_out) {$\ftilde(\x)$};

% \draw[active arrow] (NN) -- (f_out);

\node[ODE, xshift=90, at=(NN)] (solver) {RK4};

\draw[active arrow] (NN) -- (solver);

\node[derivative, xshift=70, at=(solver)] (output) {$\tilde \x[h]$};

\draw[active arrow](solver) -- (output);

\coordinate (NN_under) at (0, -2);
\coordinate (output_under) at ([yshift=-36.5]output.south);
\draw[active arrow] (output.south) -- (output_under) -- (NN_under) -- (NN.south);

\node[annot, blue, above = 0.75, align = center, at = (output)]{$h = 1,2, \cdots, H$};

\node[state, xshift = 70, at = (output)] (ref_data) {$\x[h]$};

\node[annot, above = 0.05, xshift = -36, align = center, at = (ref_data)]{\textcolor{red}{Loss}};

\node[annot2, below = 0.75, align = center, at = (ref_data)]{$\forall \x[h] \in \X_h$};

\draw[bi arrow](output.east) -- (ref_data);

\end{tikzpicture}
        }
        \caption{Schematic of the prediction of the future states $\x[h]$ (far right teal circle) from the initial states $\x[0]$ (far left teal circle) using the SymNN from \cref{model_architecture}. The blue arrows denote the forward computation of the predicted states $\xtilde[h]$ \textit{up to} $H$ times (purple circle). The closed loop denotes a recurrence relationship forward in time and the gradients of the loss (red arrow) are propagated backwards in time.} 
    \label{neural_ode}
\end{figure}

\subsection{Regularization}
%In this work, we use what has been called the smooth-$\lhalf$ regularization.
 Regularization is often used in machine learning to facilitate generalization of the learned model. This is the case in conventional NNs with localized activation functions where the network of weights can only be tailored to the training dataset, which can lead to overfitting. In contrast for our algorithm, which uses activation functions with global support, the purpose of regularization is to promote sparsity of weights. The most widely-used sparsity-promoting regularization in the literature is $\ell_1$ regularization:
\begin{equation}\label{l_1}
    \ell_1 = \sum_i |w_i|.
\end{equation}
Although we found this useful for smaller NNs, say, $L=3$ layers wide, such as those used for multi-step predictions, we found that this falls short of desired sparsity in larger NNs such as $L=10$ layers. In our previous work~\cite{symanntex_paper}, we used the $\lhalf$ regularization~\cite{l_half_regularization}, which provided much sparser results than $\ell_1$ regularization.  However, the gradients of the $\lhalf$ regularizer for small values can be very large. This is alleviated by smoothly increasing the value of regularization for weights close to zero, as is done in smooth-$\lhalf$ regularization~\cite{l_half_smooth_regularization}:

\begin{equation}\label{l_half}
\lhalf \equiv \sum_i 
\begin{cases}
\left|w_i\right|^{\nicefrac{1}{2}} \quad \text{ if } \left|w_i\right| \geq \epsilon\\
\left| -\frac{1}{8\epsilon^3}w_i^4 + \frac{3}{4\epsilon}w_i^2 + \frac{3\epsilon}{8}\right|^{\nicefrac{1}{2}} \quad \text{ if } \left|w_i\right| < \epsilon,
\end{cases}
\end{equation}
where $\epsilon = 0.01$ is the threshold for smoothing.  In this work, we will use smooth-$\lhalf$ regularization wherever the $\ell_1$ regularizer does not give desired sparsity.

\subsection{Optimization}
The loss function to be minimized is the average MAE~\cref{MAE} over the NODE horizon of $H$ time-steps, along with the sparsity promoting regularizer~\cref{l_half}:
\begin{equation} \label{loss}
    % \text{Loss } = \text{TMAE} + \alpha \lhalf,
    \text{Loss}_H = \frac{1}{H}\sum_{h=1}^H \text{MAE}_h + \alpha \lhalf
\end{equation}
where $\alpha$ is a hyperparameter that weights the relative importance of the regularizer.  
% In this work, we use $\alpha = \{0.01, 0.02, 0.04\}$.  
Note that we use mean absolute error (MAE) rather than mean squared error (MSE) in the definition of the loss. In a multi-step setting, small one-step errors can accumulate under repeated composition, so it is important to control even small deviations. Unlike MSE, which downweights small errors due to its quadratic scaling, MAE maintains sensitivity to these errors and therefore provides better control over long-horizon predictions. In addition, MAE is more robust to noise and outliers, and we have found empirically that it leads to more accurate and stable model identification.

%We use Adam with a learning rate scheduler. 

We use the discretize-then-optimize approach rather than the optimize-then-discretize approach \cite{discretize-optimize}, also known as the adjoint method found in \cite{neural_odes}. This is mainly done because for the variety of time-scales that we deal with, spanning fast-slow systems and particularly the small time-steps in chaotic systems, the adjoint method is known to be sensitive. And given the small memory requirements of our NN, there is little to be gained by using the adjoint method \cite{discretize-optimize}, \cite{colby_2}. We use the Adam optimizer~\cite{kingma2015adam} to minimize the loss function \cref{loss}. Its adaptive, parameter-wise step-size is well suited for effectively navigating non-convex loss landscapes encountered in deep learning. We found that the learning rate hyperparameter has the most significant impact on our use cases. However, we could not find a specific learning rate that works well for \textit{all} examples throughout the full training so we use a learning rate scheduler to set up varying learning rates, without feedback. In this work, we use a triangular learning rate scheduler. Some miscellaneous details on implementation are given in \cref{appendix:compute}.

\subsection{Effect of Optimization Horizon on Learned Dynamics} \label{sec:horizon}

We study the effect of the NODE horizon $H$ on the learned dynamics $\tilde f$ obtained by minimizing the loss \cref{loss}, which depends on the Mean Absolute Errors \cref{MAE}.  For each $H$, let
\begin{equation}
\tilde \f_H = \arg\min_{\tilde \f} \mathrm{Loss}_H(\tilde \f),
\label{fH}
\end{equation}
and denote by $\tilde S_H$ the corresponding learned stroboscopic map.

\begin{proposition}\label{optimality_horizon}
\textbf{(Optimality under extended horizon)}
Let $\tilde \f_H$ and $\tilde \f_{H+1}$ be minimizers of $\mathrm{Loss}_H$ and $\mathrm{Loss}_{H+1}$, respectively. Then
\begin{equation}
\mathrm{Loss}_{H+1}(\tilde \f_{H+1})
\le
\mathrm{Loss}_{H+1}(\tilde \f_H).
\end{equation}
\end{proposition}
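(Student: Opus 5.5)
The inequality follows directly from the definition \cref{fH} of $\tilde \f_{H+1}$ as a minimizer of $\mathrm{Loss}_{H+1}$. I would first fix the admissible class over which the $\arg\min$ in \cref{fH} is taken: all vector fields $\tilde \f$ representable by the SymNN \cref{NN_eqn} with a fixed architecture ($K$ stacks, $L$ layers), i.e.\ all weight configurations. The key observation is that this class does not depend on $H$; only the objective does. Since $\mathrm{Loss}_H$ and $\mathrm{Loss}_{H+1}$ are minimized over the same class, $\tilde \f_H$ is an admissible candidate for the problem defining $\tilde \f_{H+1}$.

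Next I would check that $\mathrm{Loss}_{H+1}(\tilde \f_H)$ is well defined. By \cref{loss} and \cref{MAE}, this requires the data $\X_{H+1}$, which, following \cref{m_total}--\cref{Xh}, I take to be available whenever the horizon $H+1$ is considered. It also requires the iterates $\Stilde_H^{h}(\x[0])$ for $h \le H+1$. These are explicit finite compositions of the RK4 map \cref{RK4} and are therefore finite whenever $\tilde \f_H$ is finite along the iterates. If some iterate blew up, the loss would be $+\infty$ and the inequality would hold trivially.

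With these two points in place, the minimizing property $\mathrm{Loss}_{H+1}(\tilde \f_{H+1}) \le \mathrm{Loss}_{H+1}(\tilde \f)$ for every admissible $\tilde \f$, applied with $\tilde \f = \tilde \f_H$, gives the claim. It is worth stating explicitly that the regularization term $\alpha \lhalf$ is identical in both losses, so it does not change the argument.

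The only real subtlety, and what I expect to be the main obstacle, is interpretive rather than technical. The statement needs $\tilde \f_{H+1}$ to be a \emph{global} minimizer over the same hypothesis class as $\tilde \f_H$. If "minimizer" meant a local minimum or an Adam output, the inequality could fail. I would therefore state that assumption explicitly, noting that \cref{fH} defines $\tilde \f_H$ via $\arg\min$, which is the global sense. I would also remark that existence of minimizers is assumed in the statement, so no compactness argument is needed.
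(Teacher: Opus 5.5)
Your proposal is correct and matches the paper's proof: both evaluate the minimizing property of $\tilde \f_{H+1}$ for $\mathrm{Loss}_{H+1}$ at the admissible candidate $\tilde \f_H$. Your extra remarks make explicit three assumptions the paper leaves implicit: that both problems use the same hypothesis class, that $\mathrm{Loss}_{H+1}(\tilde \f_H)$ is well defined, and that the $\arg\min$ in \cref{fH} is a global minimizer.
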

\begin{proof}
For any admissible $\tilde \f$,
\begin{equation}
\mathrm{Loss}_{H+1}(\tilde \f)
=
\frac{1}{H+1}\sum_{h=1}^{H+1} \mathrm{MAE}_h(\tilde \f)
+ \alpha \lhalf(\tilde \f).
\end{equation}
Since $\tilde f_{H+1}$ minimizes $\mathrm{Loss}_{H+1}$, the result follows by evaluating at $\tilde \f = \tilde \f_H$.
\end{proof}

\noindent
Note that it is {\it not} generally true that 
$\mathrm{Loss}_{H+1}(\tilde \f_{H+1}) \le \mathrm{Loss}_{H}(\tilde \f_H)$.  This is because $\mathrm{Loss}_H$ and $\mathrm{Loss}_{H+1}$ are different objective functions that involve a different number of terms and different normalization factors.

\medskip

\cref{optimality_horizon} shows that optimizing over a longer NODE horizon yields a model that is optimal when evaluated over that same horizon. We now provide additional insight into why increasing the horizon improves the quality of the learned dynamics.  Recall from \cref{loss} that the loss is defined as an average over prediction steps.
As a result, the loss reflects the \emph{typical} prediction error per time step. However, this averaging does not eliminate sensitivity to error growth over time. In particular, if the trajectory error increases with $h$, then the later terms in the sum contribute increasingly large values, leading to a higher overall loss.  Consequently, increasing the NODE horizon $H$ makes the optimization problem more sensitive to the long-term behavior of the model. Models whose errors grow over time (for example, due to unstable dynamics) incur a larger loss when evaluated over longer horizons, while models with bounded errors remain competitive.
Therefore, in addition to incorporating more data, increasing the horizon implicitly favors models that exhibit stable long-term behavior. This provides an additional explanation for why $\tilde \f_{H+1}$ often yields improved performance compared to $\tilde \f_H$, beyond the comparison given by \cref{optimality_horizon}.

\begin{proposition}\label{accumulating_error}
\textbf{(Accumulating prediction errors increase the multi-step loss)}
Let $\mathrm{MAE}_h$ and $\mathrm{Loss}_H$ be defined as in
\cref{MAE} and \cref{loss}. Suppose that
\[
\mathrm{MAE}_{H+1}
>
\frac1H \sum_{h=1}^H \mathrm{MAE}_h.
\]
Then
\[
\mathrm{Loss}_{H+1}
>
\mathrm{Loss}_H.
\]
In particular, if the sequence $\{\mathrm{MAE}_h\}$ is increasing,
then extending the optimization horizon increases the loss.
\end{proposition}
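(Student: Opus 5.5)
We will prove this by direct algebra on the averaged sums, evaluating both losses at the same fixed model $\tilde\f$. This is implicit in the statement: the MAE values and the regularizer $\alpha\lhalf(\tilde\f)$ refer to one model. Because the regularization term is the same in $\mathrm{Loss}_H$ and $\mathrm{Loss}_{H+1}$, it cancels in the difference. So it suffices to compare the averaged MAE terms. Write $A_H = \frac1H\sum_{h=1}^H \mathrm{MAE}_h$. Then
\begin{equation*}
\mathrm{Loss}_{H+1}-\mathrm{Loss}_H = A_{H+1}-A_H .
\end{equation*}

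The key step is the standard running-mean update identity:
\begin{equation*}
A_{H+1} = \frac{1}{H+1}\Bigl(H A_H + \mathrm{MAE}_{H+1}\Bigr),
\qquad\text{hence}\qquad
A_{H+1}-A_H = \frac{\mathrm{MAE}_{H+1}-A_H}{H+1}.
\end{equation*}
Since $H+1>0$, the sign of the difference equals the sign of $\mathrm{MAE}_{H+1}-A_H$. This is strictly positive by hypothesis, which gives $\mathrm{Loss}_{H+1}>\mathrm{Loss}_H$.

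For the ``in particular'' clause, we read ``increasing'' as strictly increasing. Then $\mathrm{MAE}_h<\mathrm{MAE}_{H+1}$ for every $h\le H$. Averaging these $H$ strict inequalities gives $A_H<\mathrm{MAE}_{H+1}$, so the hypothesis holds and the first part applies. If ``increasing'' only means nondecreasing, the same argument yields only $\mathrm{Loss}_{H+1}\ge \mathrm{Loss}_H$, and we will note this in the proof. We could strengthen it to a strict inequality by assuming $\mathrm{MAE}_{H+1}>\mathrm{MAE}_1$. Applying the argument for each $H$ then shows that the sequence $\{\mathrm{Loss}_H\}$ is monotone in $H$.

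There is no real obstacle. The only point needing care is the interpretive one: both losses must be evaluated at the same model $\tilde\f$, so that the $\alpha\lhalf$ term cancels. This is unlike \cref{optimality_horizon}, which compares different minimizers.
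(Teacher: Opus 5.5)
Your proposal is correct and follows essentially the same argument as the paper: fix the model so the $\alpha\lhalf$ term cancels, then use the running-mean identity $A_{H+1}-A_H = (\mathrm{MAE}_{H+1}-A_H)/(H+1)$. Your distinction between strictly increasing and nondecreasing sequences in the ``in particular'' clause is a small refinement that the paper's proof passes over.
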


\begin{proof}
For a fixed model $\ftilde$, the regularization term
$\alpha \ell_{1/2}(\ftilde)$ is independent of $H$.
Thus,
\[
\mathrm{Loss}_H
=
\frac1H\sum_{h=1}^H \mathrm{MAE}_h
+
\alpha \ell_{1/2}(\ftilde).
\]
Therefore,
\[
\mathrm{Loss}_{H+1}-\mathrm{Loss}_H
=
\frac1{H+1}\sum_{h=1}^{H+1}\mathrm{MAE}_h
-
\frac1H\sum_{h=1}^H\mathrm{MAE}_h.
\]
Rearranging gives
\[
\mathrm{Loss}_{H+1}-\mathrm{Loss}_H
=
\frac{
\mathrm{MAE}_{H+1}
-
\frac1H\sum_{h=1}^H \mathrm{MAE}_h
}{H+1}.
\]
Hence,
\[
\mathrm{Loss}_{H+1}>\mathrm{Loss}_H
\iff
\mathrm{MAE}_{H+1}
>
\frac1H\sum_{h=1}^H \mathrm{MAE}_h.
\]
If $\{\mathrm{MAE}_h\}$ is increasing, then the newest term exceeds
the average of the previous terms, proving the result.
\end{proof}

\begin{remark}
\cref{accumulating_error} shows that when multi-step prediction errors grow with the forecasting horizon, the corresponding multi-step training loss also increases with the horizon.  Consequently, during optimization, models whose errors compound under repeated composition become increasingly unfavorable as the prediction horizon grows.
As discussed heuristically in
\cref{appendix:trajectory_tracking}, repeated composition of an
imperfect learned flow map can amplify local modeling errors over
time, particularly in unstable or chaotic systems where nearby
trajectories separate rapidly. Minimizing $\mathrm{Loss}_H$ over
longer horizons therefore biases training toward models whose
prediction errors remain controlled under iteration, rather than
models that exhibit rapidly accumulating trajectory error.
\end{remark}

\begin{remark}
By \cref{l2_norm_eqv} below, control of the averaged
trajectory MAE also provides control of the corresponding averaged trajectory MSE over the training horizon. Consequently,
models whose multi-step prediction errors compound over time incur
increasing penalties as the horizon $H$ is extended. This provides
additional motivation for minimizing trajectory errors over multiple
time steps rather than only matching the vector field locally.
\end{remark}

As an illustrative example, consider a linear system $\f(\x) = U\x$ where $U$ has purely imaginary eigenvalues. Then the true dynamics are neutrally stable. If we consider perturbed models of the form $(U \pm \epsilon I)\x$, the model with positive $\epsilon$ exhibits exponential growth in trajectories, while the model with negative $\epsilon$ exhibits decay. Over longer horizons, the growing trajectories lead to rapidly increasing prediction errors, and hence a larger averaged loss. In contrast, the decaying model produces bounded errors and a smaller loss. This illustrates how longer horizons penalize unstable models more strongly.

\cref{optimality_horizon} showed that increasing the optimization horizon improves performance when evaluated over that same horizon. We now complement this result by showing that, for a fixed horizon $H$, minimizing $\mathrm{Loss}_H$ also provides control over the trajectory error of the learned model.  While the loss is defined using $\ell^1$ (mean absolute) errors, it is often more natural to measure trajectory accuracy using the $\ell^2$ norm. The following result shows that minimizing $\mathrm{Loss}_H$ also controls the averaged $\ell^2$ trajectory error.

\begin{proposition}\label{l2_norm_eqv}
\textbf{(Control of $\ell^2$ trajectory error via norm equivalence)}
Define $\tilde \f_H$ as in \cref{fH}, and let $\tilde \S_H$ be the corresponding learned stroboscopic map. Assume that the true and predicted trajectories remain uniformly bounded over the training horizon, i.e., there exists constants $M > 0$ and $\tilde{M} > 0$ such that
\begin{equation}
\|\S^h(\x[0])\|_2 \le M,
\qquad
\|\tilde \S_H^h(\x[0])\|_2 \le \tilde{M}
\end{equation}
for all $\x[0] \in \X_0$ and $h = 1,\dots,H$. Such bounds are natural when trajectories remain in a compact region of state space over the training horizon. Then there exists a constant $C = (M + \tilde{M}) > 0$, such that
\begin{equation}
\frac{1}{mH}
\sum_{h=1}^H
\sum_{\x[0] \in \X_0}
\|\tilde \S_H^h(\x[0]) - \S^h(\x[0])\|_2^2
\;\le\;
C \cdot \mathrm{Loss}_H(\tilde \f_H).
\end{equation}
\end{proposition}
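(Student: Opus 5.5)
The plan is to bound each squared $\ell^2$ error pointwise by a constant times the corresponding $\ell^1$ error, then average over data points and steps. Afterwards we absorb the nonnegative regularization term into the loss.

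Fix $\x[0]\in\X_0$ and $h\in\{1,\dots,H\}$, and write $\bm e = \tilde \S_H^h(\x[0]) - \S^h(\x[0])$.

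\emph{Step 1: magnitude bound.} By the triangle inequality and the assumed uniform bounds,
\[
\|\bm e\|_2 \le \|\tilde \S_H^h(\x[0])\|_2 + \|\S^h(\x[0])\|_2 \le M+\tilde M = C .
\]

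\emph{Step 2: pointwise reduction.} Write $\|\bm e\|_2^2 = \|\bm e\|_2\,\|\bm e\|_2 \le C\,\|\bm e\|_2$. Then use the norm comparison $\|\bm e\|_2 \le \|\bm e\|_1$, which holds in $\R^n$ with constant $1$ and does not depend on the dimension. This gives
\[
\|\bm e\|_2^2 \le C\,\|\bm e\|_1 .
\]
This choice of ordering, bounding one factor by the magnitude and the other by the $\ell^1$ norm, is why the constant is exactly $M+\tilde M$. We avoid the reverse comparison $\|\cdot\|_1\le\sqrt n\|\cdot\|_2$, which would introduce a spurious dimension factor.

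\emph{Step 3: averaging.} Sum over $\x[0]\in\X_0$ and divide by $m$. By \cref{MAE}, this yields
\[
\frac1m\sum_{\x[0]\in\X_0}\|\bm e\|_2^2 \le C\,\mathrm{MAE}_h(\tilde\f_H).
\]
Then average over $h=1,\dots,H$ to get
\[
\frac{1}{mH}\sum_{h=1}^H\sum_{\x[0]\in\X_0}\|\tilde \S_H^h(\x[0]) - \S^h(\x[0])\|_2^2 \le \frac{C}{H}\sum_{h=1}^H \mathrm{MAE}_h(\tilde \f_H).
\]

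\emph{Step 4: absorbing the regularizer.} By \cref{loss}, the right-hand side equals $C\bigl(\mathrm{Loss}_H(\tilde\f_H) - \alpha\,\lhalf(\tilde\f_H)\bigr)$. Since $\alpha\ge 0$ and $\lhalf\ge 0$ (each summand in \cref{l_half} is a square root, hence nonnegative), this is at most $C\cdot\mathrm{Loss}_H(\tilde\f_H)$, which completes the argument.

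None of these steps is a serious obstacle. The only points needing care are the ones identified above: the ordering of the norm comparison in Step 2, and the sign of $\alpha\,\lhalf$ in Step 4. Minimality of $\tilde\f_H$ is never used; the bound holds for any model satisfying the boundedness hypothesis.
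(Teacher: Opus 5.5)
Your proof is correct and follows the paper's argument essentially step for step. You use the same pointwise bound $\|\bm v\|_2^2 \le \|\bm v\|_2\|\bm v\|_1 \le (M+\tilde M)\|\bm v\|_1$, obtained from $\|\cdot\|_2\le\|\cdot\|_1$ and the triangle inequality, then average over samples and steps and drop the nonnegative regularization term; your remark that minimality of $\tilde{\f}_H$ is never used holds for the paper's proof as well.
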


\begin{proof}
\textit{Relating the loss to MAE terms.}
\begin{equation}
\mathrm{Loss}_H(\tilde \f_H)
=
\frac{1}{H}\sum_{h=1}^H \mathrm{MAE}_h(\tilde \f_H)
+ \alpha \lhalf(\tilde \f_H)
\;\ge\;
\frac{1}{H}\sum_{h=1}^H \mathrm{MAE}_h(\tilde \f_H).
\end{equation}

\medskip
\noindent
\textit{Expanding MAE.}

\begin{equation}
\mathrm{MAE}_h(\tilde \f_H)
=
\frac{1}{m}
\sum_{\x[0] \in \X_0}
\|\tilde \S_H^h(\x[0]) - \S^h(\x[0])\|_1.
\end{equation}
Thus,
\begin{equation}
\frac{1}{H}\sum_{h=1}^H \mathrm{MAE}_h(\tilde \f_H)
=
\frac{1}{mH}
\sum_{h=1}^H
\sum_{\x[0] \in \X_0}
\|\tilde \S_H^h(\x[0]) - \S^h(\x[0])\|_1.
\end{equation}

\medskip
\noindent
\textit{Relating $\ell^2$ and $\ell^1$ errors.}

For any ${\bf v} \in \mathbb{R}^n$, from norm equivalence we have
\begin{equation}
\|{\bf v}\|_2 \le \|{\bf v}\|_1.
\end{equation}
Therefore,
\begin{equation}
\|{\bf v}\|_2^2 \le \|{\bf v}\|_2 \cdot \|{\bf v}\|_1.
\end{equation}
Let
\[
{\bf v}
=
\tilde \S_H^h(\x[0]) - \S^h(\x[0]).
\]
By the triangle inequality and the uniform boundedness assumption,
\begin{equation}
\|{\bf v}\|_2
\le
\|\tilde \S_H^h(\x[0])\|_2
+
\|\S^h(\x[0])\|_2
\le
M + \tilde{M}.
\end{equation}
Hence,
\begin{equation}
\|{\bf v}\|_2^2 \le (M+\tilde{M}) \|{\bf v}\|_1.
\end{equation}
Applying this pointwise to the trajectory errors and summing over all samples,
\begin{equation}
\sum_{h=1}^H
\sum_{\x[0] \in \X_0}
\|\tilde \S_H^h(\x[0]) - \S^h(\x[0])\|_2^2
\;\le\;
(M + \tilde{M})
\sum_{h=1}^H
\sum_{\x[0] \in \X_0}
\|\tilde \S_H^h(\x[0]) - \S^h(\x[0])\|_1.
\end{equation}

\medskip
\noindent
\textit{To conclude,}

Dividing both sides by $mH$, we obtain
\begin{equation}
\frac{1}{mH}
\sum_{h=1}^H
\sum_{\x[0] \in \X_0}
\|\tilde \S_H^h(\x[0]) - \S^h(\x[0])\|_2^2
\;\le\;
(M + \tilde{M})
\frac{1}{H}\sum_{h=1}^H \mathrm{MAE}_h(\tilde \f_H).
\end{equation}
Using Step 1, this implies
\begin{equation}
\frac{1}{mH}
\sum_{h=1}^H
\sum_{\x[0] \in \X_0}
\|\tilde \S_H^h(\x[0]) - \S^h(\x[0])\|_2^2
\;\le\;
C \cdot \mathrm{Loss}_H(\tilde \f_H),
\end{equation}
where $C = (M + \tilde{M})$.
\end{proof}

% {\color{blue}
\begin{remark}
The proof of \cref{l2_norm_eqv} uses the uniform trajectory bounds
$M$ and $\tilde M$ to obtain the estimate
\[
\|{\bf v}\|_2^2 \le (M+\tilde M)\|{\bf v}\|_1,
\]
where ${\bf v} = \tilde \S_H^h(\x[0])-\S^h(\x[0])$.
Alternatively, if one has access to the maximum trajectory error
\[
M_\infty = \max_{\x[0]\in\X_0} \max_{1\le h\le H} \| \tilde \S_H^h(\x[0]) - \S^h(\x[0]) \|_\infty,
\]
then Hölder's inequality gives
\[
\|{\bf v}\|_2^2 \le \|{\bf v}\|_\infty \|{\bf v}\|_1 \le M_\infty \|{\bf v}\|_1.
\]
Repeating the proof yields the alternative bound
\[
\frac{1}{mH} \sum_{h=1}^H \sum_{\x[0]\in\X_0} \| \tilde \S_H^h(\x[0]) - \S^h(\x[0]) \|_2^2 \le M_\infty \,\mathrm{Loss}_H(\tilde \f_H).
\]
Since $M_\infty$ measures the largest trajectory error over the training
horizon, this bound may be sharper than that obtained from
$M+\tilde M$ when the learned and true trajectories remain close.
\end{remark}
% }

\noindent
\cref{optimality_horizon} establishes that increasing the optimization horizon yields a model that is optimal for longer-term prediction. \cref{accumulating_error} further shows that extending the NODE horizon increasingly penalizes models whose trajectory errors compound over time, thereby favoring models that remain stable under repeated composition. Meanwhile, \cref{l2_norm_eqv} shows that, for any fixed horizon $H$, minimizing $\mathrm{Loss}_H$ ensures that the learned dynamics accurately reproduce trajectories over that horizon, as measured by the average squared $\ell^2$ error. In other words, minimizing $\mathrm{Loss}_H$ controls the average multi-step trajectory error over the training horizon.
Together, these results provide a theoretical justification for multi-step training: increasing the NODE horizon discourages unstable error accumulation and promotes models that remain accurate over longer time intervals, while minimizing the corresponding loss guarantees accurate trajectory reconstruction over the training data.

Although the intention of our NODE framework is to give accurate analytic models, often the resulting models will not be exact, in particular when the data is noisy. \cref{sec:bounds} shows that for systems with stable fixed points and stable periodic orbits, in the asymptotic limit the solutions of approximately accurate models will stay within bounds of the stable solutions for the exact models. Moreover, in the chaotic examples below, we will find that (i) \cref{l2_norm_eqv} explains short-term trajectory agreement, (ii) sensitivity to initial conditions explains eventual divergence, and (iii) the results from \cref{sec:bounds_statistics}, which show that pointwise trajectory error provides short-time control over statistical properties of the solutions, a step toward explaining why long-term statistics remain accurate despite trajectory mismatch.

\subsection{Modified Akaike Information Criterion}

The theoretical results above characterize properties of models that minimize the training loss $\mathrm{Loss}_H$. Post-training, we distill models by considering an additional trade-off between goodness-of-fit and model complexity. To do this, we modify the Akaike Information Criterion (AIC) \cite{akaike1974new} to consider numerical integration of the estimated model, over the horizon $H$ -- thus a modified AIC (mAIC):
\begin{equation}\label{aic}
    \text{mAIC} =  \left[ 2k + m\log_e\Big(\frac{1}{H}\sum_{h=1}^H \text{MSE}_h\Big) \right] + 2\frac{(k+1)(k+2)}{m - k - 2},
\end{equation}
where
\begin{equation}\label{MSE}
    \text{MSE}_h = \frac{1}{m}\sum_{\substack{\x[0] \in \X_0}}  \|\Stilde^h(\x[0]) - \S^h(\x[0])\|_2^2.
\end{equation}
This is not the same as the standard AIC~\cite{akaike1974new} because our metric is different. Post-training, we truncate parameters of the trained model $\ftilde$ to each tolerance in $[0.001, 0.01, 0.1, 1]$ using Sympy's \verb|nsimplify| and distill sparse models $\ftilde_s$. We then select the distilled model that minimizes mAIC which is averaged over a horizon that could in practice be taken to be \textit{longer} in post-training than the horizon used in training:
\begin{equation}
    \ftilde_A = \argmin \text{mAIC}(\ftilde_s)
\end{equation}
While the mAIC is not identical to the training loss, it is constructed from the averaged mean squared error over the NODE horizon, and therefore preserves the key dependence on long-term prediction accuracy. As a result, the qualitative conclusions regarding the effect of increasing the horizon -- namely, the preference for models with stable long-term behavior -- continue to hold in practice, although the theoretical guarantees no longer apply exactly.

\section{Examples}\label{sec:Examples}

\subsection{Noisy data}\label{noise}

To test our algorithm on datasets with noise, we corrupt the state data from simulated deterministic dynamical systems with random disturbances. Adding an independent disturbance drawn from a normal distribution could impact each of our examples and their states differently based on their dynamic range (e.g. Chua double scroll in~\cref{example:chua} vs Lorenz equations in~\cref{example:lorenz} or $x$ vs $y$ in Van der Pol oscillator in~\cref{example:van_der_pol}). For consistency, we define our disturbance using a state-wise Noise-to-Signal Ratio (NSR):
\begin{equation}\label{noise_relative}
    x_i + x^{rms}_i \eta_i,
\end{equation}
which scales the noise added to the $i^{th}$ state with an independently drawn $\eta_i \in \mathcal{N}(0, \sigma^2)$ for each component $x_i$ of each data point $\x \in \X$. Here, $x_i^{rms}$ is the Root Mean Square (RMS) of $x_i$ in the available data:
\begin{equation}\label{rms}
x_i^{rms} = \sqrt{\frac{1}{m} \sum_{\x \in \X} x_i^2},
\end{equation}
which is the $2$-norm of the $i^{th}$ state averaged over the $m$ data points. In the following examples we consider $\sigma \in \{0, 0.01, 0.05, 0.1, 0.15, 0.2, 0.25, 0.3\}$, which when multiplied by 100 are denoted as ``$\%$ NSR''. 

For comparison, the more common alternative is adding an independently drawn disturbance $\eta^{abs}_i \in \mathcal{N}(0, \sigma_{abs}^2)$ is
 \begin{equation}\label{noise_absolute}
    x_i + \eta^{abs}_i.
\end{equation}

\subsection{Takens Bogdanov normal form}

We begin with the Takens--Bogdanov normal form \cite{guckenheimer2013nonlinear}, given by
\begin{subequations}
\begin{align}
    \dot{x} &= y, \label{tb_x}\\
    \dot{y} &= \mu_1 + \mu_2 \, y + x^2 + x \, y, \label{tb_y}
\end{align}
\end{subequations}
with parameters $\mu_1 = -4.41$ and $\mu_2 = 1.5$. For these values, the system exhibits a stable spiral sink at $(x,y) = (-\sqrt{-\mu_1},0) = (-2.1,0)$ and a saddle point at $(x,y) = (\sqrt{-\mu_1},0) = (2.1,0)$.  The parameters were chosen so that, to leading order, there is a homoclinic orbit connecting the saddle point to itself~\cite{guckenheimer2013nonlinear}, giving a homoclinic bifurcation:  for $\mu_1 \gtrsim -4.41$ there is an unstable periodic orbit, and for $\mu_1 \lesssim -4.41$ there are no periodic orbits.  Some trajectories will asymptotically approach the stable fixed point, while those outside its basin of attraction will escape to infinity. 
%As a result, the data is inherently non-stationary and sensitive to perturbations.
This example provides a useful test case for model identification from time-series data for systems that are sensitive to both state and parametric perturbations. 

We generate a dataset consisting of $m^* = 600$ data points sampled from $p = 4$ trajectories with randomly chosen initial conditions over the time interval $T = [0,4]$, and corrupt the data with $1\%$ NSR as described in \cref{noise}. The NN is $K=1$ stack deep and $L=10$ layers wide (236 parameters), and is trained using a NODE horizon of $H = 16$ time steps.
Using the proposed SymNODE framework, we obtain the identified model
\begin{subequations}
\begin{align}
    \dot{x} &= y, \label{tb_main_x}\\
    \dot{y} &= -4.5 + 1.5 \, y + x^2 + x \, y, \label{tb_main_y}
\end{align}
\end{subequations}
which closely matches the true system, with only a small discrepancy in the constant term.  We note that the identified model has a saddle fixed point at $(x,y) \approx (2.121,0)$ and a stable spiral sink at $(x,y) \approx (-2.121,0)$. The dynamics near this stable fixed point are expected to be consistent with the results from \cref{appendix:fixed_point}: asymptotically they will stay wihin a close neighborhood of the fixed point of the true system. 

A comparison between the true and identified dynamics is shown in \cref{tb_space}--\cref{tb_time}. In the phase portrait \cref{tb_space}, the training data (teal dots) lies primarily along trajectories influenced by the unstable manifold of the saddle point, with visible perturbations due to noise. The trajectory generated by the identified model (dashed red) closely follows that of the true system (solid blue) from the same initial condition. This agreement is also evident in the time-series comparison shown in \cref{tb_time}. This example demonstrates that the proposed approach can accurately recover governing equations from noisy data in systems which are sensitive to state and parametric perturbations.
In particular, the use of a multi-step prediction horizon enforces consistency under repeated composition, allowing the learned model to remain accurate even as trajectories evolve along unstable directions.

\begin{figure*}[h]
    \centering
    \begin{subfigure}[t]{0.5\textwidth}
        \centering
        \includegraphics[height=1.9in]{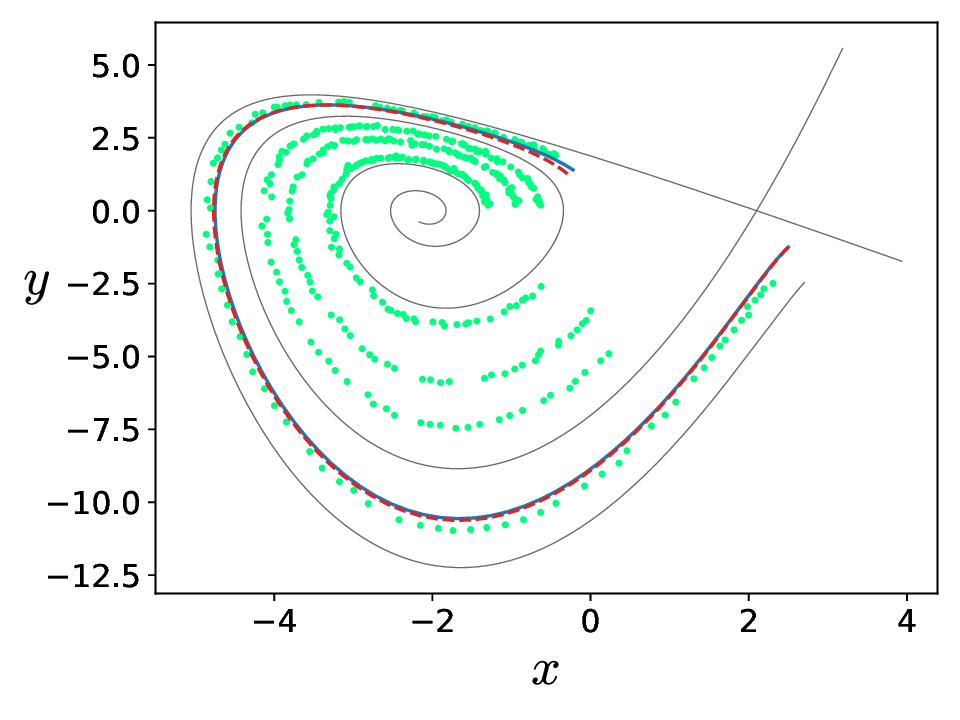}
        \caption{Space}
        \label{tb_space}
    \end{subfigure}%
    ~ 
    \begin{subfigure}[t]{0.49\textwidth}
        \centering
        \includegraphics[height=1.9in]{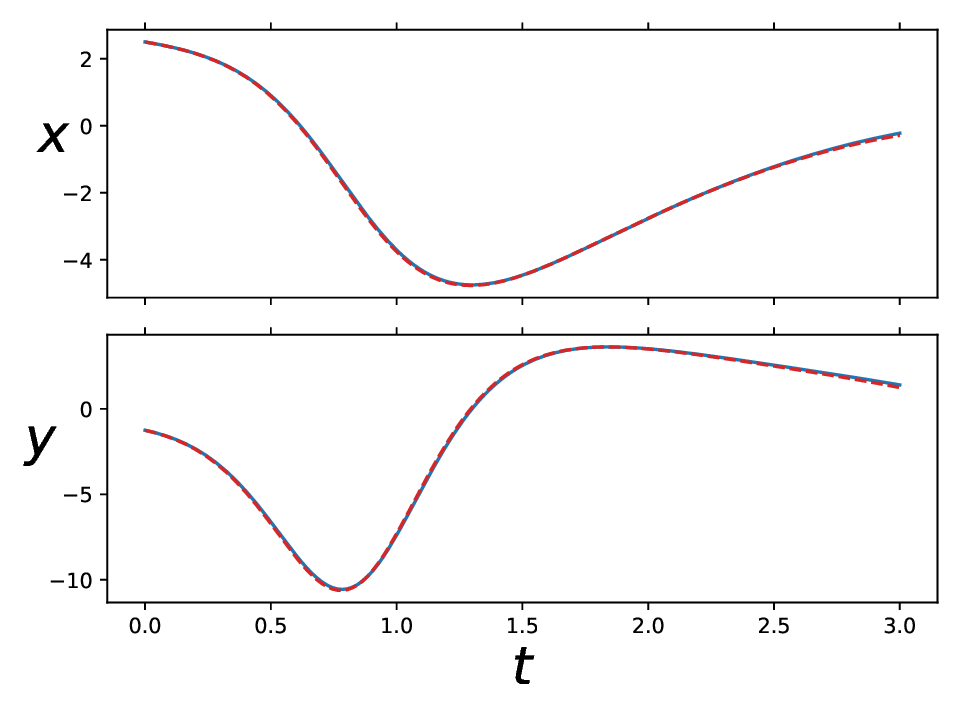}
        \caption{Time}
        \label{tb_time}
    \end{subfigure}
    \caption{Takens--Bogdanov normal form. (a) Phase portrait showing training data (teal), true trajectory (solid blue), and trajectory generated by the identified model (dashed red).  The data are predominantly influenced by the unstable manifold of the saddle point.  (b) Time-series comparison from the same initial condition.  The close agreement demonstrates accurate recovery of both the vector field and transient dynamics despite both state and parametric sensitivity.}
    \label{tb_fig}
\end{figure*}

\subsection{Hopf normal form}

We next consider the Hopf normal form in polar coordinates \cite{guckenheimer2013nonlinear}
\begin{subequations}
\begin{align}
    \dot{r} &= a r + b r^3, \label{hopf_r}\\
    \dot{\theta} &= c + d r^2, \label{hopf_theta}
\end{align}
\end{subequations}
with parameters $a = 1$, $b = -1$, $c = 1$, and $d = 0$. This system exhibits a stable limit cycle at $r = 1$, with trajectories spiraling toward the periodic orbit.

This example highlights a fundamental challenge in data-driven model discovery: \emph{non-uniqueness of models consistent with observed trajectories}. In particular, many dynamical systems can reproduce a circular limit cycle with approximately constant angular velocity. For example, linear systems with an elliptic fixed point generate periodic orbits, and may closely match observed trajectories when the data lies near the limit cycle. As a result, trajectory data alone - especially when dominated by post-transient behavior - may be insufficient to uniquely identify the governing equations.
To illustrate this, we generate data from $p = 5$ trajectories initialized outside the unit circle and sampled over the interval $T = [0,10]$, yielding $m^* = 600$ data points. The data is corrupted with $1\%$ NSR as described in \cref{noise}. The NN is $K=1$ stack deep and $L=10$ layers wide (236 parameters), and trained using a horizon of $H = 16$ time steps.

When the dataset contains sufficient transient behavior (i.e., radial decay toward the limit cycle), the proposed method successfully recovers the exact governing equations \cref{hopf_r}--\cref{hopf_theta} after $800$ epochs. A comparison between the true and identified dynamics is shown in \cref{hopf_space}-\cref{hopf_time}. In the phase portrait \cref{hopf_space}, the training data (teal dots) includes trajectories converging toward the limit cycle. The identified model reproduces both the transient dynamics and the asymptotic periodic orbit. The time-series representation \cref{hopf_time} shows exponential decay in $r$ and linear growth in $\theta$, both of which are captured accurately.
In contrast, when the data is restricted to trajectories near the limit cycle, we observe that simpler models, such as linear systems, can fit the data with low short-term error, but fail to capture the correct radial dynamics. This reflects the fact that the limit cycle alone does not uniquely determine the underlying vector field.

\begin{figure*}[h]
    \centering
    \begin{subfigure}[t]{0.49\textwidth}
        \centering
        \includegraphics[height=1.9in]{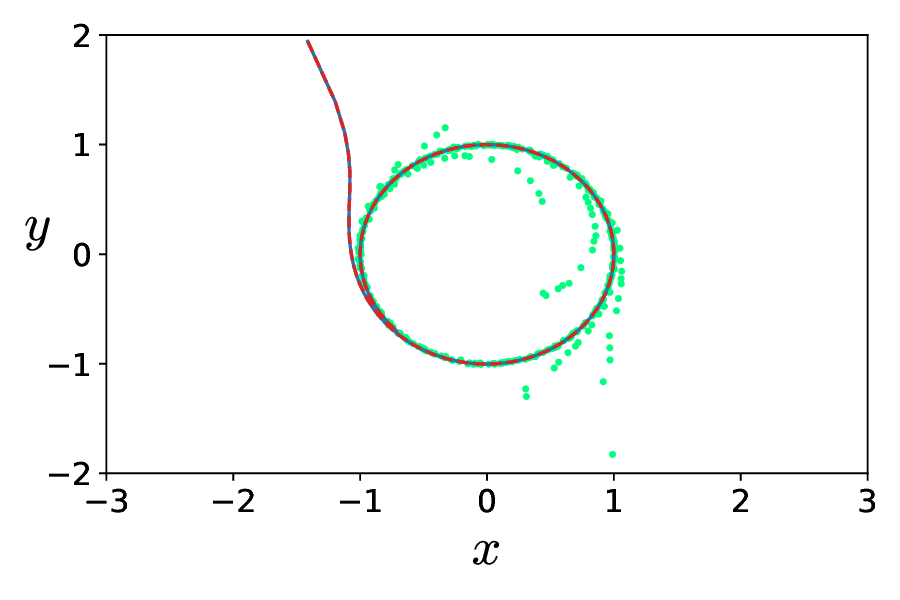}
        \caption{Space}
        \label{hopf_space}
    \end{subfigure}%
    ~ 
    \begin{subfigure}[t]{0.49\textwidth}
        \centering
        \includegraphics[height=1.9in]{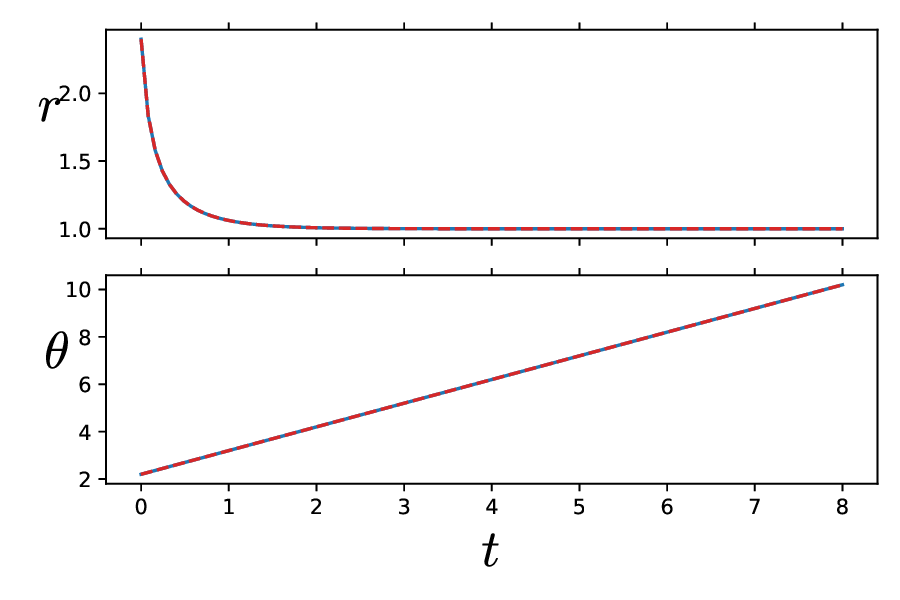}
        \caption{Time}
        \label{hopf_time}
    \end{subfigure}
    \caption{Hopf normal form with $x = r\cos\theta$, $y = r\sin\theta$.  (a) Phase portrait illustrating convergence to the limit cycle; training data (teal) are drawn primarily from post-transient dynamics.  (b) Time evolution of $r$ and $\theta$, showing exponential radial convergence and linear phase growth. Despite the non-uniqueness of circular limit-cycle dynamics, the identified model recovers the correct normal form structure.}
    \label{hopf_fig}
\end{figure*}

This example demonstrates that accurate identification of dynamical systems with attracting invariant sets requires sufficiently informative data, particularly capturing transient behavior.  The use of a multi-step optimization horizon further aids in distinguishing between candidate models by penalizing those that fail to reproduce the full trajectory evolution under repeated composition.

Finally, we note that even when our system identification algorithm gives a model which is structurally different from the true one, the asymptotic behavior is often close to the true periodic orbit, as expected from \cref{appendix:limit_cycle} for systems with a stable periodic orbit.  For example, \cref{approx_hopf_1}-\cref{approx_hopf_2} show trajectories from such models, \cref{hopf_1} and \cref{hopf_2} respectively, whose modeling error is small. The stable periodic orbit of this model is nearly -- but not exactly -- the same in phase space. Consequently, the Loss \cref{loss} over increasingly long horizons is also small.

\begin{figure*}[h]
    \centering
    \begin{subfigure}[t]{0.49\textwidth}
        \centering
        \includegraphics[height=1.75in]{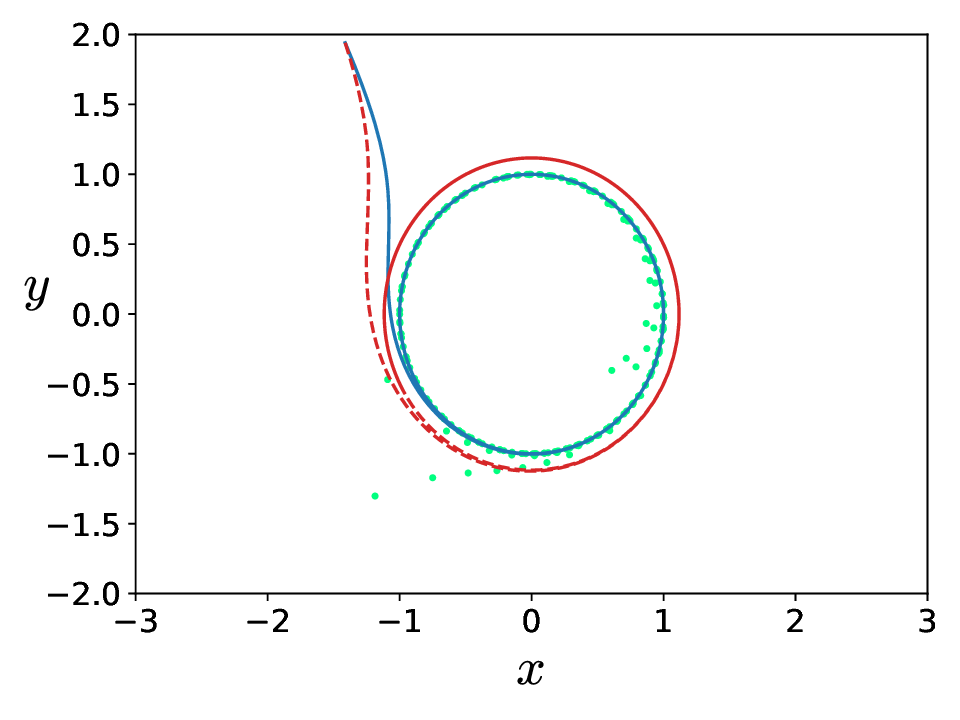}
        \caption{Phase portrait of \cref{hopf_1}}
        \label{approx_hopf_1}
    \end{subfigure}%
    ~ 
    \begin{subfigure}[t]{0.49\textwidth}
        \centering
        \includegraphics[height=1.75in]{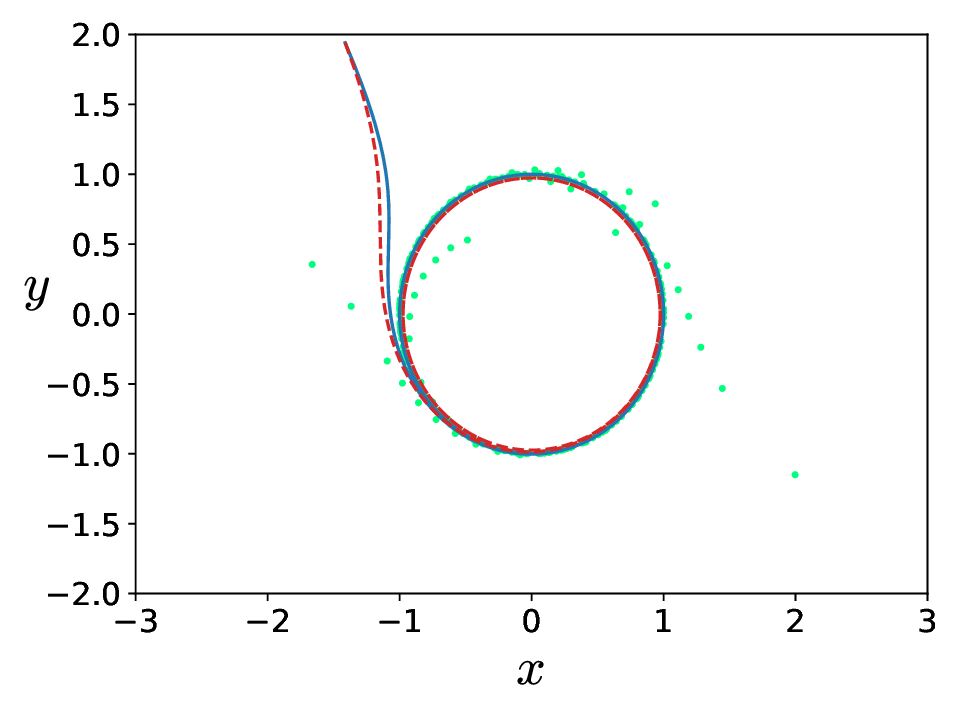}
        \caption{Phase portrait of \cref{hopf_2}}
        \label{approx_hopf_2}
    \end{subfigure}

    \begin{subfigure}[t]{0.49\textwidth}
        \centering
        \includegraphics[height=1.75in]{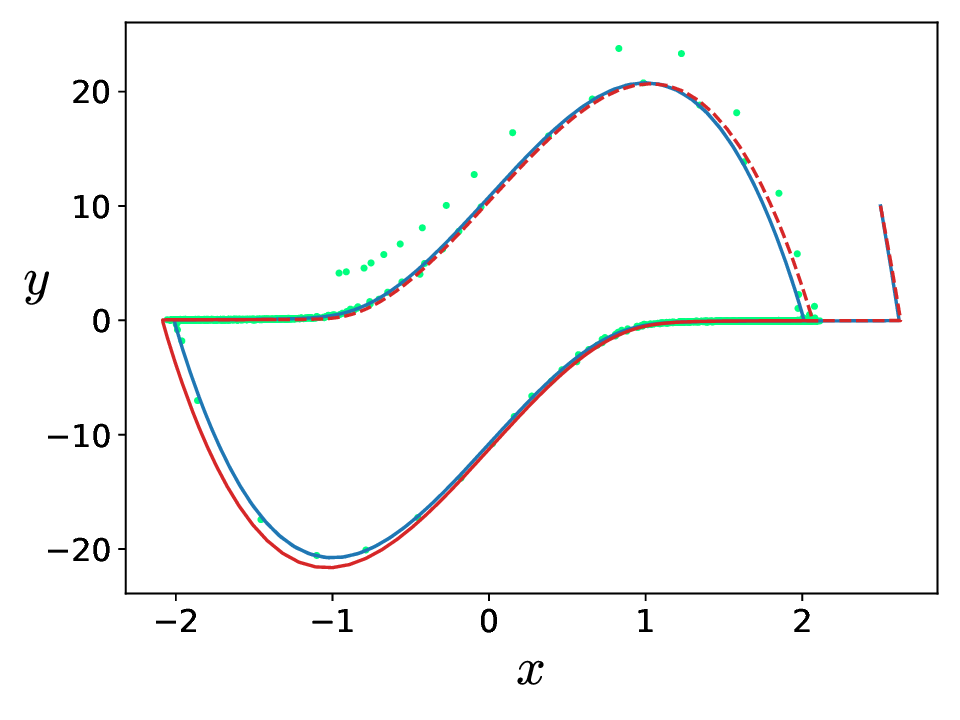}
        \caption{Phase portrait of \cref{vdp_1}}
        \label{approx_VDP_1}
    \end{subfigure}%
    ~ 
    \begin{subfigure}[t]{0.49\textwidth}
        \centering
        \includegraphics[height=1.75in]{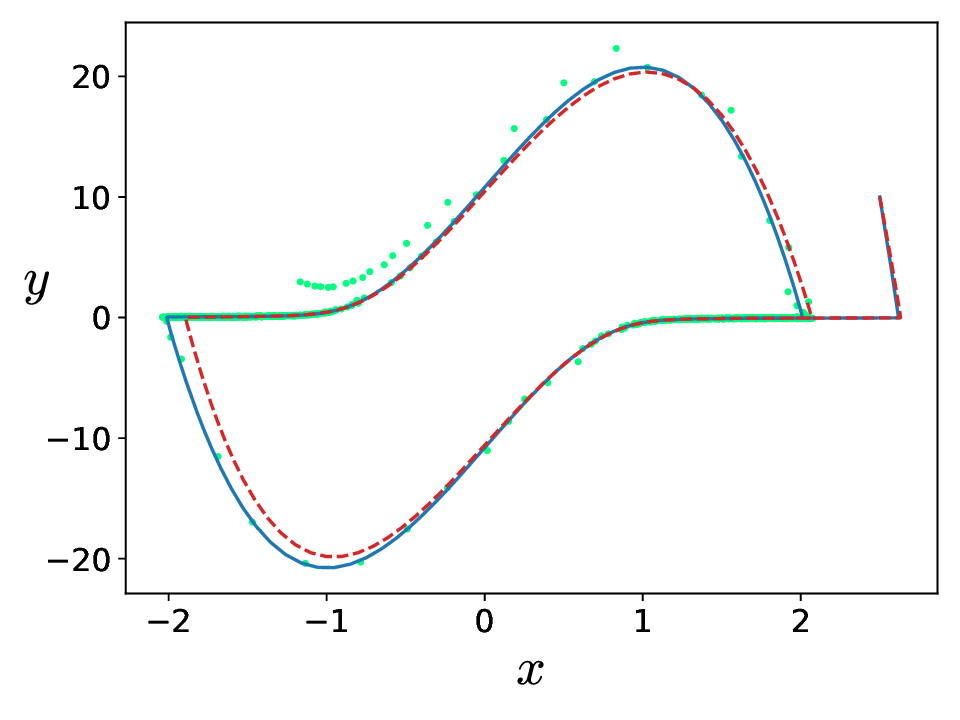}
        \caption{Phase portrait of \cref{vdp_2}}
        \label{approx_VDP_2}
    \end{subfigure}
    \caption{Phase portraits of model approximations for \cref{hopf_r}-\cref{hopf_theta} (top) and \cref{vdp_x}-\cref{vdp_y} (bottom). Notice how the periodic orbits of the approximate models (dashed red) seem like a slightly perturbed version of the exact model (solid blue). Such models have similar stability properties, dynamic range, and time-periods as explained in \cref{appendix:limit_cycle}.}
    \label{approx_figs_2}
\end{figure*}

\subsection{Simple Pendulum}

We next consider the dynamics of a simple pendulum
\begin{equation}\label{pendulum_eqn}
    \ddot{\theta} = -\frac{g}{l} \sin \theta,
\end{equation}
with parameters $g = 9.81$ and $l = 2$. Writing this as a first-order system with $x = \theta$ and $y = \dot{\theta}$, we obtain a nonlinear oscillator with a continuum of periodic orbits whose periods depend on the amplitude.

This example highlights the challenge of distinguishing between locally accurate but globally incorrect models. In particular, for data consisting of a single trajectory, the dynamics can be well-approximated by a linear system corresponding to small oscillations. Such a model yields periodic behavior and can achieve low short-term prediction error, despite failing to capture the amplitude-dependent nonlinear dynamics of the true system.

To address this ambiguity, we generate a dataset consisting of $m^* = 201$ data points sampled from $p = 3$ trajectories with different initial conditions over the time interval $T = [0,5]$, and add $1\%$ NSR as described in \cref{noise}. The use of multiple trajectories introduces variation in oscillation amplitudes and periods, providing information necessary to identify the nonlinear dependence in \cref{pendulum_eqn}. The NN is $K=1$ stack deep and $L=10$ layers wide (236 parameters), and trained using a NODE horizon of $H = 4$ time steps.
Using the proposed method, we obtain the identified model
\begin{subequations}\label{identified_pendulum}
\begin{align}
    \dot{\theta}_1 &= \theta_2, \label{pendulum_x}\\
    \dot{\theta}_2 &= -5 \sin \theta_1, \label{pendulum_y}
\end{align}
\end{subequations}
which closely matches the true system, differing only slightly in the coefficient.

A comparison between the true and identified dynamics is shown in \cref{pendulum_fig}. In the phase portrait \cref{pendulum_space}, the training data (teal dots) spans multiple trajectories with different amplitudes. The trajectory generated by the identified model (dashed red) closely overlaps with that of the true system (solid blue). This agreement is also evident in the time-series representation \cref{pendulum_time}.
We emphasize that when the training data is restricted to a single trajectory, the method often identifies a linear model that reproduces the observed motion but fails to generalize to other initial conditions. In contrast, the inclusion of multiple trajectories leads to sufficiently large discrepancies for such linear approximations, allowing the nonlinear sinusoidal term to be identified.

This example demonstrates that accurate recovery of nonlinear dynamics requires both sufficiently informative data and a loss function that penalizes errors over multiple time steps. In particular, the multi-step optimization horizon helps distinguish between models that fit individual trajectories and those that correctly capture the underlying dynamical structure.

\begin{figure*}[h]
    \centering
    \begin{subfigure}[t]{0.49\textwidth}
        \centering
        \includegraphics[height=1.9in]{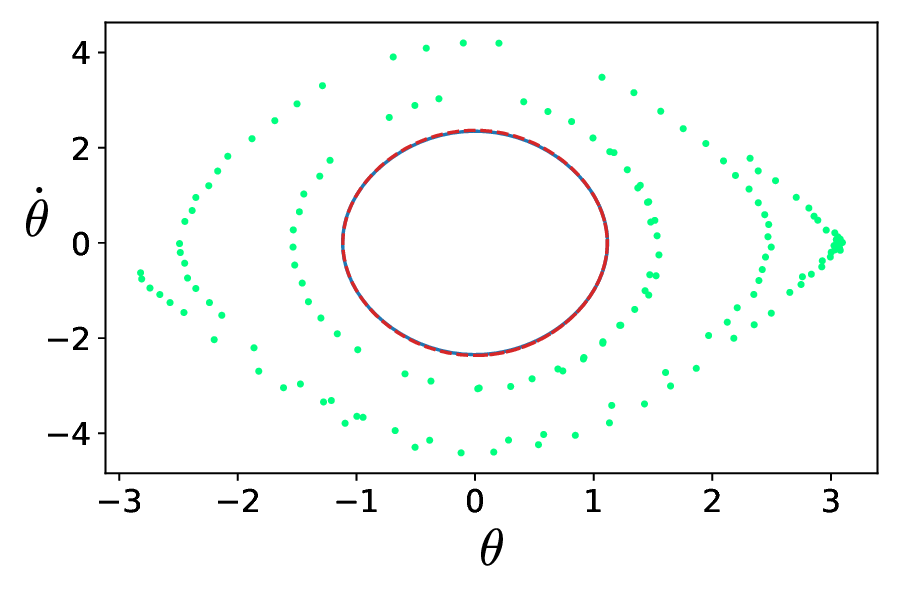}
        \caption{Space}
        \label{pendulum_space}
    \end{subfigure}%
    ~ 
    \begin{subfigure}[t]{0.49\textwidth}
        \centering
        \includegraphics[height=1.9in]{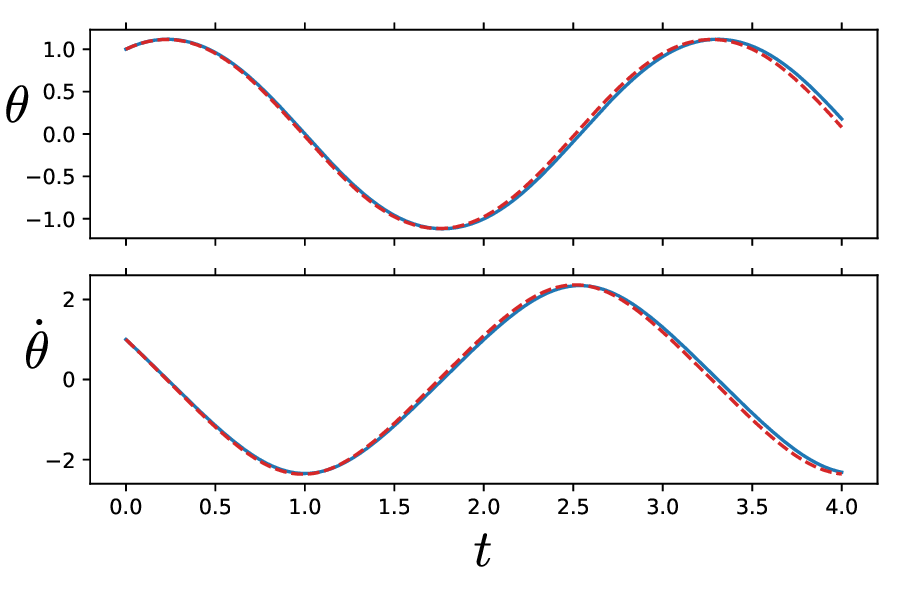}
        \caption{Time}
        \label{pendulum_time}
    \end{subfigure}
    \caption{Simple pendulum. (a) Phase portrait showing training data (teal), true trajectory (solid blue), and trajectory generated by the identified model (dashed red). The data span multiple energy levels, illustrating nonlinear periodic motion across trajectories of differing amplitudes. (b) Time-series comparison from the same initial condition. The close agreement demonstrates accurate recovery of the underlying Hamiltonian structure and preservation of periodic dynamics over the NODE horizon.}
    \label{pendulum_fig}
\end{figure*}

\subsection{Van der Pol oscillator} 
\label{example:van_der_pol}

We next consider the Van der Pol oscillator \cite{van_der_pol}
\begin{subequations}
\begin{align}
    \dot{x} &= y, \label{vdp_x}\\
    \dot{y} &= \mu (1 - x^2) y - x, \label{vdp_y}
\end{align}
\end{subequations}
which provides a canonical example of a nonlinear system with multiple time scales. As the parameter $\mu$ increases, the system exhibits relaxation oscillations characterized by slow evolution along attracting manifolds and rapid transitions between them. This separation of time scales leads to stiffness for large values of $\mu$, posing significant challenges for both numerical integration and data-driven model identification.

We investigate the performance of the proposed method for $\mu \in [3,15]$. For moderate values of $\mu$ (e.g., $\mu \in [3,6]$), the dynamics are only mildly stiff, and the governing equations can be accurately identified from relatively small datasets. In particular, we find that approximately $m^* = 600$ data points sampled from $1$--$3$ trajectories with $1\%$ NSR are sufficient to recover the exact model.
As $\mu$ increases, the separation of time scales becomes more pronounced. The $y$-component exhibits increasingly rapid variation, while the $x$-component evolves more slowly along the slow manifold. This results in a significant disparity in the dynamic range and temporal scales of the state variables, making parameter estimation more difficult. For $\mu \approx 10$, we find that approximately $m^* =900$ data points are required to reliably recover the governing equations.
In the stiff regime ($\mu \approx 15$), accurate identification becomes substantially more challenging. The rapid transitions require sufficiently fine temporal resolution in the data, while numerical integration errors accumulate over the NODE horizon \cite{colby_3}. In this regime, we find that $m^* = 1300$--$1900$ data points from $p = 1-2$ trajectories spanning $T=25-45$ time-units with $1\%$ NSR are sufficient to recover the correct model, even when using a standard (non-stiff) RK4 integration scheme.

One such result for $\mu = 15$ is shown in \cref{vdp_space}. The $m^* = 1200$ points of training data with $1\%$ NSR are sampled from $p = 1$ trajectory over $T=[0,30]$. Despite this limited transient information, the exact model is identified after $800$ epochs using $K=2$ stacks and $L=1$ layer ($68$ parameters) using a horizon of $H = 16$ time-steps. The agreement between the true and identified trajectories is evident in both the phase portrait and time-series representations.
We note that, in contrast to the Hopf normal form, accurate identification is possible even when the data is concentrated near the limit cycle. This reflects the richer nonlinear structure of the Van der Pol dynamics, which imposes stronger constraints on the admissible vector fields.

\begin{figure*}[h]
    \centering
    \begin{subfigure}[t]{0.49\textwidth}
        \centering
        \includegraphics[height=1.9in]{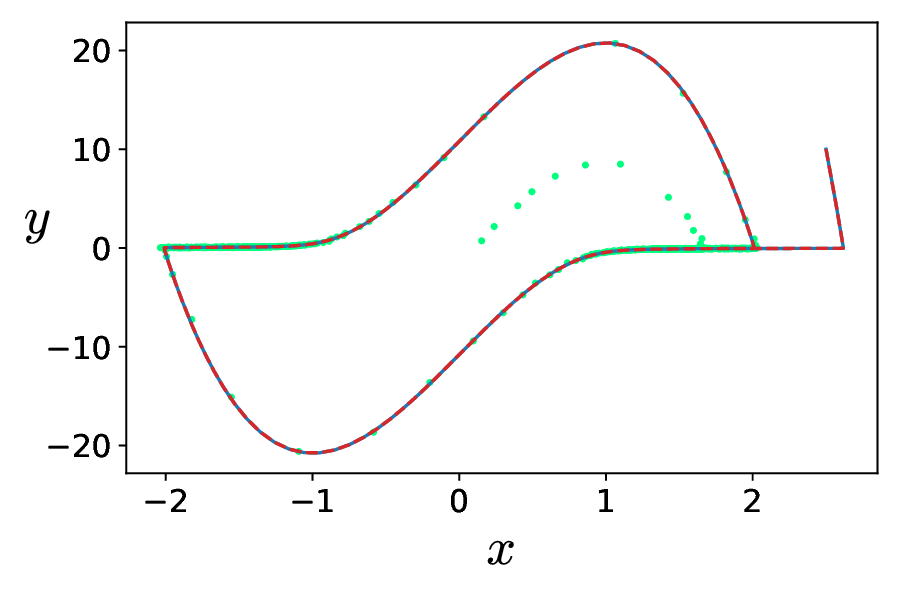}
        \caption{Space}
        \label{vdp_space}
    \end{subfigure}%
    ~ 
    \begin{subfigure}[t]{0.49\textwidth}
        \centering
        \includegraphics[height=1.9in]{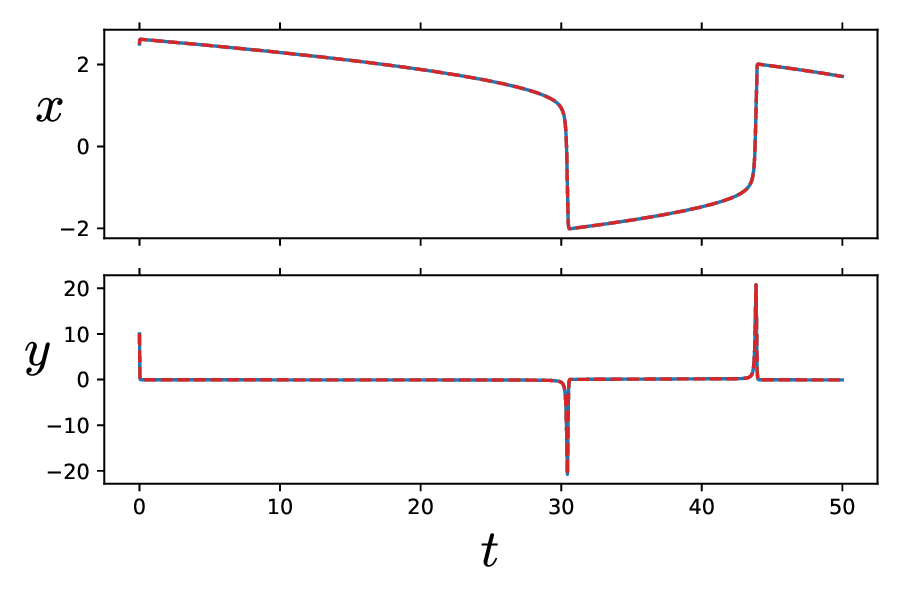}
        \caption{Time}
    \end{subfigure}
    \caption{Van der Pol oscillator with $\mu = 15$, demonstrating scale separation over length and time. (a) Phase portrait showing training data (teal), true trajectory (solid blue), and trajectory generated by the identified model (dashed red). The trajectories converge to a stable limit cycle, capturing both transient dynamics and nonlinear damping. (b) Time-series comparison from the same initial condition. The agreement illustrates accurate recovery of relaxation oscillations and the stable periodic orbit.}
\end{figure*}

This example demonstrates that the proposed method remains effective in the presence of strong nonlinearities and multiple time scales, provided that the data resolution and NODE horizon are chosen appropriately. 
It also highlights the interplay between numerical integration error and model identification, as discussed in \cref{sec:problem_formulation}, particularly in stiff regimes where errors can accumulate rapidly under repeated composition.

Our system identification approach doesn't always give models with the structure of the true model.  But since the periodic orbit for the true model is stable, the results from Appendix~\cref{appendix:limit_cycle} are expected to hold for small modeling errors. \cref{approx_VDP_1} and \cref{approx_VDP_2} show the stable periodic orbits found for such models, \cref{vdp_1} and \cref{vdp_2} respectively, which are different from the stable periodic orbit for \cref{vdp_x}-\cref{vdp_y} but with nearly the same dynamic range, stability characteristics, and time-period.

\subsection{Chaotic dynamics: sensitivity, horizon length, and statistical fidelity}\label{chaotic_examples}

We now consider three canonical chaotic systems: the R\"ossler attractor, Chua's double scroll, and the Lorenz equations. These examples highlight several key challenges for data-driven discovery:
(i) sensitivity to initial conditions,
(ii) rapid error growth under composition,
and (iii) the distinction between short-term trajectory accuracy and long-term statistical fidelity.
From the perspective of the Loss$_H$ framework developed earlier, chaotic systems provide a stringent test: even small one-step errors can amplify exponentially, so minimizing a multi-step loss is essential for identifying dynamically consistent models.

\subsubsection{R\"ossler equations}

We first consider the R\"ossler system~\cite{rossler1976chaotic}, which finds relevance in modeling the chaotic behavior of chemical reactions:
\begin{subequations}
\begin{align}
  \dot{x} &= -y - z, \label{rossler_x}\\
  \dot{y} &= x + a y,\\
  \dot{z} &= z(x - c) + b, \label{rossler_z}
\end{align}
\end{subequations}
with parameters $a = 0.5$, $b = 2$, and $c = 4$, as considered in~\cite{rossler_parameters}. This system exhibits a chaotic attractor alongside a different region of state-space where trajectories grow unboundedly. This example highlights the challenge of error accumulation in chaotic regimes; without a multi-step optimization horizon to enforce consistency, small modeling errors can easily perturb generated trajectories into unbounded regions as proposed in \ref{accumulating_error}.

%\paragraph{Exact recovery and dependence on sampling.}
We find that when the data is noise-free, the exact model \cref{rossler_x}--\cref{rossler_z} can be recovered using $H = 1$ time-step from $m^* = 1000$ data-points using a larger $\Delta t$ such as $0.2$; see \cref{rossler_exact}.  However, for noisy data, the model estimate using $H=1$ is only approximate. 
A key observation is that increasing the NODE horizon $H$ significantly improves robustness to noise. Using longer horizons, we are able to exactly identify the model even with up to $20\%$ NSR:
\begin{subequations}
\begin{align}
  \dot{x} &= -y - z,\\
  \dot{y} &= x + 0.5 y,\\
  \dot{z} &= z(x - 4) + 2.
\end{align}
\end{subequations}

\begin{figure*}[h]
    \centering
    \begin{subfigure}[t]{0.49\textwidth}
        \centering
        \includegraphics[height=1.9in]{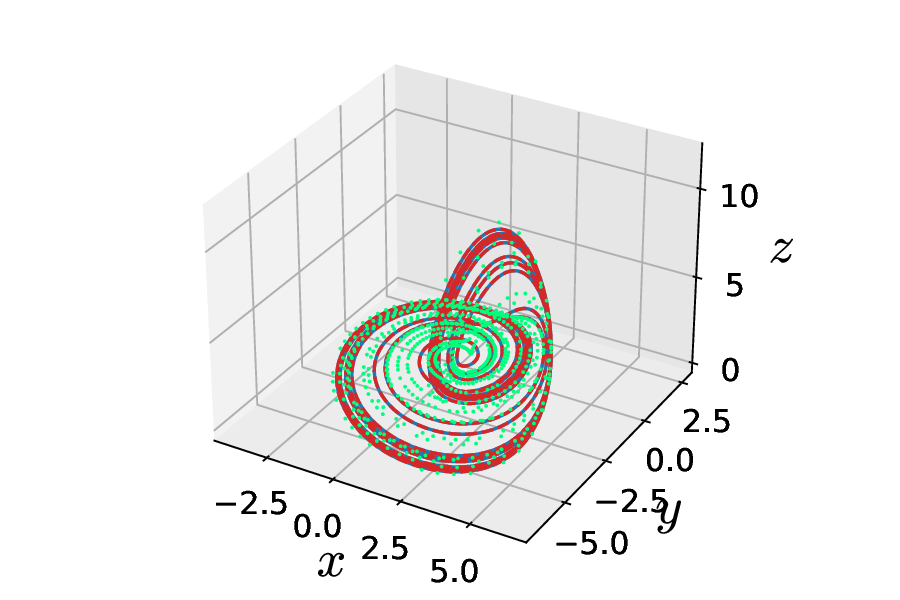}
        \caption{Space}
    \end{subfigure}%
    ~ 
    \begin{subfigure}[t]{0.49\textwidth}
        \centering
        \includegraphics[height=1.9in]{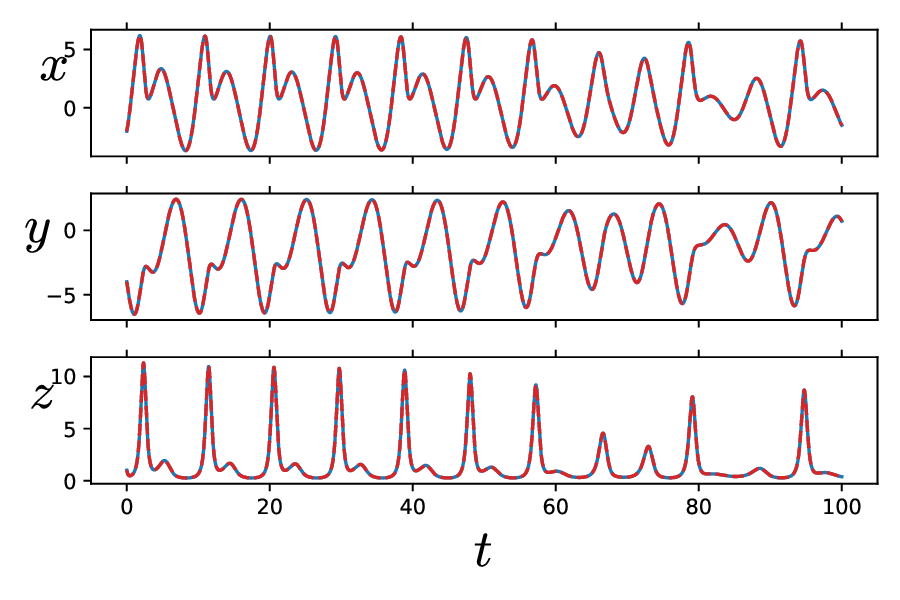}
        \caption{Time}
    \end{subfigure}
    \caption{R\"ossler system. (a) Phase portrait showing noise-free training points (teal), true trajectory (solid blue), and trajectory generated by the identified model (dashed red).  (b) Time-series comparison, showing exact agreement between the exact and estimated model.} 
    \label{rossler_exact}
\end{figure*}

In \cref{rossler_noise_figs}, we show figures for the estimated trajectories (dashed red) which overlap with the true trajectories (solid blue) because the model is identified exactly, and the corresponding training datasets (teal dots) at various levels of noise used to corrupt the training data. These models are learned from $m = 2560$ points ($m^* = 2560+H$ total using $H=1-64$) sampled uniformly over $T = [0, 200]$ from $p=1$ trajectory. To alleviate the increase in computations required with longer horizons (up-to $H = 256$ time-steps in this study), we also down-sized the NN from $L=10$ layers wide ($322$ trainable parameters) to $3$ layers ($105$ parameters). Notice how in \cref{rossler_space_0}, which is noise-free, you can visualize the attractor from the training data. The visualization gets increasingly more scattered from \cref{rossler_space_5} with $5\%$ NSR (corresponding to additive $\sigma_{abs} \approx [0.13, 0.14, 0.12]$) to \cref{rossler_space_20} which has $20\%$ NSR ($\sigma_{abs} \approx [0.49, 0.55, 0.48]$) where the training data seems just clumped up in state-space. So we find it noteworthy that our algorithm can estimate the exact model for the \rossler attractor even with this much  noise - and consequently large loss even after converging upon the exact model - over the training dataset.  This illustrates a central point of the Loss$_H$ framework: multi-step consistency provides a strong regularization effect that filters out noise. 

\begin{figure*}[h!]
    \centering
    \begin{subfigure}[t]{0.495\textwidth}
        \centering
        \includegraphics[height=1.95in]{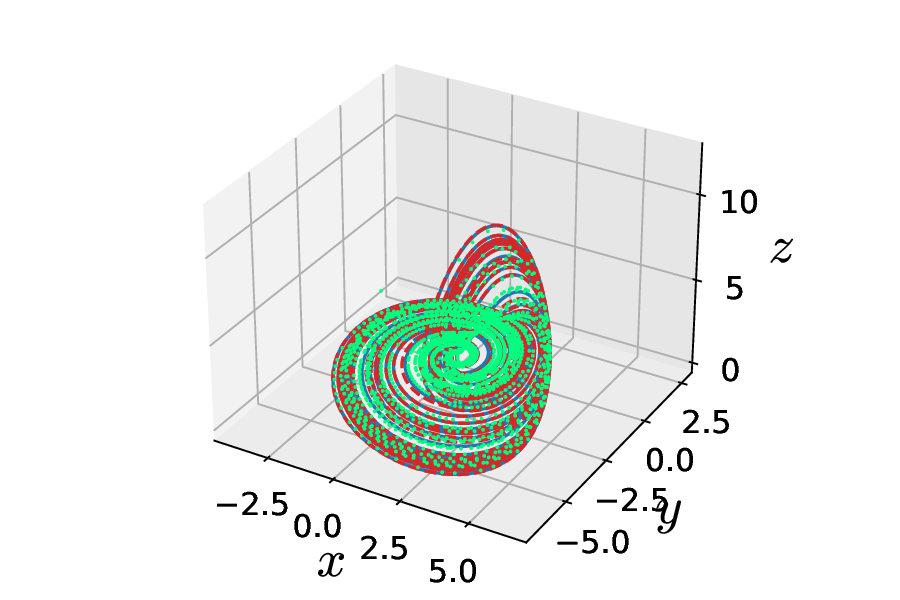}
        \caption{noise-free, $H = 4$}\label{rossler_space_0}
    \end{subfigure}%
    ~ 
    \begin{subfigure}[t]{0.495\textwidth}
        \centering
        \includegraphics[height=1.95in]{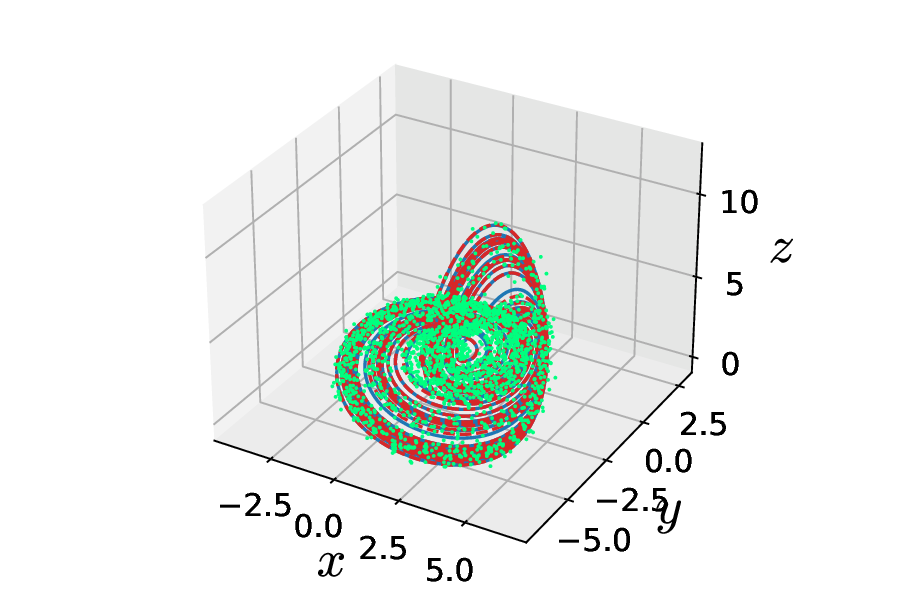}
        \caption{$5\%$ NSR, $H = 8$}\label{rossler_space_5}
    \end{subfigure}
    
    \begin{subfigure}[t]{0.495\textwidth}
        \centering
        \includegraphics[height=1.95in]{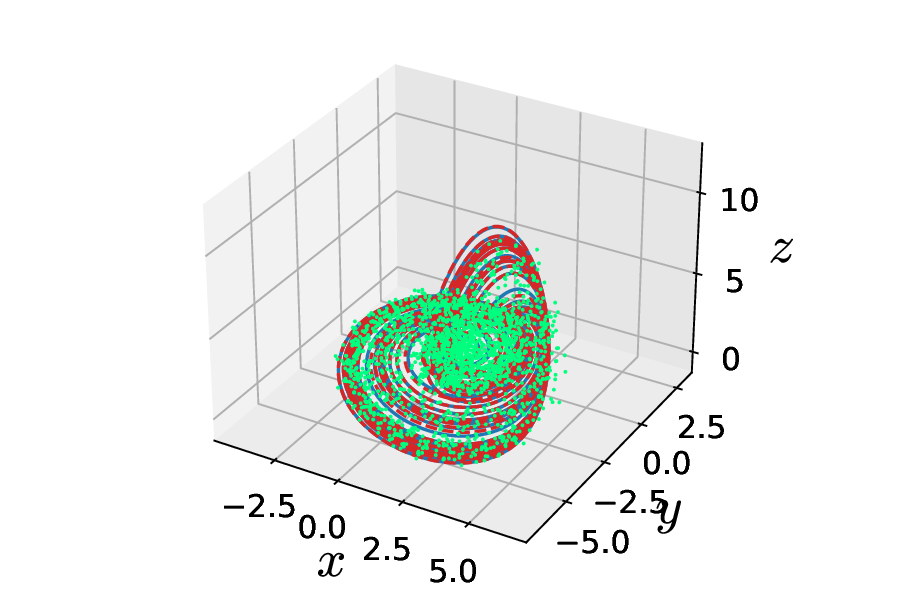}
        \caption{$10\%$ NSR, $H = 16$}\label{rossler_space_10}
    \end{subfigure}%
    ~ 
    \begin{subfigure}[t]{0.495\textwidth}
        \centering
        \includegraphics[height=1.95in]{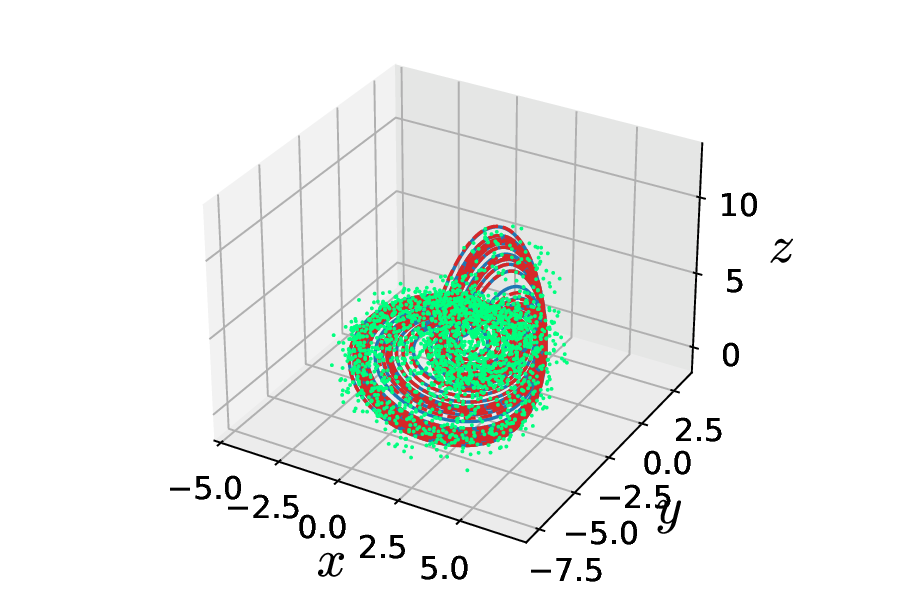}
        \caption{$15\%$ NSR, $H = 32$}\label{rossler_space_15}
    \end{subfigure}
    
    \begin{subfigure}[t]{0.495\textwidth}
        \centering
        \includegraphics[height=1.95in]{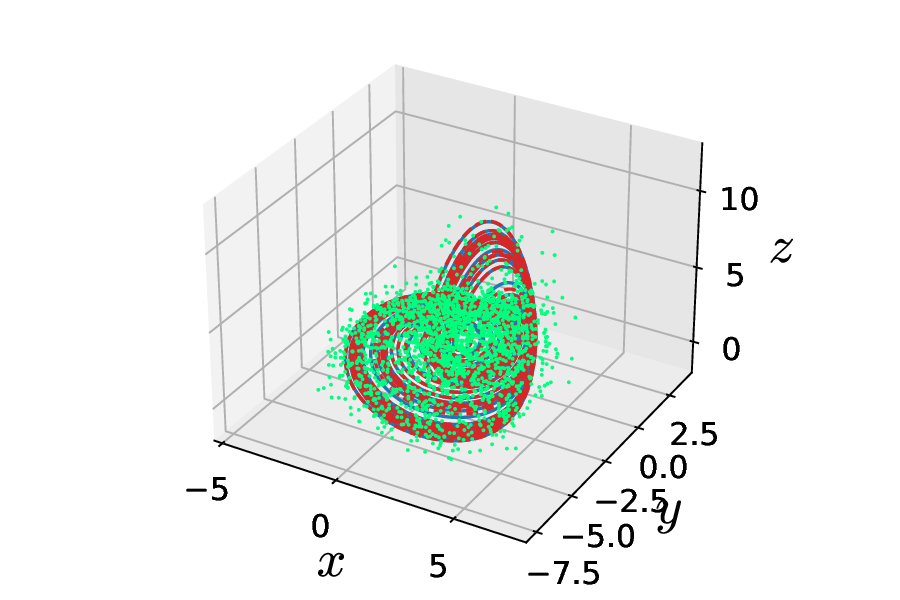}
        \caption{$20\%$ NSR, $H = 64$}\label{rossler_space_20}
    \end{subfigure}
    
    \caption{\small{Comparison of $m^* = 2560+H$ data (green dots) from the \rossler attractor with $0-20\%$ NSR. All models are exact as evident from the overlapping true (blue) and estimated (red) trajectories.}}\label{rossler_noise_figs}%%%%%%%%%%%%%%%%%%%%%%%%%%%%%% Adding to the above description lengthens it and sends text to the next page
\end{figure*}

%\paragraph{Approximate models and bias under noise.}
For higher noise levels ($25\%$ and $30\%$ NSR), the respective recovered models for $H = 128$ and $256$, respectively, deviate slightly:
\begin{subequations}\label{ros_eq_25}
\begin{align}
  \dot{x} &= -y - z,\\
  \dot{y} &= x + 0.5 y,\\
  \dot{z} &= z(x - 4) + 1.667,
\end{align}
\end{subequations}
and
\begin{subequations}\label{ros_eq_30}
\begin{align}
  \dot{x} &= -y - 0.75 z - 0.2,\\
  \dot{y} &= x + 0.4 y,\\
  \dot{z} &= z(x - 3.75) + 1.5 + 0.2 x,
\end{align}
\end{subequations}
with dynamics shown in \cref{rossler__bad_figs}.  For $25\%$ NSR, the only difference is in the constant term in $\dot{z}$, and we can see in \cref{rossler_25_time} how the the estimated model tracks the true models for 2 entire Lyapunov times unit after which it slowly diverges.  For $30\%$ NSR, the model has additional spurious terms.  This is reflected visually where neither the attractor \cref{rossler_30_space} nor the time-series \cref{rossler_30_time} is as close to the true model as \cref{rossler_25_space} and \cref{rossler_25_time}, respectively. Accordingly, the mean $\tilde{\bm \mu} = [ 0.75,  -1.5, 1.5]$, standard deviation $\tilde{\bm \sigma} = [2.55, 2.53,  2.44]$ and Lyapunov exponents $\tilde{\bm \lambda}_e = [0.17, 0, -2.91]$ of \cref{ros_eq_25} are much closer to that of the true model's: $\bm \mu = [0.72, -1.44, 1.43]$, $\bm \sigma = [2.39, 2.33, 2]$, and $\bm \lambda_e = [0.13, 0, -2.91]$ -- than those of \cref{ros_eq_30}: $\tilde{\bm \mu} = [ 0.42,  -1.06, 1.14]$, $\tilde{\bm \sigma} = [2.2, 2.03,  1.5]$, and $\tilde{\bm \lambda}_e = [0.07, 0, -2.99]$.

\begin{figure*}[h!]
    \centering
    \begin{subfigure}[t]{0.52\textwidth}
        \centering
        \includegraphics[height=2.1in]{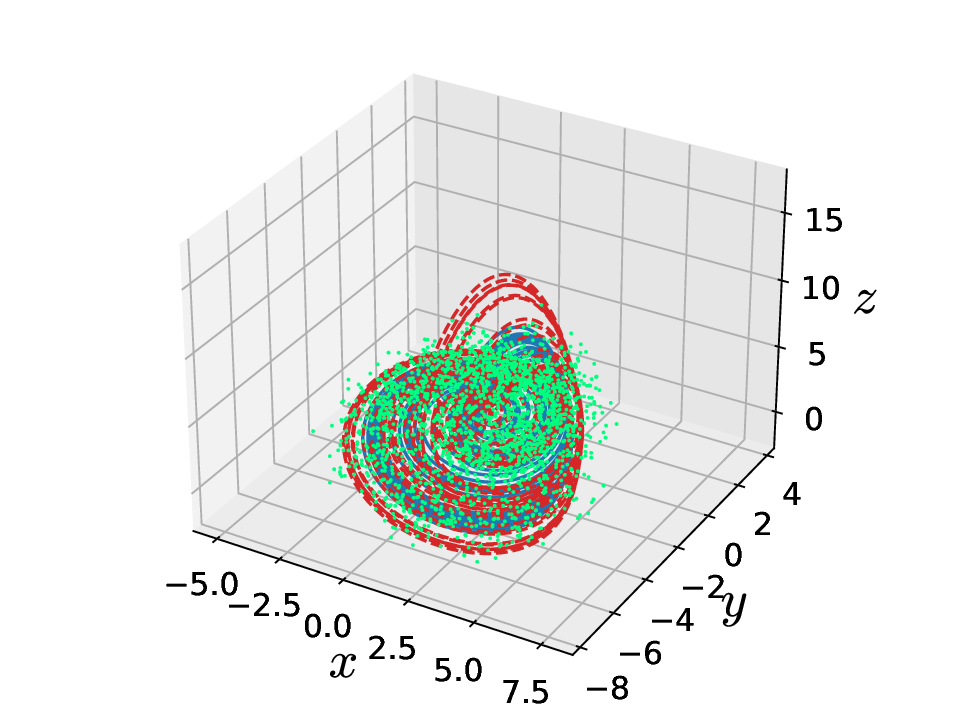}
        \caption{$25\%$ Space }\label{rossler_25_space}
    \end{subfigure}%
    ~ 
    \begin{subfigure}[t]{0.47\textwidth}
        \centering
        \includegraphics[height=2.075in]{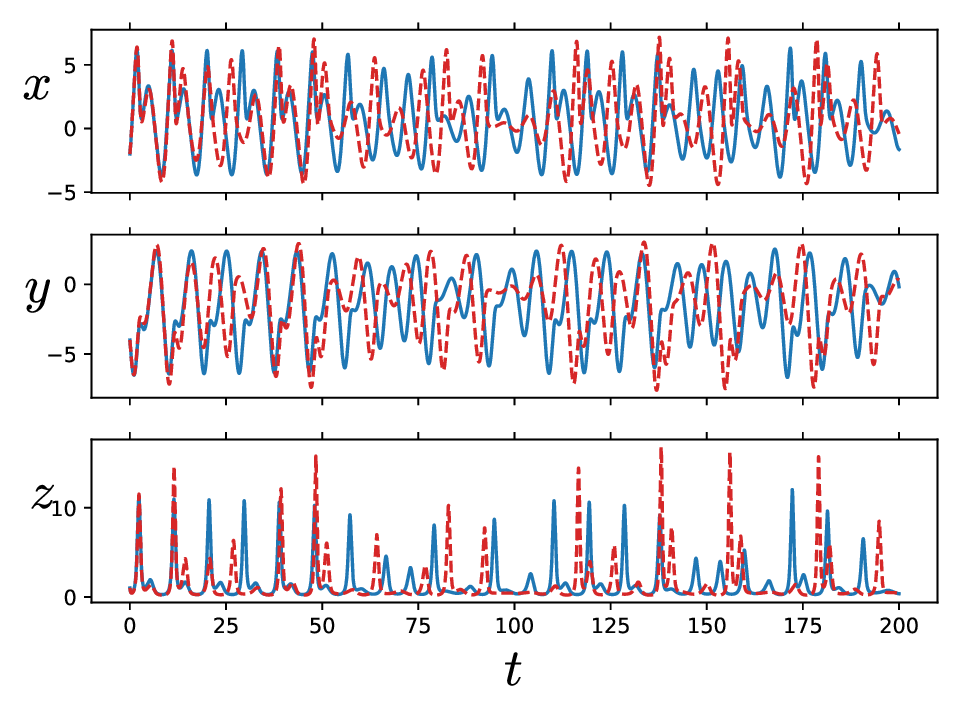}
        \caption{Time-series}\label{rossler_25_time}
    \end{subfigure}
    
    \begin{subfigure}[t]{0.52\textwidth}
        \centering
        \includegraphics[height=2.1in]{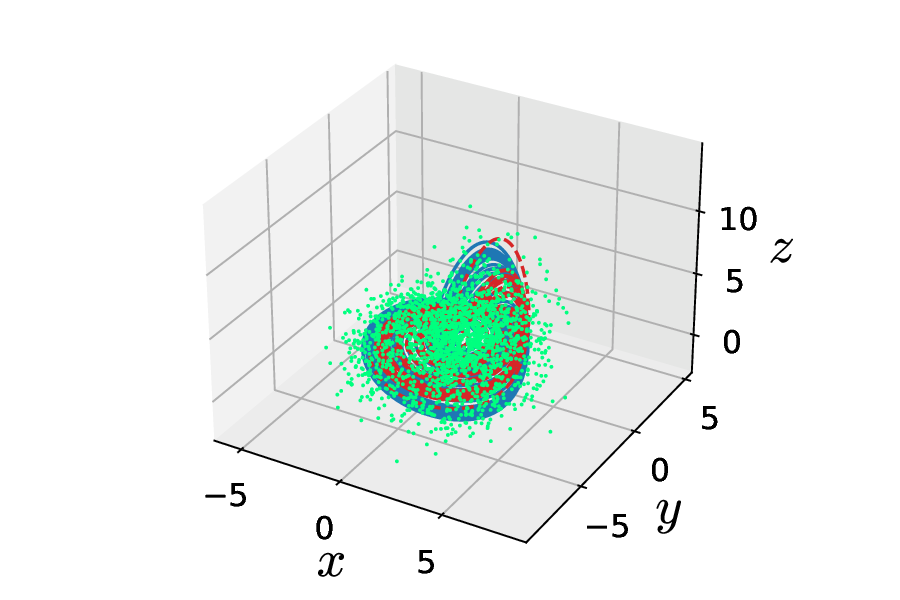}
        \caption{$30\%$ Space }\label{rossler_30_space}
    \end{subfigure}%
    ~ 
    \begin{subfigure}[t]{0.47\textwidth}
        \centering
        \includegraphics[height=1.9in]{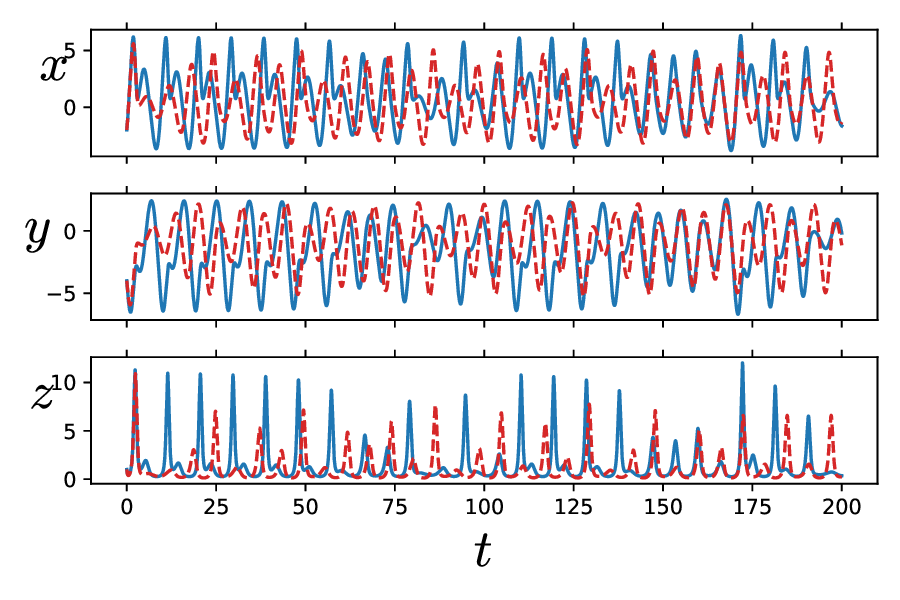}
        \caption{Time-series}\label{rossler_30_time}
    \end{subfigure}
    \caption{\small{Phase portraits of models estimated for the \rossler attractor with (a) $25\%$ NSR (corresponding additive $\sigma_{abs} \approx [0.67, 0.75, 0.7]$)} and (c) $30\%$ NSR (corresponding additive $\sigma_{abs} \approx [0.75, 0.85, 0.78]$) using horizons of $H=128$ and $256$ time-steps respectively. Corresponding time-series are shown in (b) and (d)}\label{rossler__bad_figs}.
\end{figure*}

These deviations are consistent with \cref{l2_norm_eqv}, which shows that minimizing $\mathrm{Loss}_H$ controls the average multi-step trajectory error over the training horizon.
Thus, models with small multi-step loss necessarily produce trajectories 
that remain close to the true trajectories over the training horizon. 
In chaotic systems, however, small errors at each step can accumulate, 
leading to eventual divergence beyond the training horizon.  Thus, even when the identified model is only approximate, it must reproduce trajectories accurately over $h = 1,\dots,H$, with deviations that accumulate gradually beyond this window in chaotic systems.

Finally, \cref{horizon_effect_fig} shows that larger horizons ($H \geq 32$) lead to convergence to the exact model, while smaller horizons get trapped in suboptimal minima.  
These results directly reflect the theoretical role of $H$: increasing $H$ enforces consistency of repeated compositions and reduces the effective model error.

\begin{figure}[t]
    \centering
    \resizebox{0.5\columnwidth}{!}{%
\begin{tikzpicture}
\begin{axis}[
    title={\rossler Attractor $15\%$ NSR},
    xmode = log, log ticks with fixed point,
    ymode = log, log ticks with fixed point,
    xlabel={$Epochs (\times 100)$},
    ylabel={RMSE},
    xmin=1, xmax=256,
    ymin=1, ymax=100,
    xtick={4, 8, 16, 32, 64, 128, 256},
    ytick={1,3,6,12,25,50,100},
    legend pos=north west,
    ymajorgrids=true,
    grid style=dashed,
]

\addplot[
    color=red,
    mark=square,
    ]
    coordinates {
    (4, 100)(8, 86.17)(16, 57)(32, 68)(64, 62)(128, 73)(256, 69)
    };
% \addlegendentry{H = 8 time-steps}
\addplot[
    color=blue,
    mark=square,
    ]
    coordinates {
    (4, 100)(8, 43.15)(16, 35.23)(32, 23.13)(64, 22.15)(128, 23.13)(256, 21)
    };
\addplot[
color=magenta,
mark=diamond,
]
coordinates {
(4, 100)(8, 71)(16, 69)(32, 23)(64, 19)(128, 21)(256, 1)
};

\addplot[
color=green,
mark=star,
]
coordinates {
(4, 100)(8, 69)(16, 73)(32, 17)(64, 23)(128, 1)(256, 1)
};

\addplot[
color=black,
mark=diamond,
]
coordinates {
(4, 100)(8, 73)(16, 19)(32, 1)(64, 1)(128, 1)(256, 1)
};

\end{axis}
\end{tikzpicture}
}
        \caption{RMSE versus training epochs for $H=$ 8 (red), 16 (blue), 32 (magenta), 64 (green), 128 (black) time-steps. For the same training dataset size of $m = 2560$ in $\X_0$ ($m^* = 2560 + H$ in $\X$ \cref{m_total}), we can observe how the shorter horizons (8, 16) do not converge at all although they might initially have lower RMSE, while the longer horizons (32, 64, 128) not only converge to the true model but also converge earlier (epochs for convergence using $H=128$ are fewer than for  $H=64$, which in turn are fewer than for $H=32$).\label{horizon_effect_fig}
        }
\end{figure}

Note that to make longer forecasting horizons computationally tractable, the computational graph was made smaller by reducing to $L=3$ layers.  While we trade in expressivity of a larger NN, the payoff is that a smaller netowrk gives desired sparsity with an $\ell_1$ regularizer, which is convex and the gradients are better behaved.  For smaller model sizes, we find that $\ell_1$ regularization gives desired sparsity, so we use it instead of the non-convex $\lhalf$ regularizer. We find that longer NODE horizons are also more robust to noise with both the \rossler model and the Lorenz model (discussed below) identified from data with $1\%$ NSR which was not possible with shorter NODE horizons.

\subsubsection{Chua double scroll} \label{example:chua}

We next consider Chua's circuit~\cite{chua1986double}
\begin{subequations}
\begin{align}
\dot{x} &= 15.6 y - \frac{31.2}{7} x + 3.343 (|x+1|-|x-1|), \label{chua_x}\\
\dot{y} &= x - y + z, \label{chua_y}\\
\dot{z} &= -28 y. \label{chua_z}
\end{align}
\end{subequations}
Chua's circuit models a physical electronic oscillator exhibiting a chaotic double-scroll attractor uniquely characterized by a non-differentiable term representing a nonlinear resistor. Identifying this system challenges our framework to approximate non-smooth dynamics using smooth primitive functions, testing whether the estimated equations can represent the attractor's geometry and the characteristic switching behavior. This attractor also has a larger leading positive Lyapunov exponent than the \rossler attractor does, making short term-tracking and reproducing long-term statistics challenging with modeling errors. The model identified using $K=1$ stack and $L=10$ layers ($322$ parameters), from $m^* = 1000$ points without noise over $T=[0,140]$ using $H = 1$ time-step replaces the piecewise-linear nonlinearity with smooth approximations:\begin{subequations}
\begin{align}
\dot{x} &= 15.333 y - \tfrac{1}{3} x - 2\sin(2x - 3) - \tfrac{1}{3}\sin(4.333x + 6.333), \label{chua_main_x}\\
\dot{y} &= x - y + z,\\
\dot{z} &= -28 y. \label{chua_main_z}
\end{align}
\end{subequations}
We see that \cref{chua_y}-\cref{chua_z} are identified exactly and \cref{chua_main_x} has terms different from \cref{chua_x}. While these terms are not identified exactly, this shows the power of compositions in generating sinusoidal terms with arbitrary frequencies and phases.  While not structurally identical, this model reproduces the attractor geometry and dynamics accurately; see \cref{fig:chua}. This highlights the expressive power of the function class under composition: smooth basis functions can approximate nonsmooth nonlinearities when trained over multiple steps.

\begin{figure*}[h]
    \centering
    \begin{subfigure}[t]{0.49\textwidth}
        \centering
        \includegraphics[height=1.9in]{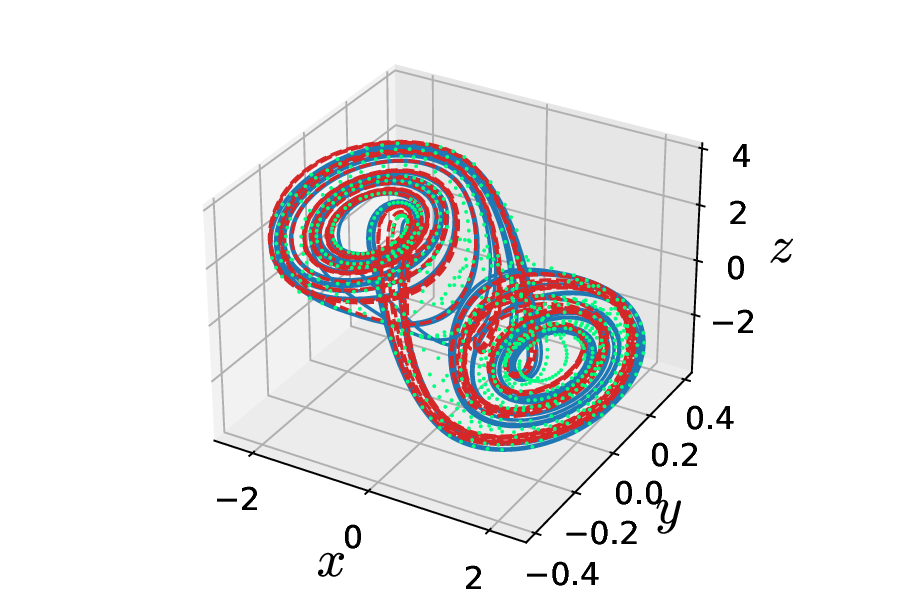}
        \caption{Space}
        \label{chua_space}
    \end{subfigure}%
    ~ 
    \begin{subfigure}[t]{0.49\textwidth}
        \centering
        \includegraphics[height=1.9in]{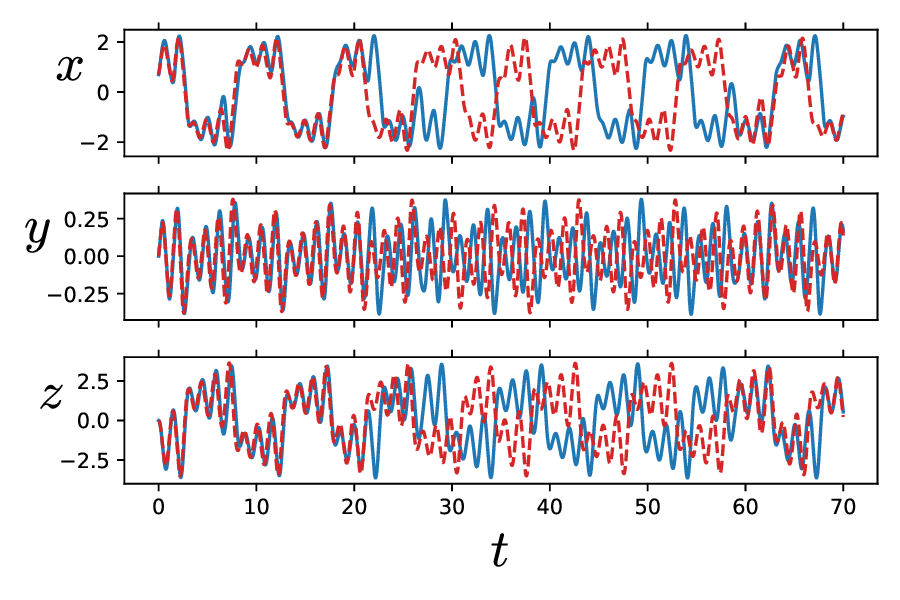}
        \caption{Time}
        \label{chua_time}
    \end{subfigure}
    \caption{Chua double-scroll. (a) Phase portrait showing training data (teal), true trajectory (solid blue), and trajectory generated by the identified model (dashed red).  (b) Time-series comparison from the same initial condition. There is good agreement for short times, 
    and the subsequent divergence is consistent with chaotic dynamics.}
    \label{fig:chua}
\end{figure*}

In particular, this model shows great tracking accuracy for about 10 Lyapunov time units as seen in \cref{chua_time} but diverges eventually. The statistics $\tilde{\bm \mu} = [-0.05,  0, 0.05]$ and $\tilde{\bm \sigma} = [1.39, 0.17, 1.75]$ are close to that of the true models: $\bm \mu = [0.02,  0, -0.02]$ and $\bm \sigma = [1.41, 0.17, 1.78].$ This agreement is partially explained by \cref{mu_error}--\cref{std_error} in ~\cref{sec:bounds_statistics}, which show that small trajectory error implies accurate estimation of statistical quantities over finite time horizons. Combined with \cref{l2_norm_eqv}, this ensures that the learned model reproduces short-term statistics accurately over the training window. In chaotic systems, trajectory errors grow exponentially beyond this window, so the RMSE is no longer small.  However, statistical quantities depend on the invariant measure of the dynamics rather than pointwise trajectory alignment. The observed agreement therefore indicates that the learned model approximates the invariant distribution of the attractor, leading to accurate long-time averages despite trajectory divergence.  
The Lyapunov exponents for the fit model, $\tilde{\bm \lambda}_e = [0.26, 0, -3.85]$ show decent agreement with those for the true model, $\bm \lambda_e = [0.44,  0,  -4.24]$.

For this example, the models seem to be estimated consistently even from noisy data ($5\%$ NSR) using longer NODE horizons. The following model:
\begin{subequations}
\begin{align}
\dot{x} &= 14 y + 1.9 \sin(\tfrac{5}{3} x + 0.143 y - 0.111 z) - \tfrac{1}{3} x + \tfrac{1}{8} z, \label{chua_5_x}\\
\dot{y} &= 0.9 x - 0.8 y + 0.9 z + 0.1 \sin(\tfrac{5}{3} x + 0.143 y - 0.111 z),\label{chua_5_y}\\
\dot{z} &= -27.125 y - \tfrac{1}{8} z - 0.2 (x - \sin(\tfrac{5}{3} x + 0.143 y - 0.111 z)).\label{chua_5_z}
\end{align}
\end{subequations}
was identified from $m^* = 2560+H$ data points using a longer horizon, $H=64$ time-steps, but a narrower network ($L=3$ with $105$ parameters vs $L=10$ with $322$ parameters). The statistics $\tilde{\bm \mu} = [0.01, 0, -0.01]$ and $\tilde{\bm \sigma} = [1.45,  0.183, 1.88]$ are in good agreement with the true model. Although the positive Lyapunov exponent in $\tilde{\bm \lambda}_e = [0.23, 0, -2.84]$ is smaller, it is similar to \cref{chua_main_x}-\cref{chua_main_z}, and a trajectory is illustrated in \cref{fig:chua_5}.

\begin{figure*}[h]
    \centering
    \begin{subfigure}[t]{0.49\textwidth}
        \centering
        \includegraphics[height=1.9in]{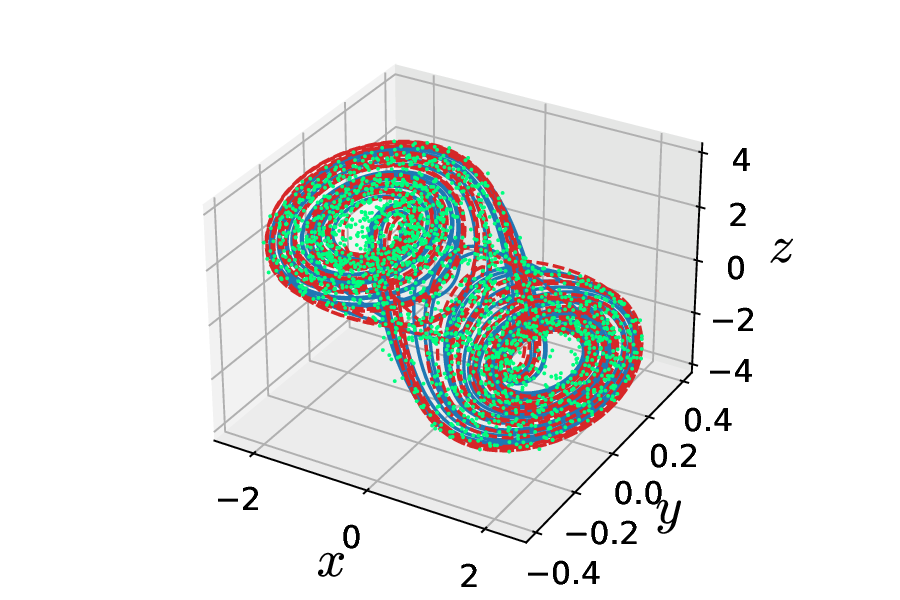}
        \caption{Space}
        \label{chua_5_space}
    \end{subfigure}%
    ~ 
    \begin{subfigure}[t]{0.49\textwidth}
        \centering
        \includegraphics[height=1.9in]{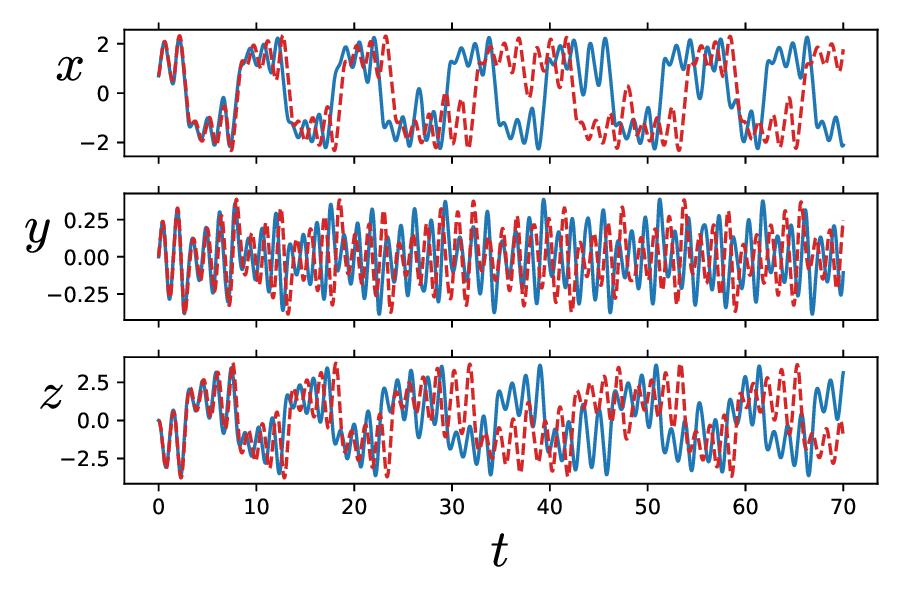}
        \caption{Time}
        \label{chua_5_time}
    \end{subfigure}
    \caption{Chua system with $5\%$ NSR (corresponding to additive $\sigma_{abs} \approx [0.07, 0.01, 0.09]$). (a) Training data in phase portrait (teal dots) seem a lot more scattered due to noise, but trajectory generated by the identified model (dashed red) and the true trajectory (solid blue) are close. (b) Time-series comparison from the same initial condition. There is good agreement for short times, 
    and the subsequent divergence is consistent with chaotic dynamics.}
    \label{fig:chua_5}
\end{figure*}

\subsubsection{Lorenz equations} \label{example:lorenz}

Finally, we consider the Lorenz system~\cite{lorenz1963deterministic}, originally derived to model two-dimensional atmospheric convection:
\begin{subequations}
\begin{align}
\dot{x} &= \sigma (y - x), \label{lorenz_x}\\
\dot{y} &= x(\rho - z) - y, \label{lorenz_y}\\
\dot{z} &= xy - \beta z, \label{lorenz_z}
\end{align}
\end{subequations}
with $(\sigma,\rho,\beta) = (10,28,8/3)$. Compared to previous chaotic examples, the Lorenz attractor has a much larger positive Lyapunov exponent. Consequently, this system is more sensitive to initial conditions -- thus errors accumulate faster -- and also happens to be quite sensitive to parametric changes   making making time-series forecasting and model estimation more challenging. When trained with $H=1$, the identified models can include non-Lipschitz terms such as inverse powers of the state variables. These models achieve low one-step error but are dynamically ill-posed, exhibiting singular behavior when states approach zero.
This illustrates a limitation of one-step losses: \textit{they do not penalize instability under composition} (\cref{accumulating_error}).

When the horizon is increased, the algorithm recovers the correct structure:
\begin{subequations}
\begin{align}
\dot{x} &= 10(y - x),\\
\dot{y} &= 28x - xz - y,\\
\dot{z} &= xy - 2.667 z, \label{lorenz_fit_z}
\end{align}
\end{subequations}
for example when we use $m^* = 2560+H$ data points, with $H=8$ time-steps, sampled from $p = 1$ trajectory over $T=[0,40]$, using $L = 3$ layers ($105$ parameters) after $3200$ epochs.  Also noteworthy is that despite the increased computational cost, the algorithm estimates these models in $25-50\%$ of the epochs in this case - and generally converging to equally good models in fewer epochs (although those epochs may take longer time to compute due to backpropagation-through-time (BPTT) over multiple time steps). This model is compared to the true model in \cref{Lorenz_exact}. The training data is noise free so and the estimated model is \textit{almost} exact - differing only in one parameter $8/3$ in \cref{lorenz_z} getting rounded off to $2.667$ in \cref{lorenz_fit_z} - so the estimated trajectory diverges from the true trajectory gradually over time.

\begin{figure*}[h]
    \centering
    \begin{subfigure}[t]{0.49\textwidth}
        \centering
        \includegraphics[height=1.9in]{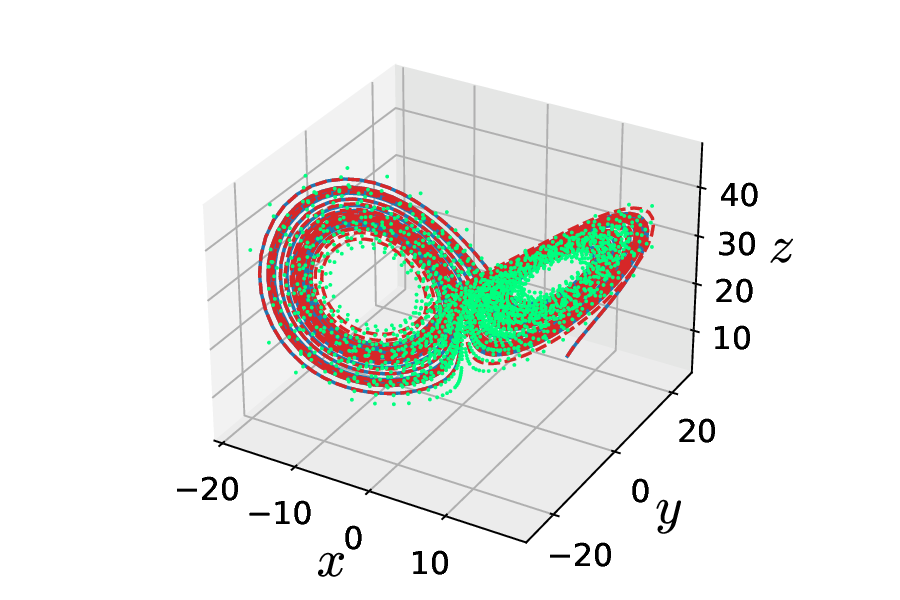}
        \caption{Space}\label{lorenz_exact_space}
    \end{subfigure}%
    ~ 
    \begin{subfigure}[t]{0.49\textwidth}
        \centering
        \includegraphics[height=1.9in]{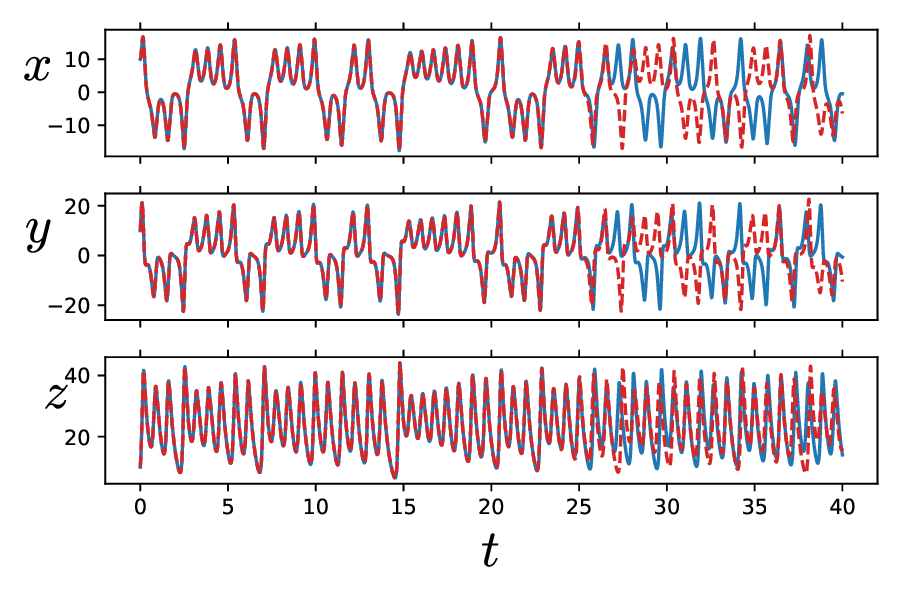}
        \caption{Time}\label{lorenz_exact_time}
    \end{subfigure}
    \caption{Lorenz system without noise. (a) Phase portrait showing training data (teal), true trajectory (solid blue), and trajectory generated by the identified model (dashed red).  (b) Time-series comparison, showing exact agreement between the exact and fit model.}
    \label{Lorenz_exact}
\end{figure*}
Consequently, the statistics $\tilde{\bm \mu} = [-0.03,  -0.03, 23.5]$ and $\tilde{\bm \sigma} = [7.91, 9.01, 8.65]$ are nearly identical to the true $\bm \mu = [ 0.07,  0.07, 23.5]$ and $\bm \sigma = [7.92, 9.01, 8.64]$, and so are the Lyapunov exponents $\tilde{\bm \lambda}_e = [0.91, 0, -14.6]$ vs $\bm \lambda_e = [0.91,  0,  -14.6].$ This behavior is partially explained by \cref{mu_error}--\cref{std_error}: over finite time horizons where the trajectory error remains small (as ensured by \cref{l2_norm_eqv}), the time-averaged quantities are also accurately captured. In chaotic systems, however, trajectory errors grow exponentially beyond this horizon, so the RMSE is no longer small and these bounds no longer directly apply. The continued agreement in $\bm \mu$ and $\bm \sigma$ therefore reflects a different mechanism: the learned model accurately reproduces the invariant measure of the attractor. Consequently, time-averaged quantities remain stable even as individual trajectories diverge. These results illustrate an important point:
minimizing a multi-step loss promotes models that are stable under composition and statistically accurate, even when exact trajectory matching is impossible due to chaos.

We found that model estimation for the Lorenz equations from noisy data (as defined in \cref{noise}) is challenging. We believe this is partly due to the relative noise used that's taken proportional to the dynamic range in the state data which is higher for the Lorenz attractor than the other examples presented. Additionally, the Lorenz attractor has larger positive Lyapunov exponent ($\lambda_e \approx 0.91$) than the \rossler ($\lambda_e \approx 0.13$) and Chua ($\lambda_e \approx 0.44$) attractors. For instance, consider a similar dataset as used before but with $5\%$ NSR. We found that at least $H=32$ time steps are required to estimate models that give desirable errors. For example, our algorithm estimated
\begin{subequations}
\begin{align}
  \dot{x} &= 10  y - 9.5  x - 1.667, \label{lorenz_5_x}\\
    \dot{y} &= 27  x - x  z - 0.667 y + 0.667, \label{lorenz_5_y}\\
    \dot{z} &= x  y - 2.667  z + 0.333  y + 0.667, \label{lorenz_5_z}
\end{align}
\end{subequations}
from $m^* = 2560+H$ data points using $H=128$ time-steps sampled from a single time-series over $T=40$ time units, corrupted by $5\%$ NSR, after $19600$ epochs. The statistics and Lyapunov exponents are: $\tilde{\bm \mu} = [ 0.08, 0.07, 23.3]$, $\tilde{\bm \sigma} = [7.97,  8.71, 8.16]$, and $\tilde{\bm \lambda}_e = [0.92, 0, -13.7]$. Time evolution of this estimated model is compared with that of the true model in \cref{lorenz_5}. From \cref{lorenz_5_space}, we can see that the estimated attractor (dashed red) is visually close to the true attractor (solid blue). The training data (teal dots) lying on it looks pretty scattered because of the noise, hinting at the difficulty in system identification. We can see that the estimated time-series (dashed red) tracks the true trajectory (solid blue) for more than 2 Lyapunov time-units. The trajectory diverges earlier than the aforementioned models due to larger difference in the vector field.

However, we would like to bring the readers' attention to the terms in \cref{lorenz_5_x}-\cref{lorenz_5_z}. They have the same dependencies on the state variables $x$, $y$ and $z$ as that in \cref{lorenz_x}-\cref{lorenz_z} but vary in the coefficients and additional constants. The constants here are particularly noteworthy since we found that such small spurious constants appear in most of the models estimated from data from the Lorenz equations corrupted with higher noise. These deviations can be interpreted as a small model perturbation $\g(\x)$ in the sense of \cref{sec:bounds}. 
While the bounds there apply to systems with stable fixed points or periodic orbits, they illustrate a general principle: small perturbations in the vector field lead to controlled deviations in trajectories over finite times.
The appearance of small constant terms is also consistent with the structure of $\mathrm{Loss}_H$, which averages errors over time steps. 
Small biases can produce only modest increases in the averaged multi-step loss, especially when compared to errors that grow exponentially under composition. 
As a result, such terms are weakly penalized unless explicitly regularized.
We posit that models estimated by minimizing the point-wise error averaged over a time-horizon are invariant to such constants as long as the other terms in the estimated model are exact.
\begin{figure*}[h]
    \centering
    
    \begin{subfigure}[t]{0.49\textwidth}
        \centering
        \includegraphics[height=1.9in]{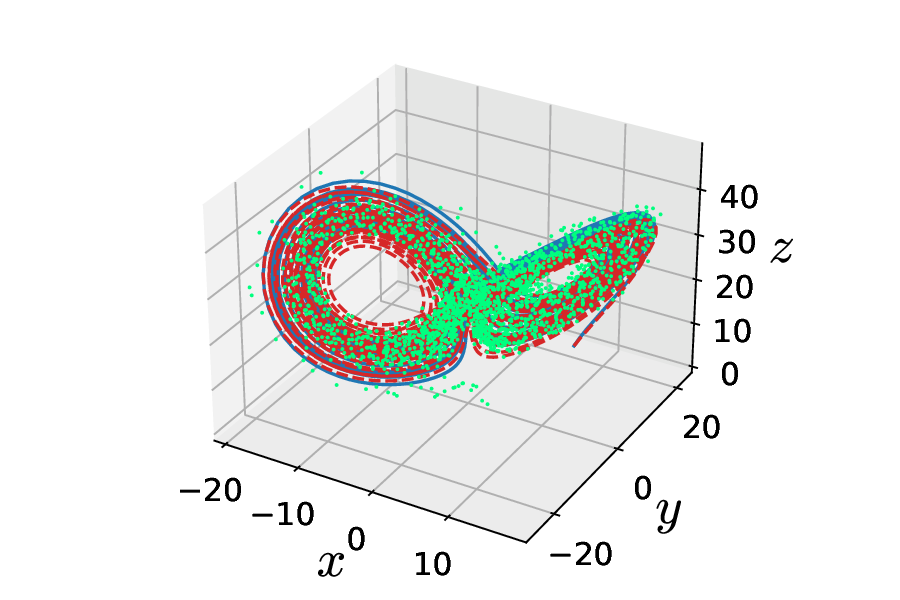}
        \caption{Space}
        \label{lorenz_5_space}
    \end{subfigure}%
    ~ 
    \begin{subfigure}[t]{0.49\textwidth}
        \centering
        \includegraphics[height=1.9in]{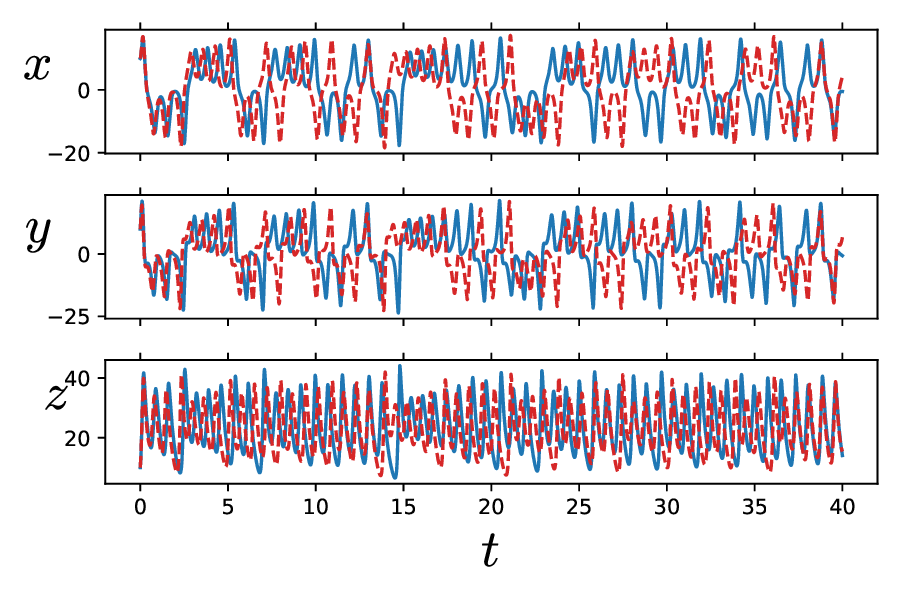}
        \caption{Time}
        \label{lorenz_5_time}
    \end{subfigure}
    \caption{Lorenz $5\%$ NSR (corresponding to additive $\sigma_{abs} \approx [0.41, 0.45, 1.28]$). (a) Phase portrait showing training data (teal), true trajectory (solid blue), and trajectory generated by the identified model (dashed red).  (b) Time-series comparison from the same initial condition. There is good agreement for short times, 
    and the subsequent divergence is consistent with chaotic dynamics.}
    \label{lorenz_5}
\end{figure*}

As we corrupt the training data with more noise, we find that the spurious terms increase even more. For example, we estimated the following model from $m^* = 2560+H$ data points using $H=256$ time-steps with $10\%$ NSR using similar hyperparameters after $12800$ epochs:
\begin{subequations}
\begin{align}
  \dot{x} &= 10 \; y - 9.667 \; x - 2, \label{lorenz_10_x}\\
    \dot{y} &= 24.5 \; x - x \z - 0.333\; y - 0.333, \label{lorenz_10_y}\\
    \dot{z} &= x \; y - 2.667 \; z - 0.333 \; y + 0.667 \;x + 0.333. \label{lorenz_10_z}
\end{align}
\end{subequations}
The statistics and Lyapunov exponents are: $\tilde{\bm \mu} = [ 0.03, 0.05, 21.2]$, $\tilde{\bm \sigma} = [7.59,  8.37, 7.78]$, and $\tilde{\bm \lambda}_e = [0.89, 0, -13.6]$, and a trajectory is illustrated in \cref{lorenz_10}.

\begin{figure*}[h]
    \centering
    \begin{subfigure}[t]{0.49\textwidth}
        \centering
        \includegraphics[height=1.9in]{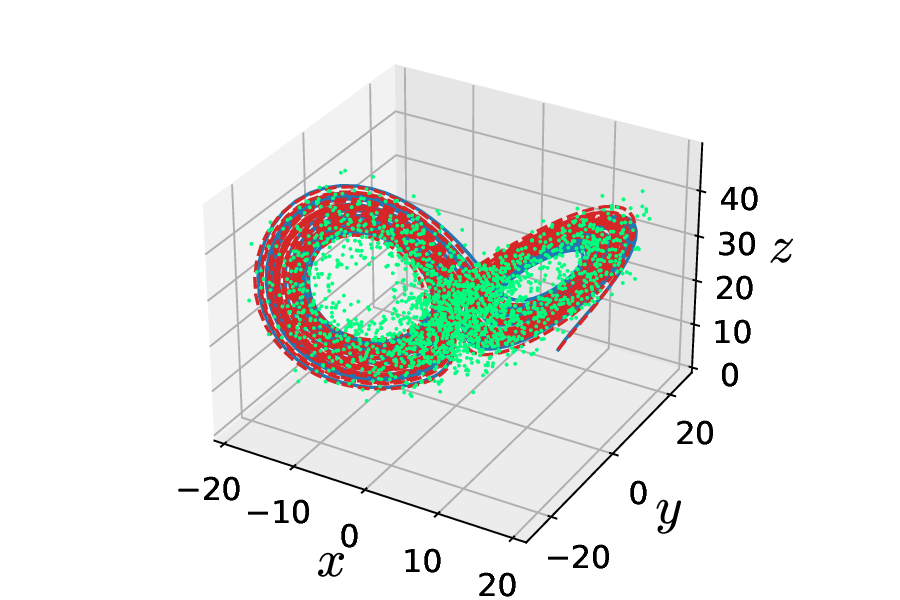}
        \caption{Space}
        \label{lorenz_10_space}
    \end{subfigure}%
    ~ 
    \begin{subfigure}[t]{0.49\textwidth}
        \centering
        \includegraphics[height=1.9in]{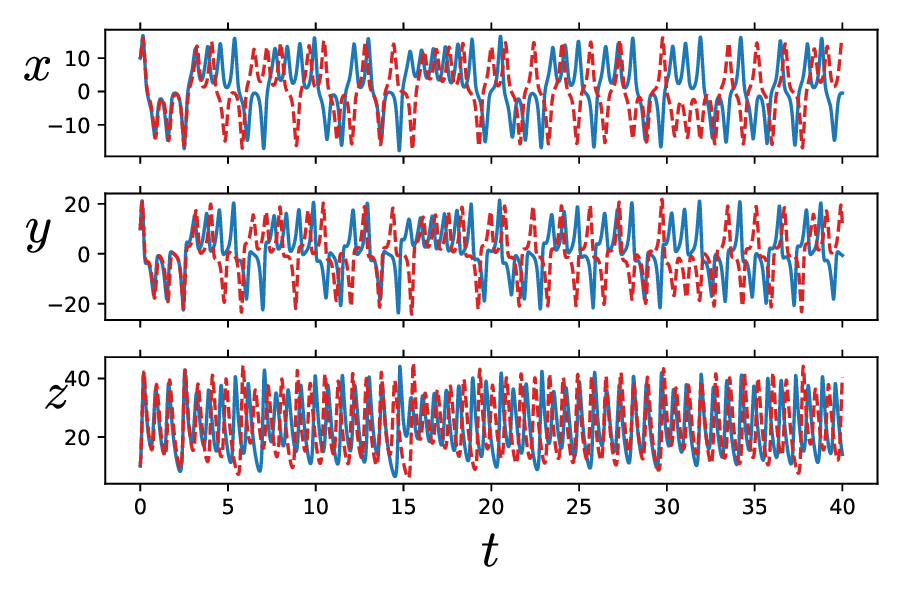}
        \caption{Time}
        \label{lorenz_10_time}
    \end{subfigure}

    \caption{Lorenz $10\%$ NSR (corresponding to additive $\sigma_{abs} \approx [0.79, 0.92, 2.46]$). (a) Phase portrait showing training data (teal), true trajectory (solid blue), and trajectory generated by the identified model (dashed red).  (b) Time-series comparison from the same initial condition. There is good agreement for short times, 
    and the subsequent divergence is consistent with chaotic dynamics.}
    \label{lorenz_10}
\end{figure*}

We would like the readers to note that estimating models that fit datasets with more noise is possible but our algorithm performs more akin to a conventional perceptron or dictionary based NODE without the desired sparsity. The estimated trajectories from those models are satisfactory but the models lose their sparsity. However, in that case we redirect the readers \cref{sec:bounds_statistics} where we show that the statistics of the estimated models are ``close'' to the true statistics.  This is not just for the optimization, but also for the models selected by the mAIC.

\medskip

Across the R\"ossler, Chua, and Lorenz examples, we observe a consistent pattern:
(i) minimizing $\mathrm{Loss}_H$ enforces accurate trajectory tracking over finite horizons (\cref{l2_norm_eqv}),
(ii) chaotic sensitivity leads to divergence beyond this horizon,
and (iii) statistical quantities such as $\bm \mu$ and $\bm \sigma$ remain accurate: over finite horizons this is guaranteed by \cref{mu_error}--\cref{std_error}, while over longer times it indicates that the learned models capture the invariant measure of the underlying attractor.
This illustrates how the Loss$_H$ framework balances short-term trajectory accuracy with long-term statistical fidelity.

\section{Discussion and Summary}
\label{sec:conclusions}

The central theme of this work is that learning dynamical systems from data requires enforcing consistency under composition, rather than merely fitting local, one-step behavior. While many existing approaches focus on minimizing one-step prediction error or directly regressing the vector field, such objectives do not guarantee that the learned dynamics behave correctly when iterated. In contrast, the multi-step loss considered here explicitly penalizes errors that accumulate over time, thereby favoring models that remain accurate under repeated composition.

This perspective leads to a fundamental distinction between local approximation and dynamical fidelity. A model may achieve small one-step error while still producing qualitatively incorrect trajectories when iterated, particularly in nonlinear or chaotic regimes. By incorporating multiple prediction steps into the training objective, the Loss$_H$ framework enforces agreement between predicted and true trajectories over a finite horizon. \cref{l2_norm_eqv} formalizes this connection by showing that minimizing the multi-step loss controls the average trajectory error over that horizon. At the same time, the comparison between $\mathrm{Loss}_H$ and $\mathrm{Loss}_{H+1}$ highlights how increasing the horizon penalizes models whose errors grow under iteration, thereby promoting stability.

These ideas are especially important for chaotic systems, where sensitivity to initial conditions makes long-term trajectory prediction inherently impossible. In this setting, the goal of model identification must be carefully interpreted. Rather than expecting trajectory-level agreement for all time, it is more appropriate to require (i) accurate reproduction of trajectories over a finite horizon and (ii) faithful recovery of long-term statistical properties. The results in this work show that multi-step training achieves both objectives: it controls trajectory error over the training window and produces models whose invariant statistics - such as means, standard deviations, and Lyapunov exponents - are in good agreement with those of the true system.
A key insight is that these two forms of accuracy are linked but not identical. The bounds in \cref{sec:bounds_statistics} show that small trajectory error implies accurate statistical estimates. However, in chaotic systems, trajectory error inevitably grows beyond the training horizon. The empirical results demonstrate that, despite this divergence, the learned models can still reproduce the correct statistical behavior. This reflects the fact that the multi-step loss enforces accuracy over finite windows distributed along trajectories, which is sufficient to capture invariant measures even when long-term trajectories diverge pointwise.

Another important observation is that multi-step training acts as an implicit regularizer. By penalizing error accumulation, it suppresses dynamically unstable or non-physical models that may fit the data locally but fail under iteration. This effect is particularly evident in the presence of noise, where longer optimization horizons improve robustness and lead to models that better capture the underlying structure of the dynamics. When combined with sparsity-promoting regularization, this yields parsimonious models that are both interpretable and dynamically consistent.

From a computational perspective, these benefits come at a cost. Multi-step training requires repeated numerical integration and backpropagation through compositions of the learned dynamics, leading to deeper computational graphs and increased training time. This trade-off between computational efficiency and dynamical accuracy is intrinsic to the problem. In practice, it can be managed through choices of horizon length, network architecture, and optimization strategy, suggesting that adaptive or staged training procedures may be beneficial.

Overall, the results of this work support the view that multi-step training provides a principled framework for learning dynamical systems, particularly in regimes where long-term behavior matters. By aligning the training objective with the compositional nature of dynamical systems, it is possible to obtain models that not only fit data locally, but also reproduce the qualitative and statistical features that define the underlying dynamics.

Several directions for future work naturally emerge from this framework. One important avenue is the development of adaptive training strategies in which the NODE horizon is selected dynamically, balancing computational cost with control of error growth. Extending the approach to settings with partial observations, including latent-variable and delay-embedding formulations, would further broaden its applicability. We also found that models estimated from noisy data may violate symmetries that the underlying dynamics possess, so equivariant or invariant neural network architectures could respect the symmetries of the system corrupted by noise. Finally, on the theoretical side, it would be valuable to strengthen the connection between multi-step training objectives and the accurate recovery of invariant quantities such as Lyapunov exponents and stationary distributions.

\appendix

\section{Bounds on Dynamics for Perturbed Systems}
\label{sec:bounds}

\subsection{Systems with Stable Fixed Points}\label{appendix:fixed_point}

Suppose \cref{sys} has a stable hyperbolic fixed point $\x^*$, so that the real parts of the eigenvalues of $A \equiv D \f(\x^*)$ are all negative.  Now consider 
\begin{equation}\label{sys_pert}
    \dot \x = \f(\x) + \g(\x),
\end{equation}
with $\g(\x)$ small, bounded, and smooth.  Let $\z(t)$ be the difference between solutions to \cref{sys_pert} and $\x^*$.   Then, to leading order
\begin{equation}
    \frac{d\z}{dt} = A z + \g(\x^*) + \cdots.
\end{equation}
Ignoring the higher-order terms,
\begin{equation}
    \z(t) = e^{A t} \z(0) + \int_0^t e^{A(t-\tau)} \g(\x^*) d \tau,
\end{equation}
so
\begin{equation}
    \|z(t) \| \le \left\|e^{A t} \right\| \|\z(0) \| + \int_0^t \left\|e^{A (t - \tau)} \right\| \| \g(\x^*) \| d \tau.
\end{equation}
We now claim that
\begin{equation}
    \left\| e^{A t} \right\| \le C e^{-\alpha t},
\end{equation}
where
\begin{equation}
    \alpha = - \max_j {\rm Re}(\lambda_j) > 0
\end{equation}
corresponds to the least unstable eigenvalue.
This follows by considering $\dot{\x} = A \x$.  Because $A$ is Hurwitz, there exist positive definite matrices $P$ and $Q$ such that
\[
A^T P + P A = -Q.
\]
Let $V(\x) = \x^T P \x$ be a Lyapunov function.  We find
\[
\dot{V} = -\x^T Q \x \le -\lambda_{\min} (Q) \| \x \|^2.
\]
Thus, $\dot{V} <0$ for all $\x$, except when $\x = 0$ where $\dot{V} = 0$.  Therefore, the system $\dot{\x} = A \x$ is asymptotically stable, so
\[
\| \x(t) \| \le C e^{-\alpha t} \| \x(0) \|.
\]
Absorbing the norm of the initial condition into the constant,
\[
\| e^{A t} \| \le C e^{-\alpha t},
\]
as desired.  Thus,
\[
\| \z(t) \| \le C e^{-\alpha t} \| \z(0) \| + C \int_0^t e^{-\alpha (t - \tau)} \| \g(\x^*) \| d \tau.
\]
Finally, supposing $\| \g (\x) \| \le G$ for all $\x$,
\[
\| \z(t) \| \le C e^{-\alpha t} \| \z(0) \| + C G \underbrace{\int_0^t e^{-\alpha (t - \tau)} d \tau}_H.
\]
But
\[
H = \frac{1}{\alpha} (1 - e^{-\alpha t}) \le \frac{1}{\alpha},
\]
so
\[
\| \z(t) \| \le C e^{-\alpha t} \| \z(0) \| + \frac{C G}{\alpha}.
\]
That is, solutions to \cref{sys_pert} asymptotically stay close to the fixed point of the unperturbed system \cref{sys}. 

\subsection{Systems with Stable Periodic Orbits}\label{appendix:limit_cycle}

We can make a similar argument regarding systems with a stable periodic orbit, but it will be useful to change to more convenient coordinates.
Suppose \cref{sys} has a stable hyperbolic periodic orbit $\x_\gamma(t)$ with period $T$.  Let $\lambda_1, \lambda_2, \cdots, \lambda_{n-1}$ be the non-trivial Floquet exponents of $\x_\gamma$, all with negative real parts.  For the perturbed vector field \cref{sys_pert}, we can do an augmented phase reduction to phase coordinates $\theta(\x)$ and isostable coordinates $y_j(\x)$, $j = 1,\cdots,n-1$, giving~\cite{wils16}
\begin{eqnarray}
    \frac{d \theta}{dt} &=& \omega + \Z(\theta) \cdot \g (\x_\gamma(\theta)) + \cdots, \label{phase_reduction} \\
    \frac{ d y_j}{dt} &=& \lambda_j y_j + \I(\theta) \cdot \g(\x_\gamma(\theta)) + \cdots, \quad j = 1,\cdots, n-1, \label{isostable_reduction}
\end{eqnarray}
where $\omega = 2 \pi / T$ and 
\begin{equation}
    \Z(\theta) = \left. \nabla \theta \right|_{\x_\gamma}, \qquad \I_j(\theta) = \left. \nabla \y_j \right|_{\x_\gamma}
\end{equation}
are the phase response curve and isostable response curves, respectively.
The isostable coordinates are useful for describing the dynamics transverse to the periodic orbit. 

Now, defining $\y = (y_1,\cdots,y_{n-1})$ and $\Lambda = {\rm diag}(\lambda_1,\cdots,\lambda_{n-1})$, \eqref{isostable_reduction} becomes
\begin{equation}
\dot{\y} = \Lambda \y + \b(\theta) + \cdots, \qquad b_j(\theta) \equiv \I_j(\theta) \cdot \g(\x_\gamma(\theta)),\qquad  j = 1,\cdots,n-1.
\end{equation}
To leading order, we see that
\begin{equation}
    \y(t) = e^{\Lambda t} \y(0) + \int_0^t e^{(t - \tau)} \; \b(\theta(\tau)) \; d \tau.
\end{equation}
We note that
\[
e^{\Lambda t} = {\rm diag} \left( e^{\lambda_1 t}, \cdots,e^{\lambda_{n-1} t} \right),
\]
and let
\[
\alpha = -\max_j {\rm Re}(\lambda_j) > 0
\]
correspond to the least stable eigenvalue.  Since $\| e^{\lambda_j t}\| \le e^{-\alpha t}$ for all $j = 1,2,\cdots,n-1$, $\| e^{\Lambda t}\| \le C e^{-\alpha t}$ for some constant $C>0$.  Thus
\[
\| e^{\Lambda t} \y(0) \| \le \| e^{\Lambda t}\| \;\|\y(0)\| \le C e^{-\alpha t} \|\y(0)\|.
\]
Next, we find the bound
\begin{eqnarray}
    \left\| \int_0^t e^{\Lambda (t - \tau)} \; \b(\theta(\tau) \; d \tau \right\| &\le& \int_0^t \left\| e^{\Lambda (t-\tau)} \; \b(\theta(\tau)) \right\| \; d \tau \nonumber \\
    & \le & \int_0^t \left\| e^{\Lambda (t-\tau)} \right\| \cdot \| \b(\theta(\tau) \| d \tau \nonumber \\
    &\le& C \int_0^t e^{-\alpha (t - \tau)} \; \| \b(\theta(\tau)) \| \; d \tau.
\end{eqnarray}
Now, suppose $\| \b(t) \| \le B$ for all $t$, which is a measure of the (bounded) size of the perturbation $\g(\x)$.  Then
\[
\int_0^t e^{-\alpha (t - \tau)} \; \| \b(\theta(\tau)) \| \; d \tau \le B \int_0^t e^{-\alpha (t - \tau)} d \tau = \frac{B}{\alpha} (1 - e^{-\alpha t}) \le \frac{B}{\alpha}.
\]
Putting this all together,
\begin{equation}
    \|y(t) \| \le C e^{-\alpha t} \| \y(0) \| + \frac{C B}{\alpha}.
\end{equation}
This bound shows that the asymptotic solution to the perturbed system \cref{sys_pert} remains in an ${\cal O}(\| g \|)$ tube close to the periodic orbit for the unperturbed system \cref{sys}.  However, the solutions for \cref{sys_pert} can drift along the neutrally stable phase direction of $\x_\gamma$; from \cref{phase_reduction}, this will be at a rate ${\cal O}(\| g\|)$. 
We note that the bounds that we have derived are consistent with the notion of persistence of hyperbolic sets under perturbations, namely that a small perturbation to a vector field leads to a small change in the attractor, provided non-degeneracy conditions are valid. 

These results are illustrated in \cref{approx_figs_2} for some models estimated for the Hopf normal form:
\begin{equation}\label{hopf_1}
    \dot{r} = 0.5 - (\nicefrac{1}{3}) r^{11/3}, \quad
    \dot{\theta} = 1,
\end{equation}
visualized in \cref{approx_hopf_1} or,
\begin{equation}\label{hopf_2}
    \dot{r} = 0.36 - 0.1 r^4(1 + r^{-0.1} + 2 r^{0.1}), \quad
    \dot{\theta} = r/4 + 0.8,
\end{equation}
visualized in \cref{approx_hopf_2}, and the Van der Pol oscillator:
\begin{equation}\label{vdp_1}
    \dot{x} = y, \quad
    \dot{y} = 15y - x - 14 x^2y + \sin(x-1),
\end{equation}
visualized in \cref{approx_VDP_1} or,
\begin{equation}\label{vdp_2}
\begin{split}
    \dot{x} = y, \quad \dot{y} &=  14.5 y - 14 x^2 y + 0.5 ((x^2 y - 1)(\sgn(x) - e^{-x}) -  \sin(x^2/2))\\
    & \quad - 0.125 |x|^{1.5} |x + 1| |\sgn(x) - e^{-x}|,
\end{split}
\end{equation}
visualized in \cref{approx_VDP_2}. We have verified numerically that these perturbed vector fields remain close to the true vector fields in a neighborhood of the periodic orbit, except for short-lived deviations for the van der Pol equations when the trajectory has very fast dynamics.  This is consistent with $\|\g(\x)\|$ remaining small, which means that the asymptotic solutions to the perturbed systems \cref{sys_pert} remain close to the periodic orbit for the unperturbed system.

\section{Heuristic analysis of trajectory error growth}
\label{appendix:trajectory_tracking}

The following derivation is heuristic and intended to illustrate how
modeling errors can accumulate under repeated composition of the learned
dynamics. In particular, it provides intuition for why multi-step
prediction errors may grow with the forecast horizon in unstable or
chaotic systems.
Let
\[
\dot{\x} = \f(\x),
\qquad
\dot{\tilde{\x}} = \tilde{\f}(\tilde{\x}),
\]
where
\[
\tilde{\f}(\x) = \f(\x) + \g(\x),
\]
and $\g$ represents the modeling error.
Assume that the true and predicted trajectories are initialized from the
same initial condition:
\[
\x(0) = \tilde{\x}(0) = \x_0,
\]
and define the trajectory error
\[
\delta(t) := \tilde{\x}(t) - \x(t).
\]
Then
\begin{equation}
\delta(t) = \int_0^t \left( \tilde{\f}(\tilde{\x}(\tau)) - \f(\x(\tau)) \right) \,d\tau.
\end{equation}
Substituting $\tilde{\f} = \f + \g$ gives
\begin{equation}
\delta(t) = \int_0^t \left( \f(\tilde{\x}(\tau)) - \f(\x(\tau)) + \g(\tilde{\x}(\tau)) \right) \,d\tau.
\end{equation}
Using $\tilde{\x} = \x + \delta$ and formally linearizing for small
$\delta$,
\begin{equation}
\f(\x+\delta) \approx \f(\x) + \nabla \f(\x)\delta,
\end{equation}
\begin{equation}
\g(\x+\delta) \approx \g(\x) + \nabla \g(\x)\delta.
\end{equation}
Neglecting higher-order terms yields the approximate error dynamics
\begin{equation}
\delta(t) \approx \int_0^t \left( \nabla \f(\x(\tau)) + \nabla \g(\x(\tau)) \right)\delta(\tau) + \g(\x(\tau)) \,d\tau.
\label{difference_norm}
\end{equation}
Taking norms and applying the triangle inequality,
\begin{equation}
\|\delta(t)\| \lesssim \int_0^t \|\nabla \f(\x(\tau))\|\,\|\delta(\tau)\| \,d\tau
+
\int_0^t \|\nabla \g(\x(\tau))\|\,\|\delta(\tau)\| \,d\tau
+
\int_0^t \|\g(\x(\tau))\| \,d\tau.
\end{equation}
This expression illustrates two mechanisms contributing to trajectory
divergence:
\begin{enumerate}
\item direct forcing from the modeling error $\g$, and
\item amplification of existing trajectory discrepancies through the
linearized dynamics.
\end{enumerate}
In unstable or chaotic systems, repeated composition of the learned flow
map can amplify small local modeling errors, leading to growth of
multi-step prediction errors over longer horizons.

\section{Bounds on Statistics} \label{sec:bounds_statistics}

In this appendix, we show that pointwise trajectory error, measured via RMSE, provides explicit control over first and second order statistics.

We denote the true data points as $\x(j) \in \R^n$ and the data points generated by the identified model as $\tilde{\x}(j) \in \R^n$. Then we consider the error between the true and predicted trajectories by averaging the point-wise $\ell_2$ distance over the number of data points using the Root Mean Squared Error (RMSE) as
\begin{equation}\label{rmse}
    \mathrm{RMSE} = \sqrt{\frac{1}{m} \sum_{j = 1}^m \| \x (j) - \tilde \x(j)\|^2_2},
\end{equation}
where $\|\cdot\|_2$ is the point-wise $\ell^2$ norm defined as 
\begin{equation}\label{2_norm}
    \| \x \|_2 = \sqrt{\sum_{i = 1}^n x_i^2}.
\end{equation}

A common way to quantify the performance of estimated models for chaotic dynamics is through first and second order statistics. We define the state-wise mean $\bm \mu$ of a dataset $\{\x(j)\}_{j = 1}^m$ as
\begin{equation}\label{mean}
    \mu_i = \frac{1}{m} \sum_{j = 1}^m x_i(j),
\end{equation}
and the state-wise standard deviation $\bm \sigma$ as
\begin{equation}\label{stdev}
    \sigma_i = \sqrt{\frac{1}{m} \sum_{j=1}^m (x_i(j) - \mu_i)^2},
\end{equation}
with corresponding predicted quantities $\tilde{\bm \mu}$ and $\tilde{\bm \sigma}$.

\begin{proposition}\label{mu_error}
The RMSE between a trajectory generated by the actual model and a trajectory generated by an estimated model gives an upper bound on the $\ell^2$ error between the averages of those trajectories:
\begin{equation}
\|\bm \mu - \tilde{\bm \mu}\|_2
\le
\mathrm{RMSE}.
\end{equation}
\end{proposition}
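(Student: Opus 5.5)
The plan is to write the difference of means as an average of pointwise errors and then apply convexity in the form of Jensen's inequality, or equivalently Cauchy--Schwarz. First, by linearity of \cref{mean}, applied componentwise,
\[
\bm \mu - \tilde{\bm \mu} = \frac{1}{m}\sum_{j=1}^m \bigl(\x(j) - \tilde \x(j)\bigr) \equiv \frac{1}{m}\sum_{j=1}^m \bm e(j),
\]
where $\bm e(j)$ denotes the pointwise error vector. So the mean error is just the arithmetic average of the error vectors.

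Second, the triangle inequality for $\|\cdot\|_2$ gives
\[
\|\bm \mu - \tilde{\bm \mu}\|_2 \le \frac{1}{m}\sum_{j=1}^m \|\bm e(j)\|_2 .
\]
Third, I apply Cauchy--Schwarz in $\R^m$ to the vectors $(1,\dots,1)$ and $(\|\bm e(1)\|_2,\dots,\|\bm e(m)\|_2)$. This is the same as the power-mean inequality, i.e. Jensen for the convex map $s\mapsto s^2$. It yields
\[
\frac{1}{m}\sum_{j=1}^m \|\bm e(j)\|_2 \le \sqrt{\frac{1}{m}\sum_{j=1}^m \|\bm e(j)\|_2^2} = \mathrm{RMSE},
\]
by \cref{rmse}. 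Chaining the two inequalities gives the claim.

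An alternative route skips the triangle inequality: expand $\|\frac1m\sum_j \bm e(j)\|_2^2$ and apply Jensen directly to the convex function $\|\cdot\|_2^2$ on $\R^n$. This gives $\|\bm\mu-\tilde{\bm\mu}\|_2^2 \le \frac1m\sum_j\|\bm e(j)\|_2^2$, and taking square roots finishes.

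There is no real obstacle; the argument is a few lines long. The only point needing care is that both averages run over the same index set $j=1,\dots,m$ with identical weights $1/m$. The trajectories must therefore be sampled at matching times, which is implicit in how RMSE pairs $\x(j)$ with $\tilde\x(j)$.
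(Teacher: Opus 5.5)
Your proof is correct, and your ``alternative route'' (Jensen's inequality for the convex function $\|\cdot\|_2^2$ on $\R^n$, then taking square roots) is exactly the paper's argument. Your main route splits that single Jensen step into the triangle inequality followed by the scalar power-mean (Cauchy--Schwarz) inequality, and it reaches the same bound.
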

\begin{proof}
We write
\begin{equation}
\bm \mu - \tilde{\bm \mu}
=
\frac{1}{m} \sum_{j=1}^m \big(\x(j) - \tilde{\x}(j)\big).
\end{equation}
Using the inequality
\begin{equation}
\left\| \frac{1}{m} \sum_{j=1}^m \mathbf{v}_j \right\|_2^2
\le
\frac{1}{m} \sum_{j=1}^m \|\mathbf{v}_j\|_2^2,
\end{equation}
which follows from Jensen's inequality applied to the convex function $\|\cdot\|_2^2$,
we obtain
\begin{equation}
\|\bm \mu - \tilde{\bm \mu}\|_2^2
\le
\frac{1}{m} \sum_{j=1}^m \|\x(j) - \tilde{\x}(j)\|_2^2.
\end{equation}
Taking square roots yields the result.
\end{proof}

% \medskip

% \medskip
\begin{proposition}\label{std_error}
\textbf{(Control of standard deviation under bounded trajectories).}
Suppose that the true and predicted trajectories satisfy a uniform bound
\[
\|\x(j)\|_2 \le M, \quad \|\tilde{\x}(j)\|_2 \le \tilde{M}
\quad \text{for all } j=1,\dots,m.
\]
Then the difference between the variances satisfies
\begin{equation}
\|\bm \sigma^2 - \tilde{\bm \sigma}^2\|_2 \le 2(M+\tilde{M}) \, \mathrm{RMSE}.
\end{equation}
Moreover, the difference between the standard deviations satisfies
\begin{equation}
\|\bm \sigma - \tilde{\bm \sigma}\|_2^2
\le
C \, \mathrm{RMSE},
\end{equation}
where $C = 2\sqrt{n}(M+\tilde{M}) >0$ depends only on $M$ and $n$ (and not on $m$).
\end{proposition}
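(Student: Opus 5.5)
The plan is to work componentwise with the identity $\sigma_i^2 = \frac{1}{m}\sum_{j=1}^m x_i(j)^2 - \mu_i^2$, and likewise for $\tilde\sigma_i^2$. Subtracting gives
\[
\sigma_i^2 - \tilde\sigma_i^2 = \frac{1}{m}\sum_{j=1}^m \big(x_i(j)-\tilde x_i(j)\big)\big(x_i(j)+\tilde x_i(j)\big) - (\mu_i-\tilde\mu_i)(\mu_i+\tilde\mu_i).
\]
Collecting the components into a vector, this is $\bm\sigma^2-\tilde{\bm\sigma}^2 = \mathbf{a} - \mathbf{b}$. Here $\mathbf{a}$ is the average over $j$ of the entrywise products $(\x(j)-\tilde\x(j))\odot(\x(j)+\tilde\x(j))$, and $\mathbf{b} = (\bm\mu-\tilde{\bm\mu})\odot(\bm\mu+\tilde{\bm\mu})$. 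I will bound $\|\mathbf{a}\|_2$ and $\|\mathbf{b}\|_2$ separately and conclude with the triangle inequality.

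The key elementary fact is $\|\mathbf{u}\odot\mathbf{v}\|_2 \le \|\mathbf{v}\|_\infty\|\mathbf{u}\|_2$. By the uniform bounds, $\|\x(j)+\tilde\x(j)\|_\infty \le \|\x(j)\|_2+\|\tilde\x(j)\|_2 \le M+\tilde M$. Also $\|\bm\mu\|_2\le M$ and $\|\tilde{\bm\mu}\|_2\le\tilde M$, because averaging does not increase the norm; hence $\|\bm\mu+\tilde{\bm\mu}\|_\infty\le M+\tilde M$.

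For $\mathbf{a}$, the triangle inequality and the product fact give $\|\mathbf{a}\|_2 \le (M+\tilde M)\frac1m\sum_j\|\x(j)-\tilde\x(j)\|_2$. Cauchy--Schwarz (equivalently, Jensen as in \cref{mu_error}) bounds the average of norms by $\mathrm{RMSE}$. For $\mathbf{b}$, I have $\|\mathbf{b}\|_2 \le (M+\tilde M)\|\bm\mu-\tilde{\bm\mu}\|_2 \le (M+\tilde M)\,\mathrm{RMSE}$, directly by \cref{mu_error}. Adding the two bounds yields the first claim, $\|\bm\sigma^2-\tilde{\bm\sigma}^2\|_2\le 2(M+\tilde M)\,\mathrm{RMSE}$.

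For the standard deviations, I use that $\sigma_i,\tilde\sigma_i\ge 0$, so $(\sigma_i-\tilde\sigma_i)^2 \le |\sigma_i-\tilde\sigma_i|(\sigma_i+\tilde\sigma_i) = |\sigma_i^2-\tilde\sigma_i^2|$. This avoids any division by $\sigma_i+\tilde\sigma_i$, which could be near zero. Summing over $i$ gives $\|\bm\sigma-\tilde{\bm\sigma}\|_2^2 \le \|\bm\sigma^2-\tilde{\bm\sigma}^2\|_1 \le \sqrt n\,\|\bm\sigma^2-\tilde{\bm\sigma}^2\|_2$. Combining with the first claim gives the constant $C=2\sqrt n(M+\tilde M)$, independent of $m$.

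The only delicate point is this last step, namely not trying to Lipschitz-bound the square root. The bound $(\sigma_i-\tilde\sigma_i)^2\le|\sigma_i^2-\tilde\sigma_i^2|$ is what explains why the estimate is on the squared norm and scales like $\mathrm{RMSE}$ rather than $\mathrm{RMSE}^2$. I would also remark that $C$ depends on $\tilde M$ as well as on $M$ and $n$.
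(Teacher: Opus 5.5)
Your proof is correct and is essentially the paper's argument. Both split $\sigma_i^2-\tilde\sigma_i^2$ into a mean-of-squares term and a squared-mean term, and bound each by $(M+\tilde M)$ times an RMS error via $a^2-b^2=(a-b)(a+b)$ and Cauchy--Schwarz. Both then finish with $|\sigma_i-\tilde\sigma_i|^2\le|\sigma_i^2-\tilde\sigma_i^2|$ and $\|\cdot\|_1\le\sqrt{n}\,\|\cdot\|_2$.

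The differences are organizational. The paper bounds its $|A_i|$ and $|B_i|$ componentwise by the per-state RMS error and then sums squares, which also yields a per-state bound; you work with vector norms and reuse \cref{mu_error}. Your derivation of the square-root inequality is cleaner, since it never divides by $|a-b|$ or by $\sigma_i+\tilde\sigma_i$.
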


% \medskip

% \noindent
\begin{proof}
\textit{Decomposing the variance.}

For each component $i = 1, \cdots, n$,
\begin{equation}
\sigma_i^2 = \frac{1}{m} \sum_{j=1}^m (x_i(j) - \mu_i)^2
=
\frac{1}{m}\sum_{j=1}^m x_i(j)^2 - \mu_i^2,
\end{equation}
\begin{equation}
\tilde{\sigma}_i^2 = \frac{1}{m} \sum_{j=1}^m (\tilde x_i(j) - \tilde \mu_i)^2
=
\frac{1}{m}\sum_{j=1}^m \tilde{x}_i(j)^2 - \tilde{\mu}_i^2.
\end{equation}
Thus,
\begin{equation}
\sigma_i^2 - \tilde{\sigma}_i^2
=
\underbrace{
\frac{1}{m}\sum_{j=1}^m \big(x_i(j)^2 - \tilde{x}_i(j)^2\big)
}_{A_i}
-
\underbrace{
\big(\mu_i^2 - \tilde{\mu}_i^2\big)
}_{B_i}.
\end{equation}

% \medskip
% 
\textit{Bounding the quadratic term $A_i$.}

\noindent
Using $a^2 - b^2 = (a-b)(a+b)$,
\begin{equation}
|x_i(j)^2 - \tilde{x}_i(j)^2|
=
|x_i(j) - \tilde{x}_i(j)| \, |x_i(j) + \tilde{x}_i(j)|.
\end{equation}
By boundedness,
\[
|x_i(j) + \tilde{x}_i(j)| \le (M+\tilde{M}),
\]
so
\begin{equation}
|x_i(j)^2 - \tilde{x}_i(j)^2|
\le
(M+\tilde{M})\, |x_i(j) - \tilde{x}_i(j)|.
\end{equation}
Therefore,
\begin{equation}
|A_i| = \left| \frac{1}{m} \sum_{j=1}^m (x_i(j)^2 - \tilde{x}_i(j)^2) \right| \le \frac{1}{m} \sum_{j=1}^m |x_i(j)^2 - \tilde{x}_i(j)^2| \le \frac{M+\tilde{M}}{m} \sum_{j=1}^m |x_i(j) - \tilde{x}_i(j)|.
\label{Ai}
\end{equation}
Now, we apply the Cauchy-Schwarz inequality $|\langle u,v \rangle| \le \| u \|_2 \| v \|_2$ with
\[
u_j = |x_i(j) - \tilde{x}_i(j)|, \qquad v_j = 1.
\]
Here
\begin{equation}
\langle u,v \rangle = \sum_{j=1}^m |x_i(j) - \tilde{x}_i(j)|, \qquad 
\| u \|_2 = \left( \sum_{j=1}^m |x_i(j) - \tilde{x}_i(j)|^2 \right)^{1/2}, \qquad \|v\|_2 = \sqrt{m},
\end{equation}
giving
\[
\sum_{j=1}^m |x_i(j) - \tilde{x}_i(j)|
\le
\sqrt{m} \left( \sum_{j=1}^m |x_i(j) - \tilde{x}_i(j)|^2 \right)^{1/2}.
\]
Substituting into \cref{Ai} gives
\begin{equation}
|A_i| \le (M+\tilde{M}) \left( \frac{1}{m} \sum_{j=1}^m (x_i(j) - \tilde{x}_i(j))^2 \right)^{1/2}.
\end{equation}

\medskip

% \noindent
\textit{Bounding the mean term $B_i$.}

\noindent
Using $a^2 - b^2 = (a-b)(a+b)$,
\begin{equation}
|\mu_i^2 - \tilde{\mu}_i^2|
=
|\mu_i - \tilde{\mu}_i| \, |\mu_i + \tilde{\mu}_i|.
\end{equation}
Since $|\mu_i| \le M, |\tilde{\mu}_i| \le \tilde{M}$,
\[
|\mu_i + \tilde{\mu}_i| \le (M+\tilde{M}),
\]
so
\begin{equation}
|B_i| \le (M+\tilde{M})\, |\mu_i - \tilde{\mu}_i|
= \frac{M+\tilde{M}}{m} \left| \sum_{j=1}^m (x_i(j) - \tilde{x}_i(j)) \right| \le \frac{{M+\tilde{M}}}{m} \sum_{j=1}^m |x_i(j) - \tilde{x}_i(j)|.
\end{equation}
Applying Cauchy-Schwarz again,
\begin{equation}
|B_i| \le (M+\tilde{M}) \left( \frac{1}{m} \sum_{j=1}^m (x_i(j) - \tilde{x}_i(j))^2 \right)^{1/2}.
\end{equation}

% \noindent
\textit{Combining bounds.}
We have,
\begin{equation}
|\sigma_i^2 - \tilde{\sigma}_i^2| \le |A_i| + |B_i|
\le
2(M+\tilde{M}) \left(\frac{1}{m} \sum_{j=1}^m (x_i(j) - \tilde{x}_i(j))^2\right)^{1/2}.
\end{equation}
Squaring both sides,
\begin{equation}
|\sigma_i^2 - \tilde{\sigma}_i^2|^2 \le \frac{4(M+\tilde{M})^2}{m} \sum_{j=1}^m (x_i(j) - \tilde{x}_i(j))^2,
\end{equation}
so, summing over $i=1,\dots,n$,
\begin{equation}
\|\bm \sigma^2 - \tilde{\bm \sigma}^2\|_2^2 \le \frac{4(M+\tilde{M})^2}{m} \sum_{j=1}^m \| \x(j) - \tilde{\x}(j) \|_2^2. 
\end{equation}
Thus, 
\begin{equation}
\|\bm \sigma^2 - \tilde{\bm \sigma}^2\|_2
\le
2(M+\tilde{M}) \, \mathrm{RMSE}.
\label{variance_bound}
\end{equation}

% \noindent
\textit{Pass from variance to standard deviation:}

\noindent
For nonnegative $a,b$, we have
\[
(\sqrt{a} + \sqrt{b})^2 = a + b + 2 \sqrt{a b} \ge a + b \ge |a - b|.
\]
Thus
\[
|\sqrt{a} - \sqrt{b}|^2
=
\frac{|a-b|^2}{(\sqrt{a} + \sqrt{b})^2} \le \frac{|a-b|^2}{|a-b|} = |a - b|.
\]
Applying this with $a=\sigma_i^2$, $b=\tilde{\sigma}_i^2$ gives
\begin{equation}
|\sigma_i - \tilde{\sigma}_i|^2
\le
|\sigma_i^2 - \tilde{\sigma}_i^2|.
\end{equation}
Summing over $i = 1, \cdots, n$,
\begin{equation}
\|\bm \sigma - \tilde{\bm \sigma}\|_2^2
\le
\|\bm \sigma^2 - \tilde{\bm \sigma}^2\|_1.
\end{equation}
Using the equivalence of norms result $\|v\|_1 \le \sqrt{n}\|v\|_2$,
\begin{equation}
\|\bm \sigma - \tilde{\bm \sigma}\|_2^2
\le
\sqrt{n}\,\|\bm \sigma^2 - \tilde{\bm \sigma}^2\|_2.
\end{equation}
Finally, applying \cref{variance_bound},
\begin{equation}
\|\bm \sigma - \tilde{\bm \sigma}\|_2^2
\le
2\sqrt{n}(M+\tilde{M}) \, \mathrm{RMSE},
\end{equation}
for a constant $C = 2\sqrt{n}(M+\tilde{M})$ depending only on $M$, $\tilde{M}$, and $n$.
This weaker bound arises because the mapping from variance to standard deviation involves a square root, which is not Lipschitz near zero. As a result, even linear control of the variance error yields only quadratic control of the standard deviation error without additional assumptions.
\end{proof}
% \hfill $\square$

% \medskip

% \noindent
The above two propositions show that minimizing the pointwise RMSE also minimizes the difference in first and second order statistics without having to explicitly minimize them. In particular, accurate trajectory reconstruction implies accurate estimation of time-averaged means and variances, which are commonly used to characterize chaotic dynamics. Alongside providing explicit upper bounds, these results also avoid the need to directly penalize statistical quantities such as means and standard deviations during training, which may otherwise be estimated less accurately in a batch-wise setting.

% \medskip

% \noindent
For the true model, ideally, increasing the NODE horizon does not increase the loss, as the $\ell^2$ error is averaged over the number of prediction steps. However, numerical integration errors also play a role. In particular, for systems that require smaller time steps for accurate numerical integration, increasing the NODE horizon can increase accumulated error even when the model is identified exactly. This effect is especially pronounced in stiff systems, such as the Van der Pol oscillator in the stiff regime, where shorter horizons may yield small errors while longer horizons introduce significant numerical artifacts that can affect the loss and resulting gradients.

\section{Computation details}\label{appendix:compute}
As is common in machine learning, due to the multiple local minima in parameter space of the optimization landscape, we use a K-fold training approach. The total training data is split into K(=5) folds with each fold having $80\%$ of the total available data for training and $20\%$ for testing. Given our small sizes of our NNs and datasets, these K-folds can be executed on a laptop (AMD Ryzen 9 4900 HS, Nvidia RTX 2060m) or desktop (Intel i9 10940X, Nvidia RTX 3080) with consumer grade CPUs and GPUs. More intense workloads such as using larger networks, optimizing over longer horizons, training on larger datasets, or even just post-training distillation from a large number of saved models, can benefit from \verb|PyTorch| and \verb|Python|'s parallelization libraries -- thus, the dataset sizes we use in \cref{chaotic_examples} are taken accordingly to split $80\%$ of $m$ points of training data into batches of $2^b$ data points. Accordingly, numerous small instantiations can be run simultaneously with parallelized K-folds or fewer larger instantiations can be run but with parallelized training and post-training including mAIC computations, visualizations, etc. 

\section*{Acknowledgments}
We would like to thank Colby Fronk and Jared Jonas for insightful discussions.

\vspace{-0.1in}

\bibliographystyle{siamplain}
\bibliography{references}

@article{wils16,
  title={Isostable reduction of periodic orbits},
  author={Wilson, D. and Moehlis, J.},
  journal={Physical Review E},
  volume={94},
  pages={052213},
  year={2016}
}

@article{fronk23,
  title={Interpretable polynomial neural ordinary differential equations},
  author={Fronk, Colby and Petzold, Linda},
  journal={Chaos},
  volume={33},
  number={4},
  year={2023},
  publisher={AIP Publishing}
}

@article{chua1986double,
  title={The double scroll family},
  author={Chua, LEONO and Komuro, Motomasa and Matsumoto, Takashi},
  journal={Transactions on Circuits and Systems},
  volume={33},
  number={11},
  pages={1072--1118},
  year={1986},
  publisher={IEEE}
}

@article{iyer2024expressive,
  title={Expressive Symbolic Regression for Interpretable Models of Discrete-Time Dynamical Systems},
  author={Iyer, Adarsh and Boddupalli, Nibodh and Moehlis, Jeff},
  journal={arXiv preprint arXiv:2406.06585},
  year={2024}
}

@inproceedings{wccm2024,
  title = {Using artificial neural networks for symbolic regression},
  author = {Boddupalli, N. and Moehlis, J. and Matchen, T.},
  booktitle={World Congress on Computational Mechanics (WCCM)},
  year = {2024}
}

@article{symanntex_paper,
  title={Symbolic regression via neural networks},
  author={Boddupalli, N and Matchen, T and Moehlis, J},
  journal={Chaos: An Interdisciplinary Journal of Nonlinear Science},
  volume={33},
  number={8},
  year={2023},
  publisher={AIP Publishing}
}

@article{POD,
  title={The proper orthogonal decomposition in the analysis of turbulent flows},
  author={Berkooz, Gal and Holmes, Philip and Lumley, John L},
  journal={Annual Review of Fluid Mechanics},
  volume={25},
  number={1},
  pages={539--575},
  year={1993},
  publisher={Annual Reviews 4139 El Camino Way, PO Box 10139, Palo Alto, CA 94303-0139, USA}
}

@article{colby_2,
  title={Training stiff neural ordinary differential equations with implicit single-step methods},
  author={Fronk, Colby and Petzold, Linda},
  journal={Chaos},
  volume={34},
  number={12},
  year={2024},
  publisher={AIP Publishing}
}

@article{colby_3,
  title={The vanishing gradient problem for stiff neural differential equations},
  author={Fronk, Colby and Petzold, Linda},
  journal={Chaos: An Interdisciplinary Journal of Nonlinear Science},
  volume={35},
  number={11},
  year={2025},
  publisher={AIP Publishing}
}

@inproceedings{takens2006detecting,
  title={Detecting strange attractors in turbulence},
  author={Takens, Floris},
  booktitle={Dynamical Systems and Turbulence, Warwick 1980: Proceedings of a symposium held at the University of Warwick 1979/80},
  pages={366--381},
  year={2006},
  organization={Springer}
}

@article{raissi2018multistep,
  title={Multistep neural networks for data-driven discovery of nonlinear dynamical systems},
  author={Raissi, Maziar and Perdikaris, Paris and Karniadakis, George Em},
  journal={arXiv preprint arXiv:1801.01236},
  year={2018}
}

@article{l_half_smooth_regularization,
  title={Batch gradient method with smoothing {L}$_{1/2}$ regularization for training of feedforward neural networks},
  author={Wu, Wei and Fan, Qinwei and Zurada, Jacek M and Wang, Jian and Yang, Dakun and Liu, Yan},
  journal={Neural Networks},
  volume={50},
  pages={72--78},
  year={2014},
  publisher={Elsevier}
}

@article{van_der_pol,
  title={On “relaxation-oscillations”},
  author={Van der Pol, Balth},
  journal={The London, Edinburgh, and Dublin Philosophical Magazine and Journal of Science},
  volume={2},
  number={11},
  pages={978--992},
  year={1926},
  publisher={Taylor \& Francis}
}

@article{discretize-optimize,
  title={Discretize-optimize vs. optimize-discretize for time-series regression and continuous normalizing flows},
  author={Onken, Derek and Ruthotto, Lars},
  journal={arXiv preprint arXiv:2005.13420},
  year={2020}
}

@article{sindy,

  title={Discovering governing equations from data by sparse identification of nonlinear dynamical systems},

  author={S. L. Brunton and J. L. Proctor and J. N. Kutz},

  journal={Proceedings of the National Academy of Sciences},

  volume={113},

  pages={3932-3937},

  year={2016}

}

@article{quad16,

  title={Prediction of dynamical systems by symbolic regression},

  author={M. Quade and M. Abel and K. Shafi and R. K. Niven and B. R. Noack},

  journal={Physical Review E},

  volume={94},

  pages={012214},

  year={2016}

}

@article{budi12,

  title={Applied {K}oopmanism},

  author={M. Budisic and R. Mohr and I. Mezic},

  journal={Chaos},

  volume={22},

  pages={047510},

  year={2012}

}

@article{arba17,

  title={Ergodic theory, dynamic mode decomposition, and computation of spectral properties of the {K}oopman operator},

  author={H. Arbabi and I. Mezic},

  journal={SIAM Journal on Applied Dynamical Sytems},

  volume={16},

  pages={2096-2126},

  year={2017}

}

@article{mezic2005spectral,
  title={Spectral properties of dynamical systems, model reduction and decompositions},
  author={Mezi{\'c}, Igor},
  journal={Nonlinear Dynamics},
  volume={41},
  number={1},
  pages={309--325},
  year={2005},
  publisher={Springer}
}

@article{DMD,
  title={Dynamic mode decomposition of numerical and experimental data},
  author={Schmid, Peter J},
  journal={Journal of Fluid Mechanics},
  volume={656},
  pages={5--28},
  year={2010},
  publisher={Cambridge University Press}
}

@Article{bong07,
  author = 	 {J. Bongard and H. Lipson},
  title = 	 {Automated reverse engineering of nonlinear dynamical systems},
  journal = 	 {Proc. Natl. Acad. Sci.},
  year = 	 {2007},
  volume = 	 {104},
  pages = 	 {9943},
}

@article{rossler1976chaotic,
  title={Chaotic behavior in simple reaction systems},
  author={R{\"o}ssler, Otto E},
  journal={Zeitschrift f{\"u}r Naturforschung A},
  volume={31},
  number={3-4},
  pages={259--264},
  year={1976},
  publisher={Verlag der Zeitschrift f{\"u}r Naturforschung}
}

@article{havok,
  title={Chaos as an intermittently forced linear system},
  author={Brunton, Steven L and Brunton, Bingni W and Proctor, Joshua L and Kaiser, Eurika and Kutz, J Nathan},
  journal={Nature Communications},
  volume={8},
  number={1},
  pages={19},
  year={2017},
  publisher={Nature Publishing Group UK London}
}

@article{quade2019glyph,
  title={Glyph: Symbolic Regression Tools},
  author={Quade, Markus and Gout, Julien and Abel, Markus},
  journal={Journal of Open Research Software},
  volume={7},
  number={1},
  article={19},
  year={2019},
  pages = {1-8},
  publisher={Ubiquity Press}
}

@article{lorenz1963deterministic,
  title={Deterministic nonperiodic flow},
  author={Lorenz, Edward N},
  journal={Journal of Atmospheric Sciences},
  volume={20},
  number={2},
  pages={130--141},
  year={1963}
}

@book{goodfellow2016deep,
  title={Deep Learning},
  author={Goodfellow, Ian and Bengio, Yoshua and Courville, Aaron},
  year={2016},
  publisher={MIT Press},
  address={Cambridge, Massachusetts}
}

@article{neural_odes,
  title={Neural ordinary differential equations},
  author={Chen, Ricky TQ and Rubanova, Yulia and Bettencourt, Jesse and Duvenaud, David K},
  journal={Advances in Neural Information Processing Systems},
  volume={31},
  year={2018}
}

@article{weak_sindy,
  title={Weak {SINDy}: Galerkin-based data-driven model selection},
  author={Messenger, Daniel A and Bortz, David M},
  journal={Multiscale Modeling \& Simulation},
  volume={19},
  number={3},
  pages={1474--1497},
  year={2021},
  publisher={SIAM}
}

@article{rosenfeld2022dynamic,
  title={Dynamic mode decomposition for continuous time systems with the {L}iouville operator},
  author={Rosenfeld, Joel A and Kamalapurkar, Rushikesh and Gruss, L Forest and Johnson, Taylor T},
  journal={Journal of Nonlinear Science},
  volume={32},
  pages={1--30},
  year={2022},
  publisher={Springer}
}

@article{l_half_regularization,
    title = {{L$_{1/2}$ regularization}},
    year = {2010},
    journal = {Science China Information Sciences},
    author = {Xu, ZongBen and Zhang, Hai and Wang, Yao and Chang, XiangYu and Liang, Yong},
    number = {6},
    month = {6},
    pages = {1159--1169},
    volume = {53},
    
    isbn = {1143201000},
    
    issn = {1674-733X}
}

@article{l1_regularization,
  title={Regression shrinkage and selection via the lasso},
  author={Tibshirani, Robert},
  journal={Journal of the Royal Statistical Society Series B: Statistical Methodology},
  volume={58},
  number={1},
  pages={267--288},
  year={1996},
  publisher={Oxford University Press}
}

@inproceedings{kingma2015adam,
  title={Adam: A Method for Stochastic Optimization},
  author={Kingma, Diederik P and Ba, Jimmy},
  booktitle={Proceedings of International Conference on Learning Representations},
  year={2014}
}

@inproceedings{transformers_1D,
  title={Predicting ordinary differential equations with transformers},
  author={Becker, S{\"o}ren and Klein, Michal and Neitz, Alexander and Parascandolo, Giambattista and Kilbertus, Niki},
  booktitle={International conference on machine learning},
  pages={1978--2002},
  year={2023},
  organization={PMLR}
}

@inproceedings{
odeformer,
title={{ODEF}ormer: Symbolic Regression of Dynamical Systems with Transformers},
author={St{\'e}phane d'Ascoli and S{\"o}ren Becker and Philippe Schwaller and Alexander Mathis and Niki Kilbertus},
booktitle={The Twelfth International Conference on Learning Representations},
year={2024},
}

@article{akaike1974new,
  title={A new look at the statistical model identification},
  author={Akaike, Hirotugu},
  journal={IEEE Transactions on Automatic Control},
  volume={19},
  number={6},
  pages={716--723},
  year={1974},
  publisher={Ieee}
}

@book{guckenheimer2013nonlinear,
  title={Nonlinear Oscillations, Dynamical Systems, and Bifurcations of Vector Fields},
  author={Guckenheimer, John and Holmes, Philip},
  year={1983},
  publisher={Springer},
  address={New York}
}

@book{tangirala2018principles,
  title={Principles of system identification: theory and practice},
  author={Tangirala, Arun K},
  year={2018},
  publisher={CRC press}
}

@article{chen1995universal,
  title={Universal approximation to nonlinear operators by neural networks with arbitrary activation functions and its application to dynamical systems},
  author={Chen, Tianping and Chen, Hong},
  journal={IEEE Transactions on Neural Networks},
  volume={6},
  number={4},
  pages={911--917},
  year={1995},
  publisher={IEEE}
}

@article{rossler_parameters,
  title={Unstable periodic orbits and templates of the {R}{\"o}ssler system: toward a systematic topological characterization},
  author={Letellier, C and Dutertre, P and Maheu, B},
  journal={Chaos},
  volume={5},
  number={1},
  pages={271--282},
  year={1995},
  publisher={American Institute of Physics}
}

@article{ai_hilbert,
  title={Evolving scientific discovery by unifying data and background knowledge with {AI} {H}ilbert},
  author={Cory-Wright, Ryan and Cornelio, Cristina and Dash, Sanjeeb and El Khadir, Bachir and Horesh, Lior},
  journal={Nature Communications},
  volume={15},
  number={1},
  pages={5922},
  year={2024},
  publisher={Nature Publishing Group UK London}
}

\end{document}